\documentclass{article}

\PassOptionsToPackage{numbers, compress}{natbib}

 \usepackage[preprint]{neurips_2026}

\usepackage{microtype}
\usepackage{graphicx}
\usepackage{subcaption}
\usepackage{booktabs} %

\usepackage{amsmath}
\usepackage{amssymb}
\usepackage{mathtools}
\usepackage{amsthm}

\usepackage[colorlinks]{hyperref}
\usepackage{url}
\usepackage{booktabs}
\usepackage{amsfonts}
\usepackage{nicefrac}
\usepackage{enumitem}
\usepackage{xcolor}
\usepackage{makecell}
\usepackage{graphicx}
\usepackage{subcaption}
\usepackage{wrapfig}
\usepackage{algorithm}
\usepackage{algorithmic}
\usepackage{tikz}
\usepackage{dsfont}
\usepackage{bbm}
\usepackage{multirow}

\usepackage[capitalize,noabbrev]{cleveref}

\usepackage{amsmath,amsfonts,bm}

\newcommand{\figleft}{{\em (Left)}}

\newcommand{\figright}{{\em (Right)}}
\newcommand{\figtop}{{\em (Top)}}
\newcommand{\figbottom}{{\em (Bottom)}}

\def\eqref#1{equation~\ref{#1}}

\def\1{\bm{1}}

\DeclareMathAlphabet{\mathsfit}{\encodingdefault}{\sfdefault}{m}{sl}
\SetMathAlphabet{\mathsfit}{bold}{\encodingdefault}{\sfdefault}{bx}{n}

\def\gA{{\mathcal{A}}}

\def\gD{{\mathcal{D}}}

\def\gG{{\mathcal{G}}}
\def\gH{{\mathcal{H}}}

\def\gL{{\mathcal{L}}}

\def\gN{{\mathcal{N}}}

\def\gS{{\mathcal{S}}}
\def\gT{{\mathcal{T}}}

\def\gX{{\mathcal{X}}}

\def\gZ{{\mathcal{Z}}}

\newcommand{\E}{\mathbb{E}}

\newcommand{\R}{\mathbb{R}}

\newcommand{\KL}{D_{\mathrm{KL}}}

\DeclareMathOperator*{\argmax}{arg\,max}
\DeclareMathOperator*{\argmin}{arg\,min}

\usetikzlibrary{automata, positioning, arrows.meta, calc}

\tikzset{
    ->, 
    shorten >=1pt,
    auto,
    node distance=1.5cm,
    state_style/.style={
        state,
        draw=black,
        thick,
        fill=white,
        minimum size=0.6cm,
        font=\sffamily
    },
    edge_style/.style={
        draw=black,
        thick,
        -{Stealth[round, scale=0.9]},
        font=\sffamily
    },
    loop_style/.style={
        min distance=0.9cm,
        looseness=4
    },
    stoch_style/.style={
        draw=black,
        thick,
        dashed,
        -{Stealth[round, scale=0.9]},
        font={\sffamily}
    },
}

\makeatletter
\newcommand{\newreptheorem}[2]{\newtheorem*{rep@#1}{\rep@title}\newenvironment{rep#1}[1]{\def\rep@title{#2 \ref*{##1}}\begin{rep@#1}}{\end{rep@#1}}}
\makeatother

\newtheorem{proposition}{Proposition}
\newreptheorem{proposition}{Proposition}
\newtheorem{lemma}{Lemma}
\newreptheorem{lemma}{Lemma}
\newtheorem{remark}{Remark}
\newreptheorem{remark}{Remark}
\newtheorem{corollary}{Corollary}
\newreptheorem{corollary}{Corollary}
\newtheorem{definition}{Definition}
\newreptheorem{definition}{Definition}

\DeclarePairedDelimiter\norm{\lVert}{\rVert}
\DeclarePairedDelimiter\abs{\lvert}{\rvert}

\definecolor{mypurple}{HTML}{7A4988}
\definecolor{myred}{HTML}{d62728}
\definecolor{mygreen}{HTML}{2ca02c}
\definecolor{myblue}{HTML}{1f77b4}
\definecolor{mygray}{HTML}{a9a9a9}
\definecolor{myorange}{HTML}{ff8c00}

\hypersetup{urlcolor=myblue, citecolor=myblue, linkcolor=myblue}

\providecommand{\ph}[2][]{{\protect\color{black}{#1#2}}} %

\usepackage[textsize=tiny]{todonotes}

\title{Can We Really Learn One Representation to Optimize All Rewards?}

\author{
    \begin{minipage}{\textwidth}
    \centering
    Chongyi Zheng\thanks{Equal contribution.}$\hspace{0.35em}^{1}$ \quad Royina Karegoudra Jayanth$^{*1}$ \quad Benjamin Eysenbach$^1$ \\
    \vspace{0.3em}
    \normalfont $^1$Princeton University \\
    \vspace{0.5em}
    \texttt{chongyiz@princeton.edu \qquad rj5498@princeton.edu}
    \end{minipage}
}

\begin{document}

\maketitle

\begin{abstract}
    As unsupervised pretraining becomes increasingly ubiquitous in reinforcement learning, a more thorough theoretical understanding of these methods becomes of equal importance to their empirical success. We focus on the setting of unsupervised learning via interaction, where the forward-backward (FB) representation learning serves as a prototypical and popular example. 
    In this paper, we shed light on FB by formally contextualizing the method within a broader class of recent methods that use regression to obtain a low-rank approximation of a successor measure ratio. Our analysis clarifies when FB representations can exist and how the low-rank approximation converges in practice. 
    Building upon the theory, we propose a variant of FB that is both more amenable to theoretical understanding and simpler to optimize in practice. 
    Experiments in didactic settings, as well as in $10$ state-based and image-based continuous control domains, demonstrate that our method converges to desired representations with $10^5 \times$ smaller errors than FB, achieving $+24\%$ improved zero-shot performance on average. We also demonstrate that zero-shot policies inferred by our algorithm provide an efficient initialization if the user prefers further fine-tuning on downstream tasks. 
    Our project website is available at \url{https://chongyi-zheng.github.io/onestep-fb}.

\end{abstract}

\vspace{-0.5em}
\section{Introduction}
\label{sec:intro}
\vspace{-0.5em}

\begin{wrapfigure}[18]{R}{0.5\textwidth}
    \vspace{-1.85em}
    \centering
    \includegraphics[width=0.99\linewidth]{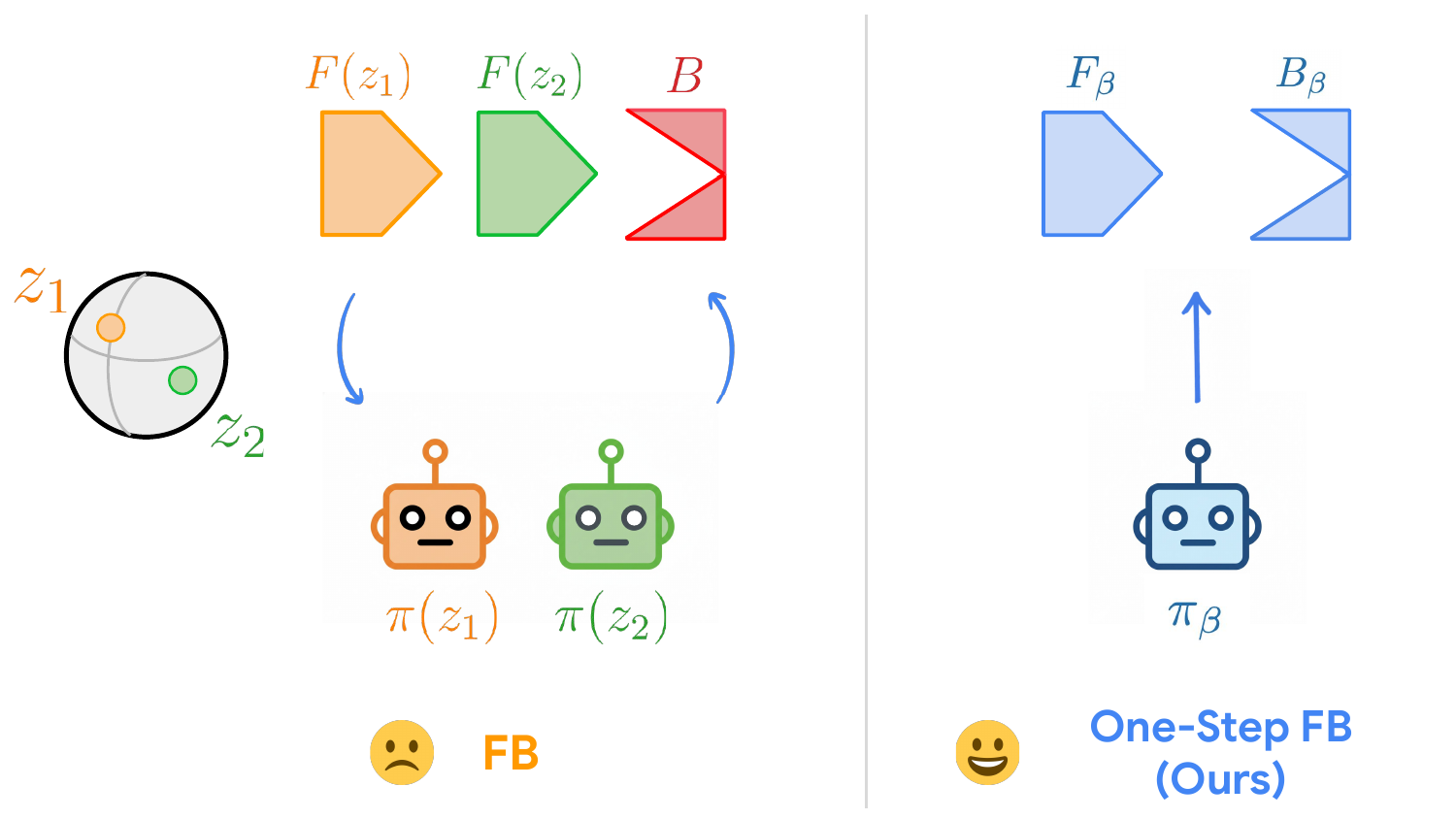}
    \caption{How can we learn a library of policies to quickly maximize new rewards? \figleft \, Forward-backward representation learning (FB)~\citep{touati2021learning} learns bilinear representations to acquire new policies. \figright \, Our theoretical analysis of this method reveals some optimization challenges, which are alleviated through a simplified method that enjoys stable convergence.}
    \label{fig:demystify-fb}
    \vspace{-1em}
\end{wrapfigure}

Large-scale pre-training has reshaped how we build learning systems in vision~\citep{radford2021learning, grill2020bootstrap, assran2023self} and language~\citep{team2023gemini, achiam2023gpt}: a foundation model is trained once on broad data, and then adapted to specific tasks with little or no updates.
In the context of RL, such models are known as behavioral foundation models (BFMs)~\citep{tirinzoni2025zeroshot, sikchi2025fast, li2025bfm}, and ideally acquire behavioral knowledge from unsupervised (reward-free) interactions and later specialize to new tasks with minimal additional learning. Similar to large language models (LLMs), this paradigm can be interpreted as in-context learning for RL: the reward in example trajectories induces a prompt, and the pre-trained BFM responds with the optimal behavior directly.

Forward-backward (FB) representation learning~\citep{touati2021learning} is a prominent attempt to realize a BFM. FB proposes to pre-train a pair of representations that can be combined with a downstream reward to obtain a reward-maximizing policy. Prior work based on FB usually contextualizes this method as learning a low-rank approximation of a successor measure ratio~\citep{touati2022does, bagatella2025td, agarwal2025proto}. To unpack this promise, we will study when such representations can exist and be learned. Broadly, this paper gets at the question:
\begin{gather*}
    \text{\emph{Can one really learn one representation to optimize all rewards in practice?}} 
\end{gather*}
In this paper, we extend the theoretical analysis of FB, aiming to provide additional insights into how and why it works in practice. First, we start by examining the assumption that the ground-truth FB representations always exist, answering the question: \emph{When do the ground-truth FB representations exist?} Answering this question helps us identify the challenge in using low-rank approximation to capture all optimal behaviors. Second, we explicitly reinterpret the FB representation objective as a regression loss, revealing insights into \emph{what the algorithm optimizes} for practitioners. This reinterpretation reveals a connection with fitted Q-evaluation (FQE)~\citep{ernst2005tree, riedmiller2005neural, fujimoto2022should}, motivating us to study the convergence of the low-rank approximation. Third, we therefore construct a new Bellman operator, which is realized by the learning procedure of the FB algorithm. Using this new Bellman operator, we identify another challenge in the convergence of the low-rank approximation. This challenge mainly comes from the circular dependency in FB: the representations and the policy depend on each other.

Building upon our new insights for FB, we propose a simpler alternative to it called \textbf{one-step FB}. 
Our algorithm breaks the circular dependency (Fig.~\ref{fig:demystify-fb}) by learning representations for a \emph{fixed} behavioral policy. In doing so, we can pre-train one step of policy improvement over all the behavioral value functions.
Through didactic experiments, we demonstrate that FB can struggle to converge, while our proposed variant converges to 
$10^5$ smaller errors. Experiments on $8$ state-based benchmark domains and $2$ image-based benchmark domains show that one-step FB is a competitive method for unsupervised pre-training, achieving $+24\%$ improved zero-shot performance on average. In addition, our method provides an efficient initialization for fine-tuning with off-the-shelf RL algorithms. Overall, our method serves as a simpler and more stable plug-and-play alternative to FB, which may appeal to RL practitioners.

\vspace{-0.5em}
\section{Preliminary}
\label{sec:preliminary}
\vspace{-0.5em}

We first define the notation and background mathematics for our analysis. A conceptual description of the prior related work can be found in Appendix~\ref{appendix:related-work}.

We consider a controlled Markov process (CMP)~\citep{bhatt1996occupation} defined by a state space $\gS$, an action space $\gA$, a probability measure of initial states $p_0 \in \Delta(\gS)$, a probability measure of environmental transitions $p: \gS \times \gA \to \Delta(\gS)$, and a discount factor $\gamma \in [0, 1)$, where $\Delta(\ph{\gX})$ denotes the set of all possible probability distributions over a space $\ph{\gX}$.  
When equipped with a reward function $r: \gS \times \gA \to \R$, the CMP becomes a Markov decision process (MDP)~\citep{sutton1998reinforcement}. With slight abuse of notation, we use \emph{probability measure} to denote either the probability mass in discrete CMPs or the probability density in continuous CMPs.

\textbf{Successor measure and Q-value.} For a CMP and a policy $\pi: \gS \to \Delta(\gA)$, the successor measure~\citep{dayan1993improving, janner2020gamma, touati2021learning, eysenbach2022contrastive, zheng2024contrastive, myers2024learning, zheng2025intention} $M^{\pi}: \gS \times \gA \to \Delta(\gS \times \gA)$ defines the probability measure of reaching a \emph{future} state-action pair $(s_f, a_f)$ starting from a current state-action pair $(s, a)$. 
Prior work~\citep{dayan1993improving, barreto2017successor, blier2021learning} has shown that the successor measure is the unique fixed point of a Bellman equation:
\begin{align}
     M^\pi(s_f, a_f \mid s, a) &= (1 - \gamma) \delta(s_f, a_f \mid s, a) + \gamma \E_{ p(s' \mid s, a), \, \pi(a' \mid s')} \left[ M^\pi(s_f, a_f \mid s', a') \right],
     \label{eq:successor-measure-bellman-eq}
\end{align}
where $\delta(\ph{s_f}, \ph{a_f} \mid s, a)$ is the delta measure\footnote{\footnotesize The delta measure is an indicator function for discrete MDPs and a Dirac delta function for continuous MDPs.} centered at the state-action pair $(s, a)$.
For a discrete CMP, the successor measure can be written as a matrix $M^\pi \in \R^{|\gS \times \gA| \times |\gS \times \gA|}$ that is full rank (i.e., $\text{rank}(M^\pi) = |\gS \times \gA|$)~\citep[Lemma 1.6 and Corollary 1.5]{agarwal2019reinforcement}; see Appendix~\ref{appendix:successor-measure-matrix}.

The successor measure can be used to express the Q-value $Q^{\pi}_r(\ph{s}, \ph{a})$ for any reward function $r$:
\begin{align}
    Q_r^{\pi}(s, a) &= \E_{(s_f, a_f) \sim M^{\pi}(s_f, a_f \mid s, a)} [r(s_f, a_f)]. 
    \label{eq:relate-sr-to-q}
\end{align}
This connection disentangles the estimation of the successor measure (pre-training) and the estimation of the Q-value (fine-tuning) into two separate phases, resembling the learning paradigm in LLMs~\citep{brown2020language, ouyang2022training}. Next, we will make this resemblance precise.

\textbf{Unsupervised pre-training in RL and zero-shot RL.}
Algorithms for unsupervised pre-training in RL typically involve two steps: \emph{(Step 1)} pre-training a set of policies and their successor measures in a CMP, and \emph{(Step 2)} performing zero-shot policy adaptation for a specific reward function.
The unsupervised pre-training mainly considers the offline setting~\citep{lange2012batch} (Sec.~\ref{sec:method}), where learning happens on a dataset of transitions $\gD = \{ (s, a, s', a') \}$ collected by some behavioral policy $\pi_{\beta}: \gS \to \Delta(\gA)$. 
We will use zero-shot RL to denote unsupervised pre-training algorithms where policy adaptation does not require updating the neural networks, \textbf{independent of acquiring optimal behaviors}. After adapting the zero-shot policy, one can further fine-tune it using off-the-shelf RL algorithms. See Appendix~\ref{appendix:url-components} for additional components in \emph{Step 1} and \emph{Step 2}.

This work aims to analyze and simplify a prior state-of-the-art pre-training method called \emph{forward-backward representation learning} (FB)~\citep{touati2021learning}. We will include an overview of this algorithm next.

\textbf{Forward-backward representation learning.} FB is an instance of the zero-shot RL algorithms~\citep{bagatella2025td, park2024foundation, zheng2025intention, agarwal2025proto}. During pre-training, FB uses forward-backward representation functions $F(\ph{s}, \ph{a}, \ph{z})$ and $B(\ph{s_f}, \ph{a_f})$ to parameterize the policy $\pi(\ph{a} \mid \ph{s}, \ph{z})$ and its successor measure $M^{\pi}(\ph{s_f}, \ph{a_f} \mid \ph{s}, \ph{a}, \ph{z})$.
Importantly, both the latent-conditioned policy and its associated successor measure depend on the forward representation, forming a circular dependency. See Appendix~\ref{appendix:def-gt-fb-reprs} for the formal definition of ground-truth forward-backward representations $F^{\star}(s, a, z)$ and $B^{\star}(s_f, a_f)$. Prior work~\citep{touati2021learning, touati2022does}~\emph{assumes} the existence of ground-truth FB representations. %
This assumption raises the question: \emph{When do the ground-truth FB representations exist?} We will answer this question using tools from linear algebra and rank matching in Sec.~\ref{subsec:fb-existence}.

After pre-training the latent-conditioned policy and its successor measure, FB finds the optimal policy for any reward function by setting the latent variable to an expectation over backward representations: $z_r = \E_{\rho(s_f, a_f)} [B^{\star}(s_f, a_f) r(s_f, a_f)]$.
See Appendix~\ref{appendix:def-gt-fb-reprs} for the formal definition. 
In the following sections, 
We will also study the convergence of the FB algorithm in practice (Sec.~\ref{subsec:fb-convergence}), motivating us to derive a simpler zero-shot RL algorithm (Sec.~\ref{sec:method}).

\textbf{Least-squares importance fitting.} The goal of probability measure ratio estimation is to predict the ratio $p(\ph{x}) / q(\ph{x})$ between two probability measures $p \in \Delta(\ph{\gX})$ and $q \in \Delta(\ph{\gX})$ over some space $\ph{\gX}$. The most widely adopted approach casts this problem as classification~\citep{qin1998inferences, cheng2004semiparametric, gutmann2010noise}. 
However, least-squares importance fitting (LSIF)~\citep{kanamori2009least, kanamori2008efficient} casts the measure ratio estimation as regression.\footnote{The same loss recurs under different names in the literature~\citep{nachum2019dualdice, kato2025riesz}.}
Specifically, LSIF fits a function $g: \gX \to \R$ to the target measure ratio $p(\ph{x}) / q(\ph{x})$ using samples.
\begin{align}
    \gL_{\text{LSIF}}(g) &= \frac{1}{2} \E_{q(x)} \big[ \big( g(x) - p(x) / q(x) \big)^2 \big] = \frac{1}{2} \E_{q(x)} \big[ g(x)^2 \big] - \E_{p(x)} \left[ g(x) \right] + \text{const.},
    \label{eq:lsif}
\end{align}
where the constant is independent of the learned ratio $g(\ph{x})$.
Compared with the more popular classification loss, this LSIF loss remains well defined when $g(\ph{x})$ is negative. We call $q(\ph{x})$ the \emph{anchor} measure and call $p(\ph{x})$ the \emph{target} measure. In Sec.~\ref{subsec:fb-repr-obj}, we will reinterpret the FB representation objective using the LSIF loss function with a special parameterization of the ratio function.

\vspace{-0.75em}
\section{Understanding Forward-Backward Representation Learning}
\label{sec:understanding-fb}
\vspace{-0.5em}

In this section, we study FB through the lens of linear algebra, LSIF, and contraction mapping, focusing on providing new analysis that extends the theory in FB.
The goal of our theoretical analysis is threefold.
\begin{enumerate}[topsep=1pt, itemsep=1pt, parsep=0pt, partopsep=0pt]
    \item[\S \ref{subsec:fb-existence}] We examine the assumption in prior work that ground-truth FB representations always exist, showing strict rank and dimensionality constraints. 
    Our analysis indicates that a low-rank approximation may induce arbitrary errors in capturing optimal behavior for some rewards.%
    
    \item[\S \ref{subsec:fb-repr-obj}] We next reinterpret the FB representation objective as a temporal-difference variant of a regression loss (Eq.~\ref{eq:lsif}), drawing a connection with FQE. This connection motivates us to study the convergence of the practical FB. %
    
    \item[\S \ref{subsec:fb-convergence}] We construct a new Bellman operator to describe the learning procedure of the practical FB. The failure of applying the Banach fixed-point theorem to this new Bellman operator results in unclear convergence of the practical FB algorithm. %
\end{enumerate}
The main challenge in analyzing and providing guarantees for FB lies in the circular dependency (Fig.~\ref{fig:demystify-fb}). We thus introduce a variant of FB that breaks the circular (Sec.~\ref{sec:method}). The resulting algorithm is not only simpler but also enjoys more stable learning and clearer convergence (Sec.~\ref{sec:experiments}).

\subsection{When Do the Ground-Truth FB Representations Exist?}
\label{subsec:fb-existence}

Prior work assumes that the ground-truth FB representations exist (Definition 1 of~\citet{touati2021learning} and Theorem 2 of~\citet{touati2022does}), raising the question: \emph{When do the ground-truth FB representations exist?} We will explicitly study this question using tools from linear algebra and rank matching (Lemma~\ref{lemma:successor-measure-expr-and-rank}). Specifically, we introduce two sets of necessary conditions for the ground-truth FB representations to hold: Proposition~\ref{prop:fb-existence} provides insights to understand the practical FB algorithm, and Proposition~\ref{prop:fb-cylinder-proj} reveals a criterion for examining its convergence.

The first set of necessary conditions starts with the rank of the ground-truth FB representations. We show that unless the representation dimension is infinitely large, the FB's zero-shot policies are not necessarily optimal for maximizing all rewards.

\begin{proposition}[Informal]
    \label{prop:fb-existence}
    Given any discrete CMP, a finite latent space $\gZ$, and a marginal measure $\rho$, any FB representation matrices $F^{\star}_\gZ \in \R^{\abs{\gZ \times \gS \times \gA} \times d}$ and $B^{\star} \in \R^{d \times \abs{\gS \times \gA}}$ that encode this CMP's successor measure (Definition~\ref{def:fb-definition} and Definition~\ref{def:fb-adaptation}) must satisfy the following properties:
    \begin{enumerate}[topsep=1pt, itemsep=1pt, parsep=0pt, partopsep=0pt]
        \item The representation dimension $d$ is at least $\abs{\gS \times \gA}$: $d \geq \abs{\gS \times \gA}$.
        \item The rank of the matrix $F^{\star}_{\gZ}$ is at least $\abs{\gS \times \gA}$ and at most $d$: $\abs{\gS \times \gA} \leq \text{rank}(F^{\star}_Z) \leq d$.
        \item The rank of the matrix $B^{\star}$ equals to $\abs{\gS \times \gA}$: $\text{rank}(B^{\star}) = \abs{\gS \times \gA}$.
        \item The matrix $B^{\star}$, components of matrix $F^{\star}_\gZ$, the successor measures must satisfy:
        \begin{align*}
            B^{\star} = F^{\star +}_{z_1} M^{\pi(\ph{a} \mid \ph{s}, z_1)} / \rho = \cdots = F^{\star +}_{z_{\abs{\gZ}}} M^{\pi(\ph{a} \mid \ph{s}, z_{\abs{\gZ}})} / \rho,
        \end{align*}
        where $\ph{X}^+$ denotes the pseudoinverse~\citep{moore1920reciprocal, Bjerhammar1951ApplicationOC} of the matrix $\ph{X}$.
    \end{enumerate}
\end{proposition}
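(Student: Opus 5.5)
The plan is to reduce everything to a rank-matching argument on the matrix identity that defines the ground-truth FB representations. In the discrete setting, Definition~\ref{def:fb-definition} says that for every latent $z \in \gZ$ the successor measure of the latent-conditioned policy factorizes as
\begin{align*}
    M^{\pi(\cdot \mid \cdot, z)} = F^{\star}_{z} B^{\star} \operatorname{diag}(\rho),
\end{align*}
where $F^{\star}_z \in \R^{\abs{\gS \times \gA} \times d}$ is the block of $F^{\star}_\gZ$ indexed by $z$. (Division by $\rho$ means right multiplication by $\operatorname{diag}(\rho)^{-1}$, which assumes $\rho$ has full support.) The one fact I would import is the rank statement in Lemma~\ref{lemma:successor-measure-expr-and-rank}: for every policy, $M^{\pi}$ is invertible, so $\operatorname{rank}(M^{\pi}) = \abs{\gS \times \gA}$. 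This holds because $M^\pi = (1-\gamma)(I - \gamma P^\pi)^{-1}$.

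\textbf{Items 1--3.} Fix any $z$. Since $\operatorname{diag}(\rho)$ is invertible,
\begin{align*}
    \abs{\gS \times \gA} = \operatorname{rank}(M^{\pi_z}) = \operatorname{rank}(F^{\star}_z B^{\star}) \le \min\{\operatorname{rank}(F^{\star}_z), \operatorname{rank}(B^{\star})\}.
\end{align*}
\begin{itemize}
    \item $B^{\star}$ has $\abs{\gS \times \gA}$ columns, so $\operatorname{rank}(B^\star) \le \abs{\gS \times \gA}$. Combined with the display, $\operatorname{rank}(B^{\star}) = \abs{\gS \times \gA}$, which is item 3.
    \item Since also $\operatorname{rank}(B^\star) \le d$, we get $d \ge \abs{\gS \times \gA}$, which is item 1.
    \item The rows of $F^{\star}_z$ are a subset of the rows of $F^{\star}_\gZ$, so $\operatorname{rank}(F^\star_\gZ) \ge \operatorname{rank}(F^\star_z) \ge \abs{\gS \times \gA}$. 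The upper bound $\operatorname{rank}(F^\star_\gZ) \le d$ holds because $F^\star_\gZ$ has $d$ columns. This is item 2.
\end{itemize}

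\textbf{Item 4.} Each $F^{\star}_z$ is $\abs{\gS\times\gA} \times d$ with rank $\abs{\gS \times \gA}$, so it has full row rank but generally not full column rank. Its pseudoinverse is therefore only a right inverse, $F^{\star}_z F^{\star +}_z = I$, and left-multiplying the factorization by $F^{\star +}_z$ recovers $B^\star$ only when $F^{\star +}_z F^{\star}_z = I_d$. This is the main obstacle: when $d > \abs{\gS\times\gA}$, $B^\star$ is not uniquely determined by $F^\star_z$ and $M^{\pi_z}$. I would handle it in two steps.

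\begin{enumerate}
    \item Note that $F^{\star +}_z F^\star_z$ is the orthogonal projector onto the row space of $F^\star_z$. Hence
    \begin{align*}
        F^{\star +}_z M^{\pi_z} \operatorname{diag}(\rho)^{-1} = F^{\star +}_z F^\star_z B^\star
    \end{align*}
    is the projection of $B^\star$ onto that row space. This part of item 4 holds unconditionally.
    \item To get equality with $B^\star$ itself, I would add the condition that $F^\star_z$ has full column rank, which forces $d = \abs{\gS\times\gA}$. I expect the informal statement intends this reading, together with the normalization implicit in Definition~\ref{def:fb-adaptation}. Under that condition $F^{\star +}_z = F^{\star -1}_z$, so $B^\star = F^{\star -1}_z M^{\pi_z}\operatorname{diag}(\rho)^{-1}$. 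Since $B^\star$ does not depend on $z$, the chain of equalities across all $z_1, \dots, z_{\abs{\gZ}}$ follows immediately.
\end{enumerate}

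When writing this up, I would make precise which reading of item 4 is proved: either equality of the projected $B^\star$, or exact equality under full column rank of each $F^\star_z$.
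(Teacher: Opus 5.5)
Your proposal is correct, and on item 4 it is more careful than the paper's own proof. Items 1--3 follow the paper's rank-matching argument. Your per-block inequality $\text{rank}(F^{\star}_z) \ge \text{rank}(F^{\star}_z B^{\star}) = \abs{\gS \times \gA}$ is the right justification for the block ranks. The paper instead infers them from $\text{rank}(F^{\star}_{\gZ}) \ge \abs{\gS \times \gA}$, which is a non sequitur, since a stacked matrix can have large rank while its blocks do not. Your column-count bound $\text{rank}(F^{\star}_{\gZ}) \le d$ also avoids the paper's limit $\abs{\gZ} \to \infty$.

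For item 4, the paper shows $F^{\star}_i F^{\star +}_i M^{\pi}_i \text{diag}(\rho)^{-1} = M^{\pi}_i \text{diag}(\rho)^{-1}$ and then concludes $B^{\star} = F^{\star +}_i M^{\pi}_i \text{diag}(\rho)^{-1}$. The first identity only says that $F^{\star +}_i M^{\pi}_i \text{diag}(\rho)^{-1}$ is \emph{one} solution of $F^{\star}_i X = M^{\pi}_i \text{diag}(\rho)^{-1}$. Equality with $B^{\star}$ needs $F^{\star +}_i F^{\star}_i B^{\star} = B^{\star}$, which is exactly the projection step you declined to take.

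Your suspicion is justified: the unconditional item 4 is false once $d > \abs{\gS \times \gA}$. Take one state and one action, so $M^{\pi} = \rho = 1$ and $Q^{\star}_r = r$. Set $d = 2$, $F^{\star}(s, a, z) = (1, 0)^{\top}$ for all $z$, and $B^{\star} = (1, c)^{\top}$ with $c \neq 0$.
\begin{itemize}
\item Definition~\ref{def:fb-definition} holds because $F^{\star}(s,a,z)^{\top} B^{\star} = 1$.
\item Definition~\ref{def:fb-adaptation} holds because $z_r = (r, cr)^{\top}$ gives $F^{\star}(s, a, z_r)^{\top} z_r = r = Q^{\star}_r$.
\item Yet $F^{\star +}_z M^{\pi}/\rho = (1, 0)^{\top} \neq B^{\star}$.
\end{itemize}
So your projected identity is what holds in general.

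One correction: drop the appeal to a ``normalization implicit in Definition~\ref{def:fb-adaptation}'', because there is none. For any invertible $G \in \mathbb{R}^{d \times d}$, the pair $F'(s, a, z) = G^{-\top} F^{\star}(s, a, G^{-1} z)$, $B' = G B^{\star}$ again satisfies both definitions. The policy at $z$ is the old policy at $G^{-1} z$, and $z'_r = G z_r$. The example above is a shear of the aligned pair with $c = 0$.

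The exact condition for item 4 is $\text{col}(B^{\star}) \subseteq \text{row}(F^{\star}_z)$ for every $z$. This is weaker than full column rank and holds automatically when $d = \abs{\gS \times \gA}$.
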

See Appendix~\ref{appendix:fb-repr-existence} for the complete discussion and a proof. While our rank conditions are derived with finite states and actions, we extend them to continous CMPs and identify an irreducible error in the low-rank approximation used in the practical FB. We discuss two important implications from our Proposition~\ref{prop:fb-existence} next.

First, our rank analysis implies some challenges in using neural networks to express the FB representations. In machine learning, one usually invokes the Universal Function Approximation Theorem~\citep{hornik1989multilayer} to argue that they can perfectly express a \emph{finite-dimensional} function of interest with large enough neural networks. However, our Proposition~\ref{prop:fb-existence} suggests that when the states and actions are continuous, both the ground-truth $F^\star$ and $B^{\star}$ lie in an \emph{infinite-dimensional} Hilbert space~\citep{berlinet2011reproducing}. In this case, arbitrarily expressive neural networks still cannot represent the desired representations.

\begin{corollary}[Learnability]
    \label{corollary:finite-d-is-insufficient}
    For continuous CMPs with $\abs{\gS \times \gA} \to \infty$, the inner product $\langle F^{\star}(s, a, z), B^{\star}(s, a) \rangle$ lies in an infinite-dimensional Hilbert space $\gH$, which contains the latent space $\gZ \subset \gH$. Thus, neural networks are not able to express the inner product.
\end{corollary}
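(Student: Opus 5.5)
The plan is to obtain the corollary from Proposition~\ref{prop:fb-existence} by discretizing the continuous CMP and letting the discretization become arbitrarily fine. Fix a sequence of finite partitions of $\gS \times \gA$ with $N_k = \abs{\gS_k \times \gA_k} \to \infty$, and a finite set of latents $\gZ_k$. Restrict (or project) the ground-truth pair $F^{\star}, B^{\star}$ onto each partition. This produces matrices $F^{\star}_{\gZ_k}$ and $B^{\star}_k$ that encode the successor measures of the induced discrete CMP. The successor measure matrix of each discretized CMP is full rank, by the discrete statement recalled in the preliminaries (Lemma 1.6 of \citet{agarwal2019reinforcement}). Items 1 and 3 of Proposition~\ref{prop:fb-existence} then force $\text{rank}(B^{\star}_k) = N_k$, and hence the representation dimension satisfies $d \ge N_k$ for every $k$. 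Letting $k \to \infty$ shows that no finite $d$ works. The functions $B^{\star}(\cdot)$ must therefore span an infinite-dimensional subspace, which we complete into a Hilbert space $\gH$, for instance $L^2(\rho)$ or an RKHS as in \citet{berlinet2011reproducing}. By item 2, $\text{rank}(F^{\star}_{\gZ_k}) \ge N_k$ as well, so the forward features live in the same infinite-dimensional $\gH$. The inner product $\langle F^{\star}, B^{\star}\rangle$ is then a pairing in $\gH$, not in $\R^d$.

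Second, I would identify the latent space with a subset of $\gH$. FB adaptation sets $z_r = \E_{\rho}[B^{\star} r]$, so every $z$ is a $\rho$-weighted combination of backward features. Because these features span an infinite-dimensional space, the map $r \mapsto z_r$ lands in $\gH$, and its image over all rewards is infinite-dimensional. This gives $\gZ \subset \gH$.

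Third, I would conclude non-expressibility. A neural network with finite output width $d$ can only produce $\langle F_\theta(s,a,z), B_\theta(s_f,a_f)\rangle_{\R^d}$. Viewed as a kernel $(s,a,z;s_f,a_f) \mapsto \cdot$, this has rank at most $d$. Restricting it to the $k$-th discretization gives a matrix of rank at most $d < N_k$ once $k$ is large. That contradicts item 3 applied to any exact encoding. The universal approximation theorem only guarantees arbitrary closeness in sup-norm on compacts for a fixed finite output dimension, so it cannot overcome this rank bound.

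The main obstacle is making the discretization limit rigorous. In the continuous case $\delta$ is a Dirac, so the successor "matrix" is an operator, and one must check that its full-rankness survives restriction to the partitions. I would handle this by working with cell-averaged successor measures. I would then argue that the discretized kernel equals $(1-\gamma)I + \gamma P^{\pi}M^{\pi}$ up to averaging, and that its restriction is invertible because $\norm{\gamma P^{\pi}} < 1$ on the averaged space, which gives a Neumann-series argument. I would state the corollary's conclusion in this rank-growth sense rather than as an exact statement about measures.
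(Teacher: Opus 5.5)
Your outline matches the paper's own justification: Proposition~\ref{prop:fb-existence} gives $d \geq \abs{\gS \times \gA}$, and one then lets $\abs{\gS \times \gA} \to \infty$. The paper states this without further proof. The problem is the step you add to make that limit rigorous, and it fails.

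\textbf{Where the discretization breaks.} The matrix that inherits rank $\leq d$ from $F^{\star\top} B^{\star}$ is the cell average $A_{ij} = \E_{x \sim \rho(\cdot \mid C_i)}[M^{\pi}(C_j \mid x)]$, up to the diagonal factor $\rho(C_j)$. Averaging the Bellman equation gives $A = (1-\gamma) I + \gamma E$, where $E_{ij} = \E_{x \sim \rho(\cdot \mid C_i)} \E_{x' \sim P^{\pi}(\cdot \mid x)}[M^{\pi}(C_j \mid x')]$ is row-stochastic. However, $E \neq \bar{P} A$ for the averaged kernel $\bar{P}_{ij} = \E_{x \sim \rho(\cdot \mid C_i)}[P^{\pi}(C_j \mid x)]$, because averaging does not commute with $(I - \gamma P^{\pi})^{-1}$ unless the partition is exactly lumpable. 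So no Neumann series is available. In fact $A$ is singular whenever $E$ has eigenvalue $-(1-\gamma)/\gamma$, which is possible once $\gamma > 1/2$.

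\textbf{Counterexample.} Take $x_1, x_2 \in C_1$ and $x_3, x_4 \in C_2$, with deterministic moves $x_1 \to x_3$ and $x_4 \to x_2$, and with $x_2, x_3$ absorbing. Let $\rho(\cdot \mid C_1)$ put mass $w$ on $x_1$ and $\rho(\cdot \mid C_2)$ put mass $v$ on $x_4$; smearing the atoms into intervals gives a continuous version. Then $A = \bigl(\begin{smallmatrix} 1 - w\gamma & w\gamma \\ v\gamma & 1 - v\gamma \end{smallmatrix}\bigr)$ and $\det A = 1 - (w+v)\gamma$. This vanishes at $\gamma = 0.9$, $w = v = 5/9$, even though the underlying $4 \times 4$ successor matrix is invertible.

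Conversely, the matrix you can prove invertible, $(1-\gamma)(I - \gamma \bar{P})^{-1}$, is not an average of $F^{\star\top} B^{\star}$ and carries no rank bound. For the same reason you cannot apply Proposition~\ref{prop:fb-existence} to ``the induced discrete CMP.'' The aggregated process is not Markov, and the restricted representations need not satisfy Definitions~\ref{def:fb-definition} and~\ref{def:fb-adaptation}.

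\textbf{A repair that avoids partitions.} Suppose $M^{\pi}(dy \mid x) = F(x)^{\top} B(y) \rho(dy)$ with $F, B$ valued in $\R^d$ and $B$ integrable. Then for every bounded $f$, $M^{\pi} f = F(\cdot)^{\top} \E_{\rho}[B f]$ lies in the span of the $d$ coordinates of $F$. Yet $M^{\pi} = (1-\gamma) \sum_{t \geq 0} \gamma^t (P^{\pi})^t$ is a bijection on bounded measurable functions; this is where the sup-norm argument of Lemma~\ref{lemma:successor-measure-expr-and-rank} belongs. Its range is therefore infinite-dimensional when $\gS \times \gA$ is infinite, which forces $d = \infty$. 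On a countable space you can equivalently restrict to finitely many atoms. Every principal submatrix of $(I - \gamma P^{\pi})^{-1}$ is invertible by the Schur complement, since the complementary block $I - \gamma P^{\pi}_{I^c I^c}$ is Neumann-invertible.

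\textbf{Remaining caveats.} For non-atomic $\rho$, the $(1-\gamma)\delta$ term gives $M^{\pi}(\{x\} \mid x) \geq 1-\gamma > 0$. So no $\rho$-density, and hence no exact encoding, exists for any $d$. In genuinely continuous CMPs the statement is therefore vacuous as an exactness claim. A rank-growth version for cell averages needs a real argument, e.g.\ compactness of $P^{\pi} M^{\pi}$ on $L^2(\rho)$ plus a Galerkin approximation. Finally, your last step excludes only finite-width two-tower networks. A network that outputs the scalar directly has no rank constraint, which is why the paper additionally points to $z$ itself living in $\gH$.
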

The challenge not only stems from fitting the inner product (one can fit the inner product as a kernel function~\citep{berlinet2011reproducing, lu2019deeponet}), but also stems from the infinite-dimensional latent variable $z \in \gH$ in the input of the forward representations. As mentioned in prior work~\citep{touati2021learning, touati2022does}, using a finite representation dimension results in a low-rank approximation of the ground-truth FB. 

Second, when learning a low-rank approximation for the desired ratio (Eq.~\ref{eq:fb-sm-ratio}) as in the practical FB, our rank analysis suggests that we can incur arbitrary errors on the optimal Q-value predictions.
\begin{corollary}[Arbitrary Errors; Informal]
    \label{corollary:low-rank-arbitrary-error}
    For the low-rank FB representations ($d < \abs{\gS \times \gA}$) learned by the practical FB algorithm, there exist some reward functions such that errors in the optimal Q-value prediction are arbitrarily large.
\end{corollary}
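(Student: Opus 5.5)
The plan is to exploit the rank deficiency directly. For a fixed latent $z$, the practical FB predicts the successor measure ratio as $F(s,a,z)^\top B(s_f,a_f)$, i.e.\ the $|\gS\times\gA|\times|\gS\times\gA|$ matrix $F_z B$. This matrix has rank at most $d<|\gS\times\gA|$. The corresponding predicted Q-vector for a reward $r$ is therefore $\hat Q_r = F_z B\,\mathrm{diag}(\rho)\, r$. By Lemma~\ref{lemma:successor-measure-expr-and-rank}, the true successor measure matrix $M^{\pi_z}$ is full rank, so the true Q-vector is $Q_r = M^{\pi_z} r$.

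\textbf{Step 1.} Since $B\,\mathrm{diag}(\rho)\in\R^{d\times|\gS\times\gA|}$ has $d<|\gS\times\gA|$, its null space is nontrivial. Pick a nonzero $r_0$ in it. I would assume $\rho$ has full support, which the paper implicitly requires for the ratio to be defined. Then $\hat Q_{r_0}=F_z B\,\mathrm{diag}(\rho)\,r_0 = 0$ for every $z$. In particular this holds for the latent $z_{r_0}=\E_\rho[B r_0]=B\,\mathrm{diag}(\rho)\,r_0=0$ actually selected by FB adaptation.

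\textbf{Step 2.} The true optimal values are nonzero. Let $\pi^\star$ be optimal for $r_0$. Then $Q^\star_{r_0}\ge Q^{\pi}_{r_0}$ for all $\pi$, and $Q^{\pi}_{r_0}=M^{\pi}r_0\neq 0$ for every $\pi$ by full rank. I would show $Q^\star_{r_0}\neq0$ directly. If $Q^\star_{r_0}$ were $0$, then $M^{\pi^\star}r_0=0$, contradicting invertibility of $M^{\pi^\star}$.

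\textbf{Step 3.} Scaling gives the arbitrary error. For $c>0$, the reward $c\,r_0$ has optimal Q-vector $cQ^\star_{r_0}$, while FB still predicts $0$ (and still uses $z=0$). Hence
\[
\norm{\hat Q_{c r_0}-Q^\star_{c r_0}}_\infty = c\,\norm{Q^\star_{r_0}}_\infty \to\infty \quad\text{as } c\to\infty.
\]
To avoid the objection that this is merely scaling, I would also normalize. Among rewards with $\norm{r}_\infty\le 1$, the error is bounded below by a positive constant, and the relative error is $100\%$. Taking $r_0$ orthogonal to the row space yields a lower bound of order $1/(1-\gamma)$ times the smallest singular value.

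\textbf{Main obstacle.} The difficulty is making "arbitrarily large" meaningful rather than trivial under scaling, and handling the dependence on the latent $z$. I would resolve the latent issue by noting that the adaptation rule maps $r_0$ to $z=0$. Then it suffices that the Q-prediction is $F_0 B\,\mathrm{diag}(\rho)\,r_0=0$ regardless of $F_0$. I would state the corollary as the existence of a reward direction on which the error grows linearly in the reward scale, with a nonzero slope $\norm{Q^\star_{r_0}}_\infty$ guaranteed by full rank.
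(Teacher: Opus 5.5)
Your proposal is correct and follows the paper's route. A nonzero $r_0$ in the null space of $B\,\mathrm{diag}(\rho)$, which exists since $d<\abs{\gS\times\gA}$, gives $z_{r_0}=0$ and hence the prediction $F_{z_{r_0}}z_{r_0}=0$; scaling $r_0$ by $c>0$ keeps the latent at zero while $Q^\star_{c r_0}=cQ^\star_{r_0}$, so the error grows without bound. Your Step 2, which uses invertibility of $M^{\pi^\star}$ to force $Q^\star_{r_0}\neq 0$, makes rigorous what the paper only asserts as ``not necessarily zero.''

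One correction to your optional normalization remark: under the paper's convention $M^{\pi}=(1-\gamma)(I-\gamma P^{\pi})^{-1}$, the clean bound is $\norm{Q^\star_{r_0}}_\infty\ge\frac{1-\gamma}{1+\gamma}\norm{r_0}_\infty$, not something of order $1/(1-\gamma)$. That remark is not needed for the corollary.
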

See Appendix~\ref{appendix:low-rank-arbitrary-err} for the complete statement and the proof. Unlike the function approximation errors in neural networks~\citep{hornik1989multilayer}, these arbitrary errors in the optimal Q-value prediction are irreducible. Therefore, the corresponding zero-shot policy may not be optimal for maximizing the reward.

Our second set of necessary conditions studies the ground-truth forward representations under reward transformations.
Recall that the Q-value is equivariant to an arbitrary positive affine transformation on the reward~\citep{russell1995modern, ng1999policy} (See Lemma~\ref{lemma:q-equivariant}). 
We translate this into an invariance property that the ground-truth forward representations must satisfy:
\begin{proposition}[Informal]
    \label{prop:fb-cylinder-proj}
    For a scalar $\nu > 0$, and an offset $\xi \in \R$, the ground-truth forward representations $F^{\star}$ are invariant under affine transformations on the reward, i.e., $F^{\star}(s, a, z_{\nu r  + \xi}) = F^{\star}(s, a, z_r)$.
\end{proposition}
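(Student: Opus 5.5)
The plan is to combine the defining properties of the ground-truth FB representations (Definition~\ref{def:fb-definition} and Definition~\ref{def:fb-adaptation}) with the reward equivariance of Q-values (Lemma~\ref{lemma:q-equivariant}). The invariance of $F^{\star}$ should then follow from the invariance of the induced policy. I will work in the discrete setting with matrices, as in Proposition~\ref{prop:fb-existence}.

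The first step computes the latent of the transformed reward. Linearity of $z_r = \E_{\rho(s_f, a_f)}[B^{\star}(s_f, a_f) r(s_f, a_f)]$ gives
\begin{align*}
z_{\nu r + \xi} = \nu z_r + \xi \bar{b}, \qquad \bar{b} := \E_{\rho(s_f, a_f)}[B^{\star}(s_f, a_f)].
\end{align*}

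The second step shows that the two latents induce the same policy. By the definition of ground-truth FB, $F^{\star}(s, a, z_r)^\top z_r = Q_r^{\pi_{z_r}}(s,a) = Q^\star_r(s,a)$, and $\pi(\cdot \mid s, z_r)$ is greedy with respect to this quantity. Lemma~\ref{lemma:q-equivariant} gives
\begin{align*}
Q^\star_{\nu r+\xi} = \nu Q^\star_r + \xi/(1-\gamma),
\end{align*}
where the constant is $\xi/(1-\gamma)$ under the normalization of Eq.~\ref{eq:relate-sr-to-q}, since $M^\pi$ is a probability measure. Because $\nu > 0$, the argmax over actions is unchanged. Hence, with a fixed deterministic tie-breaking rule, $\pi(\cdot \mid s, z_{\nu r+\xi}) = \pi(\cdot \mid s, z_r)$. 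Equal policies give equal successor measures, so $M^{\pi(\cdot\mid\cdot, z_{\nu r+\xi})} = M^{\pi(\cdot\mid\cdot, z_r)}$.

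The third step transfers equality of successor measures to equality of forward representations. By definition, $F^{\star}_{z}{}^\top$-type products satisfy $F^{\star}(\cdot, z) B^{\star} = M^{\pi_z}/\rho$ for each $z$. Therefore
\begin{align*}
\bigl(F^{\star}(\cdot, z_{\nu r+\xi}) - F^{\star}(\cdot, z_r)\bigr) B^{\star} = 0.
\end{align*}
By Proposition~\ref{prop:fb-existence}, $\mathrm{rank}(B^{\star}) = \abs{\gS\times\gA}$. If $d = \abs{\gS \times \gA}$, then $B^{\star}$ is invertible and the difference vanishes, giving $F^{\star}(s,a,z_{\nu r+\xi}) = F^{\star}(s,a,z_r)$.

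The main obstacle I expect is the case $d > \abs{\gS\times\gA}$. There the forward representation is pinned down only modulo the left null space of $B^{\star}$. I would handle it by invoking item 4 of Proposition~\ref{prop:fb-existence}, which represents $F^{\star}$ canonically via pseudoinverses, $F^{\star}_z = (M^{\pi_z}/\rho)\, B^{\star +}$. This canonical form is a function of the policy alone, so equality of policies directly yields equality of $F^{\star}$; I would state this as the precise (formal) version of the claim.

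A secondary subtlety is ties in the argmax. I would treat them by assuming, as the FB definition implicitly does, that the policy is a deterministic function of the greedy set, so that identical greedy sets give identical policies.
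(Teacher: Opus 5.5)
Your argument is correct when $d = \abs{\gS \times \gA}$, and it takes a genuinely different route from the paper.

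\textbf{The paper's route.} The paper never compares policies or successor measures. It writes $z_{\nu r+\xi} = \nu z_r + \xi z_{\text{one}}$ and shows $F^{\star}(s,a,z)^{\top} z_{\text{one}} = 1$ for every $z$, because $M^{\pi}$ integrates to one. It then expands $Q^{\star}_{\nu r+\xi}(s,a) = F^{\star}(s,a,z_{\nu r+\xi})^{\top} z_{\nu r+\xi} = \nu F^{\star}(s,a,z_{\nu r+\xi})^{\top} z_r + \xi$ and matches this with Lemma~\ref{lemma:q-equivariant} to obtain $F^{\star}(s,a,z_{\nu r+\xi})^{\top} z_r = F^{\star}(s,a,z_r)^{\top} z_r$. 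From this it concludes equality of the vectors. That route is dimension-free and needs neither a tie-breaking convention nor the rank results. However, a scalar identity against $z_r$, plus the automatic agreement against $z_{\text{one}}$, only pins down the component of $F^{\star}(s,a,\cdot)$ in $\mathrm{span}\{z_r, z_{\text{one}}\}$. The last step to full vector equality is asserted rather than derived.

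\textbf{Your route.} Your chain runs: identical greedy sets, hence identical policies, hence identical successor measures, hence $(F^{\star}_{z_{\nu r+\xi}} - F^{\star}_{z_r}) B^{\star} = 0$, hence equality because $B^{\star}$ is square of rank $\abs{\gS\times\gA}$. This is what actually yields the vector identity. The price is using items 1 and 3 of Proposition~\ref{prop:fb-existence} and a tie-breaking rule. Your first step, the formula for $z_{\nu r+\xi}$, is never used; Definition~\ref{def:fb-adaptation} applied to the reward $\nu r + \xi$ is all you need.

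\textbf{Two corrections.} First, under Eq.~\ref{eq:relate-sr-to-q} the successor measure is a probability measure, so the offset is $\xi$, not $\xi/(1-\gamma)$, exactly as in Lemma~\ref{lemma:q-equivariant}. This is harmless for you, since only $\nu > 0$ matters for the argmax.

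Second, item 4 of Proposition~\ref{prop:fb-existence} expresses $B^{\star}$ through $F_i^{\star +}$. It does not give $F^{\star}_z = M^{\pi_z}\, \text{diag}(\rho)^{-1} B^{\star +}$, and nothing in the definitions forces that form when $d > \abs{\gS \times \gA}$. It is a normalization you must impose explicitly, and it is a legitimate one:
\begin{itemize}
\item $B^{\star}$ has full column rank, so $B^{\star +} B^{\star} = I$ and this representative reproduces the ratio.
\item It equals $F^{\star}_z B^{\star} B^{\star +}$, i.e., any ground-truth $F^{\star}_z$ with its rows projected onto $\mathrm{col}(B^{\star})$.
\item It induces the same greedy policy at every $z \in \mathrm{col}(B^{\star})$, a set that contains every $z_r$.
\end{itemize}
The normalization is also unavoidable. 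Change $F^{\star}$ only at the latent $z_{\nu r+\xi}$ by adding a nonzero $g(s,a)$ with $g(s,a)^{\top} B^{\star} = 0$. This preserves both Definition~\ref{def:fb-definition} and Definition~\ref{def:fb-adaptation}, since $g(s,a)^{\top} z_{\nu r+\xi} = 0$. Yet it breaks the invariance whenever $\nu r + \xi \neq r$, because then $z_{\nu r+\xi} \neq z_r$ by the full column rank of $B^{\star}$. So for $d > \abs{\gS \times \gA}$ the literal statement fails for an arbitrary ground-truth $F^{\star}$, and your restricted version is the right formal claim.
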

See Appendix~\ref{appendix:equivariant-to-affine-r} for further discussions and the proof. This proposition underscores a \emph{necessary} condition: any $F$ that failed to satisfy Proposition~\ref{prop:fb-cylinder-proj} must not equal $F^{\star}$ (contrapositive). We will use this proposition as the criterion for examining whether the practical FB converges to the ground-truth representations (Sec.~\ref{subsec:fb-didactic-exp}). In the next section, we reinterpret the representation objective in FB. Our understanding provides insights to simplify the algorithm in Sec.~\ref{sec:method}.

\vspace{-0.5em}
\subsection{What Does the FB Representation Objective Minimize?} 
\label{subsec:fb-repr-obj}

We now reinterpret the representation objective used in FB.
The main idea is to derive a temporal-difference (TD) variant of the LSIF loss (Eq.~\ref{eq:lsif}) that minimizes a Bellman error similar to FQE. This TD LSIF loss ends up being equivalent to the representation loss used in~\citet{touati2021learning}.

In the context of LSIF, FB chooses to set the ratio function in Eq.~\ref{eq:lsif} as an inner product: $g(s, a, z, s_f, a_f) \triangleq F(s, a, z)^{\top} B(s_f, a_f)$, set the target measure to $p(s, a, z, s_f, a_f) \triangleq M^{\pi}(s_f, a_f \mid s, a, z)$, and set the anchor measure to $q(s, a, z, s_f, a_f) \triangleq \rho(s_f, a_f)$, resulting in the following loss: 
\begin{align}
    \gL_{\text{MC FB}}(F, B) = \frac{1}{2} \E_{\substack{p^{\pi_{\beta}}(s, a), p_{\gZ}(z), \\ \rho(s_f, a_f)}} \left[ \big( F(s, a, z)^{\top} B(s_f, a_f) - \frac{M^{\pi}(s_f, a_f \mid s, a, z)}{\rho(s_f, a_f)} \big)^2 \right].
    \label{eq:mc-fb}
\end{align}
We call this loss the Monte Carlo (MC) forward-backward representation loss $\gL_{\text{MC FB}}$ because, as mentioned in Sec.~\ref{sec:preliminary}, computing it requires \emph{on-policy} samples from the successor measure $M^{\pi}$.

We next derive a temporal-difference version of this same loss.
First, we replace the successor measure in $\gL_{\text{MC FB}}$ using the recursive Bellman equation in Eq.~\ref{eq:successor-measure-bellman-eq}. Second, we use target FB representation functions $\bar{F}$ and $\bar{B}$ to replace the ground-truth ratio at the next time step, akin to the target networks used in deep Q-learning~\citep{mnih2015human}. The resulting loss function minimizes a Bellman error\footnote{A similar formulation has been mentioned in prior work (See Appendix B of~\citet{touati2022does}), but from the perspective of minimizing a matrix norm. }: 
\begin{align}
     \gL_{\text{TD FB}}(F, B) &= \frac{1}{2} \E_{\substack{p^{\pi_{\beta}}(s, a), \, \rho(s_f, a_f), \\ p_{\gZ}(z), \, p(s' \mid s, a), \, \pi(a' \mid s', z) }} \left[ \left( F(s, a, z)^{\top} B(s_f, a_f) - y \right)^2 \right], \label{eq:td-fb-brm} \\
     y &= (1 - \gamma) \delta(s_f, a_f \mid s, a) / \rho(s_f, a_f) + \gamma \bar{F}(s', a', z)^{\top} \bar{B}(s_f, a_f). \nonumber
\end{align}
See Appendix~\ref{appendix:FB-repr-loss} for the complete derivation. 
We call this loss the temporal-difference (TD) forward-backward representation loss $\gL_{\text{TD FB}}$. Like FQE, the TD FB loss can be computed using transition samples and target networks. Unlike FQE, this loss function estimates the successor measure ratio instead of the Q-value. Rearranging terms in $\gL_{\text{TD FB}}$, we recover the original FB representation objective\footnote{We recover the FB representation objective up to a constant scaling factor $1 - \gamma$.}: 
\begin{align}
    \gL_\text{TD FB}(F, B) &= \frac{1}{2} \E_{ \substack{p^{\pi_{\beta}}(s, a), \, \rho(s_f, a_f), \\ p_{\gZ}(z), \, p(s' \mid s, a), \, \pi(a' \mid s', z) } } \big[ \big( F(s, a, z)^{\top} B(s_f, a_f) - \gamma \bar{F}(s', a', z)^{\top} \bar{B}(s_f, a_f) \big)^2 \big] \nonumber \\
    &\hspace{1.1em} - (1 - \gamma) \E_{p^{\pi_{\beta}}(s, a), \, p_{\gZ}(z)} \big[ F(s, a, z)^{\top} B(s, a) \big].
    \label{eq:td-fb}
\end{align}
See Appendix~\ref{appendix:FB-repr-loss} for the derivation. Importantly, we can now interpret the representation objective in FB as
performing approximate value iteration, which has a clear connection with the standard Bellman operator and the Banach fixed-point theorem~\citep{banach1922operations}. These theoretical connections motivate us to study whether FB admits a similar convergence guarantee in practice.

\subsection{Does the Practical FB Algorithm Admit a Stable Convergence?}
\label{subsec:fb-convergence}

Our analysis proceeds in two steps. 
\emph{First}, the resemblance between FB (Eq.~\ref{eq:td-fb}) and FQE motivates us to define a new Bellman operator, called the FB Bellman operator. Similar to the relationship between the standard Bellman operator and FQE, we interpret the FB algorithm as iteratively applying the FB Bellman operator using samples from the dataset. \emph{Second}, we use the Banach fixed-point theorem~\citep{banach1922operations} to analyze the asymptotic fixed point of the FB Bellman operator, studying whether FB admits approximate convergence. 

Similar to Sec.~\ref{subsec:fb-existence}, we consider discrete CMPs with a finite number of states and actions. We also assume the transition measure $p(\ph{s'} \mid \ph{s}, \ph{a})$ and the marginal probability measure $\rho(\ph{s_f}, \ph{a_f})$ are known. Under this setup, we can define a new \emph{FB Bellman operator}.
\begin{definition}
For any two functions $f: \gS \times \gA \times \gZ \to \gZ$ and $b: \gS \times \gA \to \gZ$ that induce the latent-conditioned policy $\pi(a \mid s, z) = \delta \left( a \mid \argmax_{a \in \gA} f(s, a, z)^{\top} z \right)$, the FB Bellman operator $\gT_{\text{FB}}$ applies to the inner product of $f(s, a, z)$ and $b(s_f, a_f)$:
\begin{align*}
    & \gT_{\text{FB}} \left( f(s, a, z)^{\top} b(s_f, a_f) \right) \triangleq  (1 - \gamma) \frac{\delta(s_f, a_f \mid s, a)}{\rho(s_f, a_f)} + \gamma \E_{p(s' \mid s, a), \pi(a' \mid s', z)} \left[ f(s', a', z)^{\top} b(s_f, a_f) \right].
\end{align*}
\end{definition}
As discussed in Sec.~\ref{subsec:fb-repr-obj}, FB's representation objective can be viewed as minimizing a Bellman error (Eq.~\ref{eq:td-fb-brm}). Comparing the functional form of Eq.~\ref{eq:td-fb-brm} to the FB Bellman operator, we can draw a key observation between FB and the FB Bellman operator: the TD FB loss (Eq.~\ref{eq:td-fb}) in FB is iteratively applying the FB Bellman operator $\gT_{\text{FB}}$ to the FB representations from the previous iteration, resembling the bridge between FQE and the standard Bellman operator~\citep{mnih2015human, lillicrap2015continuous}.

The standard Bellman operator is a $\gamma$-contraction and admits a unique fixed point~\citep{banach1922operations}. Unfortunately, the FB Bellman operator is \emph{not} a $\gamma$-contraction because of the circular dependency between the latent-conditioned policy and its associated successor measure (See Definition~\ref{def:fb-definition} and Fig.~\ref{fig:demystify-fb}).
\begin{proposition}[Informal]
    \label{prop:fb-gamma-contraction}
    The FB Bellman operator $\gT_{\text{FB}}$ is not a $\gamma$-contraction. Thus, the Banach fixed-point theorem is not applicable to the FB Bellman operator.
\end{proposition}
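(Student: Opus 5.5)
We plan to prove the statement with an explicit counterexample: a small discrete CMP and two pairs of representations $(f_1, b_1)$, $(f_2, b_2)$ for which
\begin{align*}
\norm{\gT_{\text{FB}}(f_1^\top b_1) - \gT_{\text{FB}}(f_2^\top b_2)}_\infty > \gamma \norm{f_1^\top b_1 - f_2^\top b_2}_\infty .
\end{align*}
We use the sup-norm over $(s, a, z, s_f, a_f)$ because it is the natural analogue of the norm in which the standard Bellman operator is a $\gamma$-contraction.

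The first step is to expand the difference. The term $(1-\gamma)\delta/\rho$ is the same for both inputs, so it cancels. What remains is
\begin{align*}
\gamma \E_{p(s' \mid s, a)} \big[ f_1(s', \pi_1(s', z), z)^\top b_1(s_f, a_f) - f_2(s', \pi_2(s', z), z)^\top b_2(s_f, a_f) \big].
\end{align*}
Here $\pi_i(s', z) = \argmax_{a'} f_i(s', a', z)^\top z$. In the standard Bellman operator the policy is fixed, or else uses a max over the same function being evaluated, and that is why the bracket is bounded by the input difference.

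In $\gT_{\text{FB}}$ the policy is selected using the score $f(s', a', z)^\top z$, but the quantity being evaluated is $f(s', a', z)^\top b(s_f, a_f)$. These are different functionals of $f$, so the max-is-nonexpansive argument breaks.

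The construction exploits this mismatch. Take a single state $s$ that transitions to itself, two actions $a_1, a_2$, one latent $z$, and $d = 2$. Choose $f_1$ and $f_2$ that are nearly equal, so the inner products with $b$ differ by at most $\epsilon$. Arrange, however, that the ranking of $f(s, a, z)^\top z$ flips between $a_1$ and $a_2$, so $\pi_1 \neq \pi_2$.

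Choose $b_1 = b_2 = b$ so that $f(s, a_1, z)^\top b$ and $f(s, a_2, z)^\top b$ differ by a constant $c \gg \epsilon$. This is possible because $z$ and $b(s_f, a_f)$ can point in different directions in $\R^2$. For example, let $z = e_1$ and $b = e_2$. Set $f(s, a_1, z) = (\pm\epsilon/2, 0)$, with the sign flipping between $f_1$ and $f_2$, and set $f(s, a_2, z) = (0, c)$.

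Then the input difference is at most $\epsilon$. The output difference, however, is about $\gamma c$, because under one input the argmax picks $a_1$ and under the other it picks $a_2$. Since the ratio $c/\epsilon$ is unbounded, the operator is not even Lipschitz, let alone a $\gamma$-contraction.

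We then conclude that Banach's fixed-point theorem does not apply. Its hypothesis requires a contraction in some fixed norm. The same construction breaks any $\ell_p$ norm as well, since all norms on the finite-dimensional space are equivalent, so we will remark that the failure is not an artifact of choosing the sup-norm.

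The main obstacle is making the argmax flip while keeping the input inner products close, and doing so consistently with the operator's domain. The input to $\gT_{\text{FB}}$ is the inner product $f^\top b$, yet the policy depends on $f$ itself. We must therefore either treat the operator as acting on the pair $(f, b)$, or show that the same inner product can arise from different $f$ and therefore different policies. The second reading already shows $\gT_{\text{FB}}$ is ill-defined as a map on inner products. We will choose whichever reading matches the formal Definition in the appendix, and if needed handle ties in the argmax with a fixed tie-breaking rule.
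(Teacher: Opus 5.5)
Your proposal is correct and is essentially the paper's argument. The paper sets $f_2 = Q_{\text{rot}} f_1$, $b_2 = Q_{\text{rot}} b_1$, so the inner products coincide while the greedy policies (scored by $f^\top z$ with $z$ untransformed) can differ; your example is exactly this with the orthonormal reflection $Q_{\text{rot}} = \mathrm{diag}(-1,1)$, so your input difference is exactly $0$, not merely at most $\epsilon$. Your explicit one-state CMP makes concrete the policy flip that the paper only asserts. To cover $L^p$ norms with $p<\infty$ over the continuum $\gZ=\R^2$ rather than a single latent, take $f_1, f_2$ independent of $z$: the flip then holds on the open cone $\{z : |c z_2| < \epsilon z_1/2\}$, which has positive measure, instead of only at the point $z=e_1$.
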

See Appendix~\ref{appendix:FB-algo-fixed-point} for the proof. Our analysis does \emph{not} suggest that iteratively applying the FB Bellman operator cannot converge to a fixed point, just that the standard proof strategy is not applicable. Indeed, both our discussion in Sec.~\ref{subsec:fb-existence} and prior work~\citep{touati2022does} have already revealed that there exist \emph{multiple} fixed points for the FB Bellman operator.\footnote{For example, applying a rotation (orthonormal) matrix to both FB representations does not change their inner products.} Therefore, whether the FB algorithm converges stably to a fixed-point remains an open problem. Answering this question might require tools such as the Lefschetz fixed-point theorem~\citep{lefschetz1926intersections} or the Lyapunov stability~\citep{lyapunov1992general}, which we leave for future research. One alternative method that might converge is to first fit the changing successor measure using a single network and then conduct bilinear decompositions into FB representations. However, the circular dependency (Fig.~\ref{fig:demystify-fb}) persists in this variant. In Sec.~\ref{subsec:fb-didactic-exp}, we will use didactic experiments to demonstrate that FB struggles to converge in practice. 

\vspace{-0.5em}
\section{A Simplified Algorithm for Unsupervised Pre-training in RL}
\label{sec:method}
\vspace{-0.5em}

In this section, we derive a variant of FB, building upon our theoretical understanding in Sec.~\ref{sec:understanding-fb}.
Unlike FB, we take as input a dataset sampled from some \emph{behavioral policy} $\pi_{\beta}(\ph{a} \mid \ph{s})$ and fit the successor measure ratio of this fixed policy (Fig.~\ref{fig:demystify-fb}). 
Our method is conceptually simpler: pre-training consists of one step of policy improvement over all behavioral value functions on the dataset. 
Empirically, our proposed variant of FB achieves more stable convergence and higher performance (Sec.~\ref{sec:experiments}). 
Similar to prior work~\citep{park2024foundation, zheng2025intention, ghosh2023reinforcement}, our method also provides an efficient policy initialization for further fine-tuning.

\subsection{Breaking the Circular Dependency in FB}
\label{subsec:onestep-fb-repr-obj}

In the same way that FB uses TD FB loss to fit successor measure ratios, we optimize a forward representation function $F_{\beta}$ and a backward representation function $B_{\beta}$ to fit the fixed successor measure ratio of the behavioral policy $\pi_{\beta}(\ph{a} \mid \ph{s})$. We use notations similar to FB, but $F_{\beta}$ and $B_{\beta}$ are semantically different from $F$ and $B$, highlighting the dependency on the behavioral policy using the subscript $\beta$.
We also introduce a new TD one-step FB loss to learn the new FB representations:
\begin{align}
     \gL_{\text{TD one-step FB}}(F_{\beta}, B_{\beta}) &= \frac{1}{2} \E_{ \substack{p^{\pi_{\beta}}(s, a) \, \rho(s_f, a_f), \\ p(s' \mid s, a), \, {\color{orange} \pi_{\beta}(a' \mid s')} } } \big[ \big( {\color{orange} F_{\beta}(s, a)}^{\top} B_{\beta}(s_f, a_f) - \gamma {\color{orange} \bar{F}_{\beta}(s', a')}^{\top} \bar{B}_{\beta}(s_f, a_f) \big)^2 \big] \nonumber \\
    &\hspace{1.1em} - (1 - \gamma) \E_{p^{\pi_{\beta}}(s, a)} \big[ {\color{orange} F_{\beta}(s, a)}^{\top} B_{\beta}(s, a) \big],
    \label{eq:td-onestep-fb}
\end{align}
where $\bar{F}_{\beta}$ and $\bar{B}_{\beta}$ are target representation functions. Unlike the TD FB loss, this TD one-step FB loss samples the next action $a'$ from the behavioral policy $\pi_{\beta}(\ph{a'} \mid \ph{s'})$ and the forward representation function $F_{\beta}$ does \emph{not} depend on a latent variable. Importantly, the TD one-step FB loss admits a clear fixed-point as the behavioral policy is \emph{fixed}: the learning procedure is a supervised learning problem. In theory, we can also first regress the fixed successor measure ratio and then conduct bilinear decompositions into $F_{\beta}$ and $B_{\beta}$, enjoying stable convergence.

Similar to FB, we additionally regularize the backward representations to be orthonormal~\citep{touati2021learning, touati2022does}:
\begin{align}
    \gL_{\text{ortho}}(B_{\beta}) &= \E_{\substack{\rho(s_f, a_f) \\ \rho(s_f', a_f')}} \left[ (B_{\beta}(s_f, a_f)^{\top} B_{\beta}(s_f', a_f') )^2 - \norm*{B_{\beta}(s_f, a_f)}_2^2 - \norm*{B_{\beta}(s_f', a_f')}_2^2 \right].
    \label{eq:onestep-fb-ortho-reg}
\end{align}
The complete new representation objective contains both the TD one-step FB loss and the orthonormalization regularization, with $\lambda_{\text{ortho}}$ controlling the regularization strength:
\begin{align}
    \gL(F_{\beta}, B_{\beta}) = \gL_{\text{TD one-step FB}}(F_{\beta}, B_{\beta}) + \lambda_{\text{ortho}} \gL_{\text{ortho}}(B_{\beta}).
    \label{eq:onestep-fb-repr-obj}
\end{align}
In Appendix~\ref{appendix:onestep-fb-svd-connection}, we discuss a connection between our method and the singular value decomposition (SVD) of the behavioral successor measure. We will use the learned forward representations to derive a latent-conditioned policy in our algorithm.

\vspace{-0.5em}
\subsection{Learning a Policy Using Forward Representations}
\vspace{-0.5em}

Our approach for learning a policy is similar to FB.
Specifically, we select actions to maximize the inner products between the forward representation $F_{\beta}$ and a latent variable $z$ sampled from the latent prior $p_\gZ$. For discrete CMPs, we use a softmax policy with temperature $\tau_{\text{policy}}$:
\begin{align*}
    \pi(a \mid s, z) =  \frac{\exp \left( \tau_{\text{policy}} F_{\beta}(s, a)^{\top} z \right)}{\sum_{a' \in \gA} \exp \left( \tau_{\text{policy}} F_{\beta}(s, a')^{\top} z \right)}, \quad z \sim p_{\gZ}(z).
\end{align*}
For CMPs with continuous actions, we explicitly learn a Gaussian policy by the reparameterized policy gradient trick~\mbox{\citep{haarnoja2018soft}} with an additional behavioral-cloning regularization~\citep{fujimoto2021minimalist} for the offline setting, with $\lambda_{\text{BC}}$ controlling the regularization strength:
\begin{align}
    \gL(\pi) = -\E_{ p^{\pi_{\beta}}(s, a_{\beta}), \, p_{\gZ}(z), \, \pi(a \mid s, z) } \left[ F_{\beta}(s, a)^{\top} z + \lambda_{\text{BC}} \log \pi(a_{\beta} \mid s, z) \right].
    \label{eq:onestep-fb-policy-obj}
\end{align}

\begin{algorithm}[t]
	\caption{One-step forward-backward representation learning}
	\label{alg:onestep-fb}
	\begin{algorithmic}[1]
        \STATE \algorithmicinput: The dataset $\gD$, the forward representation $F_{\beta}^{\theta}$, the backward representation $B_{\beta}^{\omega}$, the latent-conditioned policy $\pi^{\eta}$, the latent prior $p_\gZ$, the target forward representation $F_{\beta}^{\bar{\theta}}$, the target backward representation $B_{\beta}^{\bar{\omega}}$.
        \FOR{each iteration}
            \STATE Sample a batch of transitions $\{(s, a, s', a') \} \sim \gD$ and a batch of latents $\{z\} \sim p_{\gZ}(z)$.
            \STATE Train the forward representation $F_{\beta}^{\theta}$ and the backward representation $B_{\beta}^{\omega}$ by minimizing $\gL(\theta, \omega)$ (Eq.~\ref{eq:onestep-fb-repr-obj}).
            \STATE Train the policy $\pi^{\eta}$ by minimizing $\gL(\eta)$ (Eq.~\ref{eq:onestep-fb-policy-obj}).
            \STATE Update the target forward representations $F_{\beta}^{\bar{\theta}}$ and the target backward representations $B_{\beta}^{\bar{\omega}}$ using Polyak averages. 
        \ENDFOR
		\STATE \algorithmicoutput: $F_{\beta}^{\theta}$, $B_{\beta}^{\omega}$, and $\pi^{\eta}$.
	\end{algorithmic}
\end{algorithm}

After pre-training, we can use the policy and representations to adapt to any reward function, akin to FB. Given a downstream task, we infer the task-specific latent variable $z_r^{\beta} = \E_{\rho(s_f, a_f)} \left[ B_\beta(s_f, a_f) r(s_f, a_f) \right]$ and use it to index the latent-conditioned policy $\pi(a \mid s, z_r^{\beta})$. Importantly, this policy adaptation performs one step of policy improvement on the behavioral Q-value (Appendix~\ref{appendix:onestep-fb-policy-improvement}), similar to the generalized policy improvement in~\citet{barreto2017successor}.

\textbf{Algorithm summary.} 
In Alg.~\ref{alg:onestep-fb}, we summarize our new algorithm, one-step FB, and our open-source implementation is available online\footnote{\url{https://github.com/chongyi-zheng/onestep-fb}}. 
Starting from the existing FB algorithm, implementing our method makes two simple changes: \emph{(1)} remove the latent variable from the input of the forward representation, and \emph{(2)}, in the representation loss, sample the next action from the dataset instead of the policy. We use neural networks to parameterize the new FB representations $F_{\beta}^{\theta}(\ph{s}, \ph{a})$ and $B_{\beta}^{\omega}(\ph{s_f}, \ph{a_f})$, and the policy $\pi^{\eta}(\ph{a} \mid \ph{s}, \ph{z})$.

\vspace{-0.5em}
\section{Experiments}
\label{sec:experiments}
\vspace{-0.5em}

Our experiments study whether one-step FB is a simpler and more stable variant of FB. First, we will use a simple discrete CMP to verify the theory in Sec.~\ref{sec:understanding-fb}, empirically testing whether FB and one-step FB converge to their desired representations. Second, we compare one-step FB to prior work in standard offline RL and offline-to-online RL benchmarks, measuring zero-shot performance and the benefits of offline fine-tuning. Following prior work~\mbox{\citep{park2025flow}}, all experiments show means and standard deviations across $8$ random seeds for state-based tasks (4 random seeds for image-based tasks).

\subsection{The Failure Mode of the FB Algorithm}
\label{subsec:fb-didactic-exp}

\begin{wrapfigure}[10]{R}{0.5\textwidth}
    \vspace{-1.4em}  %
    \centering
    \begin{tikzpicture}
        \node[state_style] (s0) {$s_0$};
        \node[state_style, accepting, left of=s0] (s1) {$s_1$};
        \node[state_style, accepting, right of=s0] (s2) {$s_2$};
        \path[edge_style] (s0) edge [loop above, loop_style, in=60, out=120] node {$a_0$} (s0);
        \path[edge_style] (s0) edge [bend left=0] node[below] {$a_1$} (s1);
        \path[edge_style] (s0) edge [bend right=0] node[below] {$a_2$} (s2);
    \end{tikzpicture}
    \caption{\footnotesize \textbf{The three-state CMP.} Agents start from state $s_0$ and take action $a_i$ ($i = 0, 1, 2$) to determinstically transit into state $s_i$. States $s_1$ and $s_2$ are both absorbing states. Sections~\ref{subsec:fb-didactic-exp} and~\ref{subsec:onestep-fb-didactic-exp} will use this simple MDP to study the convergence of the FB and the one-step FB algorithms.}
    \label{fig:didactic-cmp}
\end{wrapfigure}
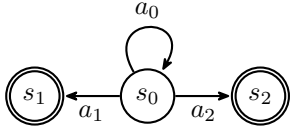

\begin{figure*}[t]
    \vspace{-3em}
    \centering
    \includegraphics[width=.99\linewidth]{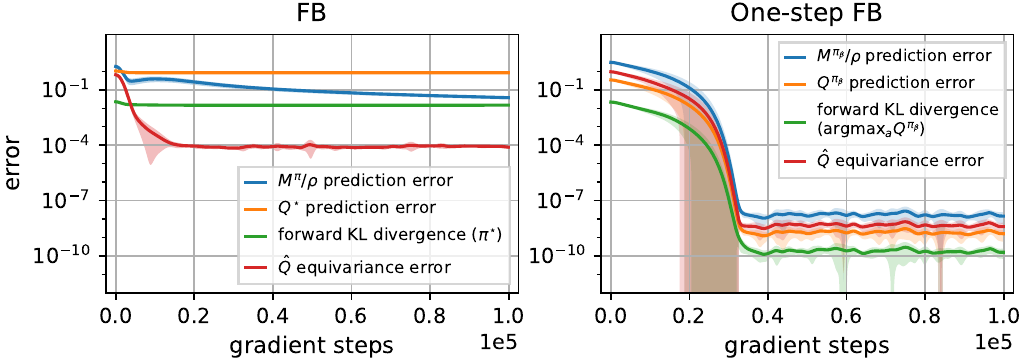}
    \caption{\footnotesize \textbf{Learning FB representations in the three-state CMP (Fig.~\ref{fig:didactic-cmp}).} \figleft \, After training for $10^5$ gradient steps, FB fails to converge to a pair of ground-truth FB representations. \figright \, Given a fixed policy, one-step FB exactly fits the ground-truth one-step FB representations within $4 \times 10^4$ gradient steps, suggesting that our method is simpler and more stable. These observations are consistent with our theory (Sec.~\ref{subsec:fb-convergence}) and the motivation for developing a new method (Sec.~\ref{subsec:onestep-fb-repr-obj}). 
    }
    \label{fig:didactic-exp}
    \vspace{-2em}  %
\end{figure*}

\emph{Does the practical FB algorithm converge to the fixed point characterized in Sec.~\ref{subsec:fb-existence}?}
We test the convergence of the FB algorithm by training it for a long time in a simple CMP (Fig.~\ref{fig:didactic-cmp}) with three states and three actions. We choose this discrete CMP because we can compute the successor measure and the optimal Q-value analytically. Using the MC FB loss (Eq.~\ref{eq:mc-fb}) for FB, which is the analytical analogy of the TD FB loss (Eq.~\ref{eq:td-fb}), we learn the FB algorithm for $10^5$ gradient steps and then analyze prediction errors and forward KL divergence of the latent-conditioned policy.
See Appendix~\ref{appendix:didactic-exp-fb} for implementation details.
We will track several metrics (See Appendix~\ref{appendix:didactic-exp-fb} for formal definitions) with the aim of answering the following questions:
\begin{enumerate}[topsep=1pt, itemsep=1pt, parsep=0pt, partopsep=0pt]
    \item Do the learned representations accurately reflect the successor measure ratio? 
    We compute the successor measure ratio prediction error $\epsilon_{\text{SMR}}$.
    \item Do the learned representations accurately reflect the Q values of reward-maximizing policies (Definition~\ref{def:fb-adaptation})? We measure this error as the optimal Q-value prediction error $\epsilon_{Q^{\star}}$.
    \item How similar are the learned policies to the reward-maximizing policies (Definition~\ref{def:fb-adaptation})? We measure the forward KL divergence between the latent-conditioned policy and the optimal policy $\text{KL}_{\pi^{\star}}$.
    \item Do the optimal Q-value predictions satisfy the equivariance property of universal value functions (Proposition~\ref{prop:fb-cylinder-proj})? We measure the equivariance error of Q predictions $\epsilon_{\text{equiv}}$.
\end{enumerate}

Along several metrics (Fig.~\ref{fig:didactic-exp}~\figleft), we observe high errors, even asymptotically, suggesting that the FB algorithm might not converge. For example, the prediction error of the successor measure ratio converges to $\epsilon_{\text{SMR}} = 4 \times 10^{-2}$ (contrary to Definition~\ref{def:fb-definition}). Similarly, the high policy KL divergence ($\text{KL}_{\pi^{\star}} = 10^{-2}$)
indicate that the FB algorithm failed to enable optimal policy adaptation (contrary to Definition~\ref{def:fb-adaptation}).
Importantly, since the optimal Q-value admits equivariance to an affine transformation, failing to satisfy this property ($\epsilon_{\text{equiv}} = 10^{-4}$) provides key evidence to show that the FB algorithm is \emph{not} converging to the optimal Q-value.
In Appendix~\ref{appendix:confounding}, we discuss potential confounding effects to clarify the observation that the practical FB algorithm fails to converge.

\vspace{-0.5em}
\subsection{The Convergence of the One-Step FB Algorithm}
\label{subsec:onestep-fb-didactic-exp}
\vspace{-0.5em}

We next perform a similar analysis of our simplified algorithm (one-step FB), checking whether the theoretically promised properties (Sec.~\ref{sec:method}) are borne out in practice.
Comparing one-step FB to FB is challenging because they have different fixed points. The closest apples-to-apples comparison is to measure whether one-step FB converges to its respective fixed point (see Sec.~\ref{sec:method}). For example, we will measure whether the representations learned by one-step FB encode the successor measure of a fixed policy $\pi_{\beta}(\ph{a} \mid \ph{s})$, not any reward-maximizing policy. 

We reuse the three-state CMP in Fig.~\ref{fig:didactic-cmp} and train the one-step FB representations for $10^5$ gradient steps using the MC one-step FB loss (Eq.~\ref{eq:mc-onestep-fb}). See Appendix~\ref{appendix:didactic-exp-onestep-fb} for implementation details. We will track metrics similar to Sec.~\ref{subsec:fb-didactic-exp}, aiming to answer questions related to the Q-value of the fixed policy $Q^{\pi_\beta}_r$ (See Appendix~\ref{appendix:didactic-exp-onestep-fb} for formal definitions).

Results in Fig.~\ref{fig:didactic-exp}~\figright~suggest that all these metrics converge to small numbers ($\leq 10^{-7}$) within $4 \times 10^{4}$ gradient steps, helping to verify the convergence of one-step FB. In particular, the learned one-step FB representations are equivariant to an affine transformation in rewards ($\epsilon_{\text{equiv}} = 5 \times 10^{-9}$), which is consistent with the property of any Q-value (Lemma~\ref{lemma:q-equivariant}).

Overall, these didactic experiments show that one-step FB enjoys strong convergence properties. Our next section studies whether these benefits carry over into higher-dimensional continuous control tasks on standard benchmarks.

\vspace{-0.75em}
\subsection{Comparing One-Step FB to Prior Unsupervised RL Methods}
\label{subsec:offline-rl-experiments}
\vspace{-0.5em}

\begin{table*}[t]
\vspace{-1.5em}
\caption{\footnotesize \textbf{Zero-shot evaluation on ExORL and OGBench benchmarks.} One-step FB achieves the best or near-best performance on $5$ out of $10$ domains, outperforming FB by $1.4 \times$ on average (${\color{mygreen} +}{\color{myred} -}$). Following prior work~\citep{park2025flow}, we average results over 8 seeds (4 seeds for image-based tasks) and bold values within $95\%$ of the best performance for each domain. See Table~\ref{tab:offline-zero-shot-rl-eval} for full results. }
\label{tab:offline-zero-shot-rl-eval-agg}
\begin{center}
\setlength{\tabcolsep}{3.5pt}
\scalebox{0.725}
{
\begin{tabular}{lccccccc}
\toprule
Domain & Laplacian & BYOL-$\gamma$ & TD-JEPA & ICVF & HILP & FB & One-Step FB \\
\midrule
\texttt{walker (4 tasks)} & $228 \pm 2$ & $227 \pm 2$ & $316 \pm 12$ & $\mathbf{619} \pm 23$ & $393 \pm 108$ & $400 \pm 40$ & $379 \pm 26$ ({\color{myred}-21})\\
\texttt{cheetah (4 tasks)} & $125 \pm 41$ & $127 \pm 39$ & $185 \pm 7$ & $187 \pm 13$ & $116 \pm 78$ & $271 \pm 46$ & $\mathbf{378} \pm 56$ ({\color{mygreen}+107})\\
\texttt{quadruped (4 tasks)} & $462 \pm 35$ & $496 \pm 35$ & $461 \pm 115$ & $546 \pm 37$ & $352 \pm 59$ & $246 \pm 31$ & $\mathbf{645} \pm 15$ ({\color{mygreen}+399})\\
\texttt{jaco (4 tasks)} & $3 \pm 1$ & $3 \pm 0$ & $13 \pm 4$ & $\mathbf{23} \pm 3$ & $20 \pm 5$ & $10 \pm 4$ & $\mathbf{22} \pm 4$ ({\color{mygreen}+12})\\
\midrule
\texttt{antmaze large navigate (5 tasks)} & $9 \pm 1$ & $21 \pm 2$ & $14 \pm 9$ & $23 \pm 3$ & $\mathbf{34} \pm 2$ & $25 \pm 5$ & $30 \pm 9$ ({\color{mygreen}+5}) \\
\texttt{antmaze teleport navigate (5 tasks)} & $3 \pm 1$ & $16 \pm 4$ & $4 \pm 1$ & $\mathbf{29} \pm 3$ & $19 \pm 6$ & $16 \pm 8$ & $11 \pm 6$ ({\color{myred}-5}) \\
\texttt{cube single play (5 tasks)} & $6 \pm 2$ & $13 \pm 2$ & $8 \pm 4$ & $13 \pm 2$ & $\mathbf{30} \pm 8$ & $2 \pm 1$ & $3 \pm 2$ ({\color{mygreen}+1}) \\
\texttt{scene play (5 tasks)} & $4 \pm 1$ & $15 \pm 8$ & $6 \pm 1$ & $8 \pm 6$ & $\mathbf{19} \pm 6$ & $6 \pm 4$ & $8 \pm 2$ ({\color{mygreen}+2}) \\
\midrule
\texttt{visual cube single play (5 tasks)} & - & $11 \pm 4$ & $13 \pm 3$ & - & $8 \pm 1$ & $12 \pm 3$ & $\mathbf{14} \pm 3$ ({\color{mygreen}+2}) \\
\texttt{visual scene play (5 tasks)} & - & $3 \pm 1$ & $11 \pm 3$ & - & $4 \pm 1$ & $13 \pm 2$ & $\mathbf{16} \pm 4$ ({\color{mygreen}+3}) \\
\bottomrule
\end{tabular}
}
\end{center}
\vspace{-2em}
\end{table*}

We now compare one-step FB to prior unsupervised RL algorithms, measuring the zero-shot adaptation performance on downstream tasks. While our previous sections have shown that one-step FB enjoys strong convergence properties, one might wonder whether it forgoes some degree of performance by only performing one step of policy improvement. Our experiments will show that, empirically, this is not the case. We defer the rationales for selecting prior methods to Appendix~\ref{appendix:baselines}. Appendix~\ref{appendix:envs-and-datasets} includes the detailed discussions about environments, datasets, and evaluation protocols. Prior work~\citep{park2024foundation, bagatella2025td, agarwal2025proto} that evaluated on the same benchmarks reported inconsistent results. To make a fair comparison, we implement both our and prior methods from scratch and use the same hyperparameters whenever possible (See Appendix~\ref{appendix:implementations-and-hyperparameters}).

We report results in Table~\ref{tab:offline-zero-shot-rl-eval-agg}, aggregating over $4$ tasks in each domain of ExORL and $5$ tasks in each domain of OGBench, and present the full results in Table~\ref{tab:offline-zero-shot-rl-eval}. These results show that one-step FB matches or surpasses prior unsupervised RL methods on $5$ out of $10$ domains. In particular, one-step FB achieves $+1.4 \times$ improvement over FB on average. On ExORL benchmarks, while prior methods ICVF and HILP are stronger on \texttt{walker} domain than both FB and one-step FB, our method performs on par or better than the best-performing baseline in other domains ($+17\%$ on average). On goal-conditioned domains from OGBench, while one-step FB is \emph{not} the best-performing method on state-based tasks, it outperforms prior methods by $15\%$ when taking in pixels as inputs directly. We conjecture that the state-based OGBench domains are challenging for one-step FB because the sparse reward function (goal-conditioned indicator rewards) induces a single backward representation. In contrast, some better-performing baselines are explicitly learning a goal-conditioned distance function, e.g., HILP and ICVF. Taken together, one-step FB is a competitive unsupervised pre-training algorithm for RL. In Appendix~\ref{appendix:fine-tuning-experiments}, we further study whether one-step FB provides an efficient policy initialization for online fine-tuning.

\textbf{Additional experiments.} In Appendix~\ref{appendix:dataset-effects}, we study the effects of dataset quality on our algorithm. In Appendix~\ref{appendix:confounding}, we investigate the confounding effects in our didactic experiments. Appendix~\ref{appendix:ablation} studies key components of one-step FB: the behavioral-cloning regularization coefficient $\lambda_{\text{BC}}$, the orthonormalization regularization coefficient $\lambda_{\text{ortho}}$, the reward weighting temperature $\tau_{\text{reward}}$, and the representation dimension $d$.

\vspace{-0.5em}
\section{Conclusion and Limitations}
\label{sec:conclusions}
\vspace{-0.5em}

How much computation can the RL algorithm prefetch? The FB framework offers one compelling framework for studying this question, and this paper offers some theoretical considerations on what is required for an unsupervised pre-training method for RL, which led to a simpler method with stronger convergence. Our goal is to help the community interpret, use, and build upon FB-style methods, working towards a future of universal pre-training for RL.

\textbf{Limitations.} While we explain why classical fixed-point analysis fails for FB, we do not provide a full alternative convergence theory for the practical FB algorithm. As mentioned at the end of Sec.~\ref{subsec:fb-convergence}, answering this question might require other tools in functional analysis. Practically, the zero-shot policies inferred by one-step FB might be sub-optimal, similar to prior work~\citep{park2024foundation, zheng2025intention, ghosh2023reinforcement}, if obtaining the optimal Q-value requires multiple steps of policy improvement.

\section*{Acknowledgments}

We thank Seohong Park for providing helpful feedback on drafts and the project website of this work. This work used the Della cluster provided by Princeton Research Computing, as well as the Ionic and Neuronic clusters maintained by the Department of Computer Science at Princeton University. This material is based upon work supported by the National Science Foundation under Award No. 2441665. Any opinions, findings, conclusions, or recommendations expressed in this material are those of the authors and do not necessarily reflect the views of the National Science Foundation. Some figures in this work use Twemoji, an open-source emoji set created by Twitter and licensed under CC
BY 4.0.

\clearpage

\begin{table}[H]
\caption{\footnotesize \textbf{Zero-shot evaluation on ExORL and OGBench benchmarks.} We present the full zero-shot evaluation results on $16$ ExORL tasks and $30$ OGBench tasks. In each domain, we pre-train different methods and evaluate zero-shot performance on a set of tasks. We aggregate the results over 8 seeds (4 seeds for image-based tasks) and bold values within $95\%$ of the best performance for each task.}
\label{tab:offline-zero-shot-rl-eval}
\begin{center}
\setlength{\tabcolsep}{4pt}
\resizebox{\textwidth}{!}{%
\begin{tabular}{llccccccc}
\toprule
Domain & Task & Laplacian & BYOL-$\gamma$ & TD-JEPA & ICVF & HILP & FB & One-Step FB \\
\midrule
\multirow{5}{*}{\texttt{walker}} 
& \textbf{\texttt{overall}} & $228 \pm 2$ & $227 \pm 2$ & $316 \pm 12$ & $\mathbf{619} \pm 23$ & $393 \pm 108$ & $400 \pm 40$ & $379 \pm 26$ ({\color{myred}-21}) \\
& \texttt{flip} & $243 \pm 5$ & $242 \pm 4$ & $381 \pm 34$ & $\mathbf{538} \pm 19$ & $332 \pm 135$ & $277 \pm 100$ & $388 \pm 30$ \\
& \texttt{run} & $89 \pm 1$ & $89 \pm 2$ & $112 \pm 22$ & $\mathbf{258} \pm 20$ & $136 \pm 36$ & $194 \pm 20$ & $198 \pm 30$ \\
& \texttt{stand} & $389 \pm 3$ & $387 \pm 5$ & $387 \pm 11$ & $\mathbf{858} \pm 31$ & $691 \pm 126$ & $621 \pm 93$ & $524 \pm 60$ \\
& \texttt{walk} & $192 \pm 4$ & $192 \pm 4$ & $384 \pm 47$ & $\mathbf{821} \pm 40$ & $413 \pm 195$ & $506 \pm 34$ & $407 \pm 63$ \\
\midrule
\multirow{5}{*}{\texttt{cheetah}} 
& \textbf{\texttt{overall}} & $125 \pm 41$ & $127 \pm 39$ & $185 \pm 7$ & $187 \pm 13$ & $116 \pm 78$ & $271 \pm 46$ & $\mathbf{378} \pm 56$ ({\color{mygreen}+107}) \\
& \texttt{run} & $40 \pm 13$ & $42 \pm 13$ & $66 \pm 3$ & $89 \pm 7$ & $38 \pm 32$ & $43 \pm 36$ & $\mathbf{97} \pm 19$ \\
& \texttt{run backward} & $50 \pm 17$ & $48 \pm 15$ & $69 \pm 6$ & $48 \pm 9$ & $36 \pm 40$ & $125 \pm 30$ & $\mathbf{181} \pm 60$ \\
& \texttt{walk} & $185 \pm 61$ & $199 \pm 61$ & $293 \pm 15$ & $385 \pm 21$ & $195 \pm 121$ & $251 \pm 166$ & $\mathbf{424} \pm 44$ \\
& \texttt{walk backward} & $226 \pm 77$ & $220 \pm 70$ & $312 \pm 26$ & $226 \pm 38$ & $194 \pm 202$ & $663 \pm 143$ & $\mathbf{811} \pm 156$ \\
\midrule
\multirow{5}{*}{\texttt{quadruped}} 
& \textbf{\texttt{overall}} & $462 \pm 35$ & $496 \pm 35$ & $461 \pm 115$ & $546 \pm 37$ & $352 \pm 59$ & $246 \pm 31$ & $\mathbf{645} \pm 15$ ({\color{mygreen}+399}) \\
& \texttt{jump} & $554 \pm 54$ & $603 \pm 67$ & $494 \pm 175$ & $617 \pm 59$ & $321 \pm 63$ & $247 \pm 84$ & $\mathbf{707} \pm 48$ \\
& \texttt{run} & $324 \pm 22$ & $345 \pm 19$ & $363 \pm 112$ & $395 \pm 33$ & $277 \pm 56$ & $165 \pm 51$ & $\mathbf{461} \pm 11$ \\
& \texttt{stand} & $651 \pm 47$ & $700 \pm 41$ & $785 \pm 171$ & $796 \pm 57$ & $473 \pm 103$ & $388 \pm 86$ & $\mathbf{916} \pm 30$ \\
& \texttt{walk} & $318 \pm 21$ & $337 \pm 13$ & $201 \pm 100$ & $375 \pm 57$ & $339 \pm 111$ & $183 \pm 59$ & $\mathbf{496} \pm 27$ \\
\midrule
\multirow{5}{*}{\texttt{jaco}} 
& \textbf{\texttt{overall}} & $3 \pm 1$ & $3 \pm 0$ & $13 \pm 4$ & $\mathbf{23} \pm 3$ & $20 \pm 5$ & $10 \pm 4$ & $\mathbf{22} \pm 4$ ({\color{mygreen}+12}) \\
& \texttt{reach bottom left} & $3 \pm 1$ & $3 \pm 1$ & $10 \pm 3$ & $25 \pm 9$ & $\mathbf{29} \pm 9$ & $8 \pm 5$ & $6 \pm 2$ \\
& \texttt{reach bottom right} & $3 \pm 0$ & $3 \pm 1$ & $10 \pm 5$ & $21 \pm 8$ & $\mathbf{24} \pm 8$ & $7 \pm 9$ & $5 \pm 2$ \\
& \texttt{reach top left} & $2 \pm 1$ & $3 \pm 0$ & $17 \pm 6$ & $26 \pm 10$ & $6 \pm 8$ & $9 \pm 9$ & $\mathbf{53} \pm 11$ \\
& \texttt{reach top right} & $4 \pm 1$ & $3 \pm 1$ & $16 \pm 7$ & $20 \pm 7$ & $\mathbf{22} \pm 11$ & $17 \pm 9$ & $\mathbf{22} \pm 6$ \\
\midrule
\multirow{6}{*}{\texttt{antmaze large navigate}} 
& \textbf{\texttt{overall}} & $9 \pm 1$ & $21 \pm 2$ & $14 \pm 9$ & $23 \pm 3$ & $\mathbf{34} \pm 2$ & $25 \pm 5$ & $30 \pm 9$ ({\color{mygreen}+5}) \\
& \texttt{task 1} & $2 \pm 1$ & $6 \pm 3$ & $15 \pm 11$ & $4 \pm 2$ & $13 \pm 8$ & $\mathbf{46} \pm 9$ & $21 \pm 9$ \\
& \texttt{task 2} & $2 \pm 1$ & $11 \pm 4$ & $1 \pm 2$ & $9 \pm 3$ & $16 \pm 6$ & $2 \pm 3$ & $\mathbf{41} \pm 12$ \\
& \texttt{task 3} & $29 \pm 3$ & $57 \pm 8$ & $35 \pm 23$ & $67 \pm 8$ & $\mathbf{75} \pm 6$ & $31 \pm 10$ & $15 \pm 4$ \\
& \texttt{task 4} & $6 \pm 2$ & $14 \pm 5$ & $7 \pm 7$ & $18 \pm 6$ & $27 \pm 10$ & $3 \pm 2$ & $\mathbf{33} \pm 15$ \\
& \texttt{task 5} & $6 \pm 3$ & $16 \pm 4$ & $11 \pm 8$ & $18 \pm 4$ & $40 \pm 8$ & $\mathbf{44} \pm 19$ & $37 \pm 20$ \\
\midrule
\multirow{6}{*}{\texttt{antmaze teleport navigate}} 
& \textbf{\texttt{overall}} & $3 \pm 1$ & $16 \pm 4$ & $4 \pm 1$ & $\mathbf{29} \pm 3$ & $19 \pm 6$ & $16 \pm 8$ & $11 \pm 6$ ({\color{myred}-5}) \\
& \texttt{task 1} & $1 \pm 1$ & $5 \pm 3$ & $0 \pm 0$ & $\mathbf{17} \pm 9$ & $10 \pm 5$ & $8 \pm 9$ & $2 \pm 1$ \\
& \texttt{task 2} & $6 \pm 2$ & $15 \pm 7$ & $2 \pm 1$ & $\mathbf{38} \pm 17$ & $27 \pm 7$ & $11 \pm 10$ & $15 \pm 10$ \\
& \texttt{task 3} & $4 \pm 2$ & $15 \pm 3$ & $7 \pm 3$ & $\mathbf{40} \pm 4$ & $24 \pm 8$ & $23 \pm 10$ & $17 \pm 10$ \\
& \texttt{task 4} & $2 \pm 1$ & $24 \pm 12$ & $7 \pm 2$ & $\mathbf{40} \pm 6$ & $21 \pm 11$ & $25 \pm 10$ & $19 \pm 11$ \\
& \texttt{task 5} & $2 \pm 1$ & $\mathbf{23} \pm 5$ & $2 \pm 2$ & $12 \pm 11$ & $14 \pm 7$ & $13 \pm 12$ & $2 \pm 1$ \\
\midrule
\multirow{6}{*}{\texttt{cube single play}} 
& \textbf{\texttt{overall}} & $6 \pm 2$ & $13 \pm 2$ & $8 \pm 4$ & $13 \pm 2$ & $\mathbf{30} \pm 8$ & $2 \pm 1$ & $3 \pm 2$ ({\color{mygreen}+1}) \\
& \texttt{task 1} & $5 \pm 2$ & $12 \pm 4$ & $6 \pm 4$ & $13 \pm 2$ & $\mathbf{27} \pm 7$ & $1 \pm 1$ & $3 \pm 4$ \\
& \texttt{task 2} & $5 \pm 2$ & $13 \pm 3$ & $8 \pm 6$ & $13 \pm 3$ & $\mathbf{30} \pm 10$ & $3 \pm 2$ & $4 \pm 4$ \\
& \texttt{task 3} & $7 \pm 3$ & $13 \pm 4$ & $14 \pm 9$ & $14 \pm 4$ & $\mathbf{23} \pm 13$ & $3 \pm 3$ & $4 \pm 4$ \\
& \texttt{task 4} & $6 \pm 2$ & $15 \pm 6$ & $6 \pm 4$ & $11 \pm 3$ & $\mathbf{37} \pm 19$ & $2 \pm 2$ & $2 \pm 2$ \\
& \texttt{task 5} & $5 \pm 3$ & $11 \pm 3$ & $5 \pm 4$ & $13 \pm 4$ & $\mathbf{30} \pm 22$ & $1 \pm 2$ & $1 \pm 1$ \\
\midrule
\multirow{6}{*}{\texttt{scene play}} 
& \textbf{\texttt{overall}} & $4 \pm 1$ & $15 \pm 8$ & $6 \pm 1$ & $8 \pm 6$ & $\mathbf{19} \pm 6$ & $6 \pm 4$ & $8 \pm 2$ ({\color{mygreen}+2}) \\
& \texttt{task 1} & $17 \pm 6$ & $49 \pm 32$ & $30 \pm 6$ & $34 \pm 23$ & $\mathbf{66} \pm 16$ & $25 \pm 16$ & $21 \pm 8$ \\
& \texttt{task 2} & $1 \pm 1$ & $11 \pm 8$ & $0 \pm 0$ & $5 \pm 7$ & $\mathbf{14} \pm 11$ & $5 \pm 5$ & $12 \pm 4$ \\
& \texttt{task 3} & $1 \pm 1$ & $9 \pm 6$ & $0 \pm 0$ & $3 \pm 3$ & $\mathbf{12} \pm 14$ & $0 \pm 1$ & $0 \pm 0$ \\
& \texttt{task 4} & $2 \pm 1$ & $4 \pm 7$ & $0 \pm 0$ & $0 \pm 0$ & $1 \pm 1$ & $0 \pm 0$ & $\mathbf{7} \pm 4$ \\
& \texttt{task 5} & $\mathbf{0} \pm \mathbf{0}$ & $\mathbf{0} \pm \mathbf{0}$ & $\mathbf{0} \pm \mathbf{0}$ & $\mathbf{0} \pm \mathbf{0}$ & $\mathbf{0} \pm \mathbf{0}$ & $\mathbf{0} \pm \mathbf{0}$ & $\mathbf{0} \pm \mathbf{0}$ \\
\midrule
\multirow{6}{*}{\texttt{visual cube single play}} 
& \textbf{\texttt{overall}} & - & $11 \pm 4$ & $13 \pm 3$ & - & $8 \pm 1$ & $12 \pm 3$ & $\mathbf{14} \pm 3$ ({\color{mygreen}+2}) \\
& \texttt{task 1} & - & $24 \pm 16$ & $19 \pm 7$ & - & $10 \pm 6$ & $14 \pm 7$ & $\mathbf{47} \pm 12$ \\
& \texttt{task 2} & - & $10 \pm 2$ & $13 \pm 8$ & - & $\mathbf{19} \pm 5$ & $10 \pm 5$ & $11 \pm 3$ \\
& \texttt{task 3} & - & $\mathbf{16} \pm 4$ & $\mathbf{16} \pm 7$ & - & $10 \pm 7$ & $15 \pm 6$ & $3 \pm 3$ \\
& \texttt{task 4} & - & $3 \pm 3$ & $6 \pm 5$ & - & $\mathbf{13} \pm 7$ & $8 \pm 5$ & $9 \pm 8$ \\
& \texttt{task 5} & - & $1 \pm 1$ & $10 \pm 5$ & - & $\mathbf{15} \pm 3$ & $11 \pm 7$ & $2 \pm 2$ \\
\midrule
\multirow{6}{*}{\texttt{visual scene play}} 
& \textbf{\texttt{overall}} & - & $3 \pm 1$ & $11 \pm 3$ & - & $4 \pm 1$ & $13 \pm 2$ & $\mathbf{16} \pm 4$ ({\color{mygreen}+3}) \\
& \texttt{task 1} & - & $7 \pm 2$ & $53 \pm 16$ & - & $14 \pm 4$ & $66 \pm 12$ & $\mathbf{78} \pm 16$ \\
& \texttt{task 2} & - & $1 \pm 1$ & $0 \pm 1$ & - & $2 \pm 1$ & $0 \pm 0$ & $\mathbf{3} \pm 4$ \\
& \texttt{task 3} & - & $0 \pm 1$ & $0 \pm 0$ & - & $\mathbf{2} \pm 1$ & $0 \pm 0$ & $0 \pm 0$ \\
& \texttt{task 4} & - & $\mathbf{5} \pm 5$ & $0 \pm 0$ & - & $0 \pm 0$ & $0 \pm 0$ & $1 \pm 1$ \\
& \texttt{task 5} & - & $\mathbf{0} \pm 1$ & $\mathbf{0} \pm \mathbf{0}$ & - & $\mathbf{0} \pm 1$ & $\mathbf{0} \pm \mathbf{0}$ & $\mathbf{0} \pm \mathbf{0}$ \\
\bottomrule
\end{tabular}%
}
\end{center}
\end{table}

\appendix

\section{Related Work}
\label{appendix:related-work}

Our work investigates the theoretical foundations of unsupervised pre-training in RL, with a focus on the prior forward-backward (FB) representation learning algorithm~\citep{touati2021learning}. 

\paragraph{Unsupervised RL and zero-shot RL.} The broader goal of unsupervised RL is to pre-train policies from reward-free \emph{unsupervised interactions} that enable efficient adaptation to downstream tasks. Prior work has approached this via skill learning~\citep{frans2024unsupervised, park2024foundation, kim2024unsupervised, hu2023unsupervised, park2023metra, eysenbach2018diversity, zheng2024can}, intent predictions~\citep{frans2024unsupervised, zheng2025intention}, empowerment maximization~\citep{klyubin2005empowerment, choi2021variational, mohamed2015variational, shah2025structured}, or self-supervised representation learning~\citep{ni2024bridging, ma2023vip, mazoure2023contrastive, zhang2021learning, zheng2023stabilizing}. After pre-training a set of policies, these methods typically adapt one of the policies to a new reward function via continuous fine-tuning or hierarchical control~\citep{park2023metra, frans2024unsupervised, ma2023vip, zheng2025intention}. One appealing family of methods that does not require gradient-based fine-tuning of the pre-trained policy is called zero-shot RL methods~\citep{touati2021learning, park2024foundation, bagatella2025td, tirinzoni2025zeroshot, li2025bfm}. Similar to the in-context learning in LLMs~\citep{brown2020language}, zero-shot RL methods prefetch optimal policies for \emph{any} rewards during pre-training and perform in-context adaptation on downstream tasks. In this paper, through theoretical and empirical analysis, we demystify a prior SOTA zero-shot RL method (FB~\citep{touati2021learning}) and study its convergence in practice. 

\paragraph{Successor measures and successor features.} Our work builds on successor measures~\citep{dayan1993improving}, which were originally proposed to improve generalization in RL and have since been widely adopted in neuroscience to model predictive maps in the brain~\citep{Momennejad2017successor, xie2025lowrank}. In the domain of Deep RL, prior work has shown that successor measures can be learned in high-dimensional environments~\citep{kulkarni2016deep, zhang2017deep} and facilitate transfer learning across tasks~\citep{barreto2017successor}. By combining these ideas with universal value function approximators~\citep{schaul2015universal}, Universal Successor Features (USFs) generalize successor features to estimate values for any reward under any policy~\citep{borsa2019universal}. More recently, forward-backward (FB) representation learning~\citep{touati2021learning} extended this to enable zero-shot evaluation for \emph{any} reward function, forming the basis for building behavioral foundation models~\citep{tirinzoni2025zeroshot, bagatella2025td}. Our analysis of FB interprets the representation objective as estimating the successor measure of a latent-conditioned policy. However, this estimation incurs a circular dependency.

\paragraph{Density ratio estimations.} Directly estimating the ratios between two probability density functions is an important problem in machine learning. Solving this problem enables applications in two-sample testing~\citep{lopez2016revisiting}, covariate shift adaptation~\citep{sugiyama2008direct}, outlier detection~\citep{sugiyama2008direct}, mutual information estimation~\citep{belghazi2018mutual, poole2019variational}, and policy evaluation~\citep{nachum2019dualdice}. Prior work has tackled the density ratio estimation problem by minimizing a KL divergence~\citep{sugiyama2008direct}, moment matching~\citep{gretton2009covariate}, penalized convex risk minimization~\citep{nguyen2010estimating}, contrastive learning~\citep{ma2018noise, oord2018representation, poole2019variational}. Our analysis of the FB algorithm is closely related to the least-squares importance fitting approach~\citep{kanamori2008efficient, kanamori2009least} for density ratio estimation. In this paper, we show that the FB representation objective is a temporal-difference variant of the least-squares importance fitting loss in Sec.~\ref{subsec:fb-repr-obj} and is closely related to fitted Q-evaluation (FQE)~\cite{riedmiller2005neural}.

\paragraph{One-step RL.} One-step RL methods~\cite{brandfonbrener2021offline, gulcehre2020rl, peters2007reinforcement, wang2018exponentially, peters2010reps} apply one step of policy improvement to a data-generating behavioral policy. These methods have two phases: First, estimate the Q-values of the behavioral policy via regression or FQE updates. Second, optimize the policy to maximize the predicted Q-value. This formulation decouples Q-value estimation from policy extraction, encompassing a wide range of techniques, from Relative Entropy Policy Search~\cite{peters2010reps} to goal-conditioned imitation learning~\cite{savinov2018semi, ding2019, li2020generalized, chen2021decision, kumar2019rewardconditionedpolicies, eysenbach2020rewriting, eysenbach2022imitating, wantlin2025consistent}. Theoretical and empirical analysis presented in~\citet{eysenbach2022a} show that one step of policy improvement is equivalent to multi-step critic regularization, opening the door to developing a simpler suite of algorithms. Recent work~\cite{eysenbach2022contrastive, park2025transitive, brandfonbrener2021offline} has applied the idea of one-step policy improvement to develop practical RL algorithms. Similarly, our proposed method, one-step FB, adopts the principle of one-step policy improvement, breaking the circular dependency in the original FB algorithm.

\section{Preliminary}
\label{appendix:preliminary}

\subsection{The Successor Measure Matrix}
\label{appendix:successor-measure-matrix}

For policy $\pi$, we define the policy-dependent transition matrix $P^{\pi} \in \R^{\abs{\gS \times \gA} \times \abs{\gS \times \gA}}$ as $P^{\pi}_{(s, a), (s', a')} = p(s' \mid s, a) \pi(a' \mid s')$. 
\begin{lemma}[Lemma 1.6 and Corollary 1.5 of~\citet{agarwal2019reinforcement}]
\label{lemma:successor-measure-expr-and-rank}
For any policy $\pi: \gS \to \Delta(\gA)$, transition function $p(s' \mid s, a)$ and discount $\gamma \in [0, 1)$, the successor measure matrix $M^{\pi} \in \R^{\abs{\gS \times \gA} \times \abs{\gS \times \gA}}$ satisfies $M^{\pi} = (1 - \gamma) (I_{\abs{\gS \times \gA}} - \gamma P^{\pi})^{-1}$. Furthermore, $M^{\pi}$ is full rank, i.e., $\text{rank}(M^{\pi}) = \abs{\gS \times \gA}$.
\begin{proof}
    \label{proof:successor-measure-expr-and-rank}
    This result is almost an immediate consequence of the Bellman equation in Eq.~\ref{eq:successor-measure-bellman-eq}:
    \begin{align*}
        M^{\pi} = (1 - \gamma) I_{\abs{\gS \times \gA}} + \gamma P^{\pi} M^{\pi} \implies (I_{\abs{\gS \times \gA}} - \gamma P^{\pi}) M^{\pi} = (1 - \gamma) I_{\abs{\gS \times \gA}}.
    \end{align*}
    If $I_{\abs{\gS \times \gA}} - \gamma P^{\pi}$ is invertible (i.e., full rank), then successor measure must satisfy $M^{\pi} = (1 - \gamma) (I_{\abs{\gS \times \gA}} - \gamma P^{\pi})^{-1}$. Thus, the proof boils down to showing that this matrix is invertible.

    We prove that the matrix $I_{\abs{\gS \times \gA}} - \gamma P^{\pi}$ is invertible by showing its null space only contains the zero vector. For any non-zero vector $x \in \R^{\abs{\gS \times \gA}}$, the $L^{\infty}$-norm of $(I_{\abs{\gS \times \gA}} - \gamma P^{\pi}) x$ satisfies
    \begin{align*}
        \norm{(I_{\abs{\gS \times \gA}} - \gamma P^{\pi}) x}_{\infty} &=  \norm{x - \gamma P^{\pi} x}_{\infty} \\
        &\overset{(a)}{\geq} \norm{x}_{\infty} - \gamma \norm{P^{\pi} x}_{\infty} \\
        &\overset{(b)}{\geq} \norm{x}_{\infty} - \gamma \norm{x}_{\infty} \\
        &= (1- \gamma) \norm{x}_{\infty} \\
        &\overset{(c)}{>} 0, 
    \end{align*}
    where, in \emph{(a)}, we apply the triangle inequality of the $L^{\infty}$-norm, \emph{(b)} holds because $P^{\pi} x$ is an expectation over elements of $x$ and $\norm{x}_{\infty} = \max \{ \abs{x_1}, \abs{x_2}, \cdots, \abs{x_{\abs{\gS \times \gA}}} \}$, and, in \emph{(c)}, we apply the conditions that $\gamma < 1$ and $x$ is a non-zero vector. Therefore, the null space of the matrix $I_{\abs{\gS \times \gA}} - \gamma P^{\pi}$ only contains the zero vector, implying it is invertible. Thus, we can compute the successor measure by matrix inversion:
    \begin{align*}
        M^{\pi} = (1 - \gamma) (I_{\abs{\gS \times \gA}} - \gamma P^{\pi})^{-1}.
    \end{align*}
    Since the matrix $I_{\abs{\gS \times \gA}} - \gamma P^{\pi}$ is invertible, we conclude that the successor measure is also a full-rank invertible matrix with $\text{rank}(M^{\pi}) = \abs{\gS \times \gA}$.
\end{proof}
\end{lemma}

\subsection{Components for Unsupervised Pre-Training in RL}
\label{appendix:url-components}

For \emph{Step 1}, prior methods usually define a latent variable $z \in \gZ$ sampled from a prior distribution $p_\gZ \in \Delta(\gZ)$, e.g., a standard Gaussian distribution~\citep{frans2024unsupervised} or a scaled von Mises-Fisher distribution~\citep{park2024foundation, touati2021learning}, and use it to index latent-conditioned policies $\pi: \gS \times \gZ \to \Delta(\gA)$. The goal of unsupervised pre-training is to prefetch the reward-maximizing policies for downstream tasks~\citep{park2024foundation, touati2021learning, agarwal2025proto, bagatella2025td}. For \emph{Step 2}, we are presented with a reward function $r(\ph{s}, \ph{a})$ and asked to find a latent variable $z_r$ so that policy $\pi(\ph{a} \mid \ph{s}, z_r)$ achieves high reward. %

\subsection{Definition of Ground-Truth Forward-Backward Representations}
\label{appendix:def-gt-fb-reprs}

We formally define the ground-truth forward-backward representations $F^{\star}: \gS \times \gA \times \gZ \to \gZ$ and $B^{\star}: \gS \times \gA \to \gZ$ as a pair of functions satisfying the following properties.

\begin{definition}[Definition 1 of~\citet{touati2021learning}]
\label{def:fb-definition}
For any CMP with latent space $\gZ$ and any marginal probability measure $\rho \in \Delta(\gS \times \gA)$, we say that a pair of functions $F^{\star}:\gS\times\gA\times\gZ\to\gZ$ and $B^{\star}:\gS\times\gA\to\gZ$ are the ground-truth forward-backward representations if, for any latent variable $z$, any current state-action pair $(s, a)$, and any future state-action pair $(s_f,a_f)$, the latent-conditioned policy $\pi: \gS \times \gZ \to \Delta(\gA)$ defined by
\begin{align}
\pi(a \mid s, z) &= \delta \big( a \, \big\lvert \, \argmax_{a \in \gA} F^{\star}(s, a, z)^{\top} z \big), \label{eq:fb-policy}
\end{align}
has its associated successor measure ratio $M^{\pi}/ \rho: \gS \times \gA \times \gZ \times \gS \times \gA \to \mathbb{R}_{\ge 0}$ satisfy
\begin{align}
    \frac{M^{\pi}(s_f, a_f \mid s, a, z)}{\rho(s_f, a_f)} &= F^{\star}(s, a, z)^{\top} B^{\star}(s_f, a_f).
    \label{eq:fb-sm-ratio}
\end{align}
\end{definition}

\begin{definition}[heorem 2 of~\citet{touati2021learning}]
\label{def:fb-adaptation}
Augmenting the CMP with a reward function $r: \gS \times \gA \to \R$, for any latent variable $z$, and any current state-action pair $(s, a)$, the ground-truth FB representations $F^{\star}: \gS \times \gA \times \gZ \to \gZ$ and $B^{\star}: \gS \times \gA \to \gZ$ produces a latent variable
\begin{align}
    z_r = \E_{(s_f, a_f) \sim \rho(s_f, a_f)} \left[ B^{\star}(s_f, a_f) r(s_f, a_f) \right]
    \label{eq:fb-latent-of-reward}
\end{align}
that indexes the optimal policy $\pi^{\star}_r (a \mid s) = \pi(a \mid s, z_r)$ and the optimal Q-value $Q^{\star}_r(s, a) = F^{\star}(s, a, z_r)^{\top} z_r$.
\end{definition}

The optimal policy adaptation for any reward function depends on the closeness of the latent space $\gZ$. In practice, prior work~\citep{touati2021learning, touati2022does, bagatella2025td} usually sets the latent space to be the entire $d$-dimensional real space $\gZ = \R^d$. One intriguing property of this design decision is the linear closeness of $\R^d$: $\R^d$ is a vector space that is closed under vector addition and scalar multiplication.

\begin{lemma}[Closeness of the $d$-dimensional real space]
\label{lemma:closeness-of-real-space}
    For any vectors $x, y \in \R^d$ and any scalars $a, b \in \R$, the linear combination $ax + by \in \R^d$.
\end{lemma}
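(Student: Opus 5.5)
The claim is that $\R^d$ is closed under linear combinations, which is essentially the vector space axioms read off coordinatewise. I will argue directly from the definition of $\R^d$ as the set of $d$-tuples of real numbers, with addition and scalar multiplication defined componentwise. No earlier result from the paper is needed beyond this standard definition.

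Fix $x = (x_1, \dots, x_d)^{\top} \in \R^d$, $y = (y_1, \dots, y_d)^{\top} \in \R^d$ and scalars $a, b \in \R$. The first step is to write out scalar multiplication componentwise: $ax = (a x_1, \dots, a x_d)^{\top}$ and $by = (b y_1, \dots, b y_d)^{\top}$. Each entry $a x_i$ and $b y_i$ is a product of two real numbers, hence real, because $\R$ is closed under multiplication. So $ax, by \in \R^d$.

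The second step is to write out vector addition componentwise:
\begin{align*}
    ax + by = (a x_1 + b y_1, \dots, a x_d + b y_d)^{\top}.
\end{align*}
Each entry $a x_i + b y_i$ is a sum of two reals, hence real, because $\R$ is closed under addition. Since this holds for every coordinate $i = 1, \dots, d$, the vector $ax + by$ is a $d$-tuple of reals, that is, $ax + by \in \R^d$.

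There is no genuine obstacle here. The only point requiring care is to be explicit that the operations are componentwise, so that closure of $\R^d$ reduces to the field closure of $\R$ under $+$ and $\times$. I would also note the role of this lemma in the paper: it guarantees that $z_{\nu r + \xi}$, which is a linear combination of $z_r$ and $\E_\rho[B^{\star}]$ by linearity of Eq.~\ref{eq:fb-latent-of-reward}, stays in $\gZ = \R^d$. This is what the later argument for Proposition~\ref{prop:fb-cylinder-proj} relies on.
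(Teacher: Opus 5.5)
Your componentwise argument is correct and complete: it reduces closure of $\R^d$ under $ax+by$ to closure of $\R$ under $+$ and $\times$, which is the standard argument, and the paper itself states this lemma without proof as an immediate fact about the vector space $\R^d$. On your aside, the paper invokes the lemma mainly for the subsequent corollary that $z_r = \E_{\rho}[B^{\star} r] \in \gZ$ for every reward. That step passes from finite combinations to an expectation, so coordinatewise it also needs integrability. The membership $z_{\nu r+\xi} \in \gZ$ that you mention for Proposition~\ref{prop:fb-cylinder-proj} is a special case of that corollary, or equivalently of the lemma applied to $z_r$ and $z_{\text{one}}$.
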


Importantly, we can apply Lemma~\ref{lemma:closeness-of-real-space} to the (infinite number of) backward representations $B(\ph{s_f}, \ph{a_f})$ and extend the results to an expectation (integral):

\begin{corollary}[The latent space covers the optimal latent variable for any reward function] For a latent space $\gZ = \R^d$, the ground-truth backward representations $B^{\star}: \gS \times \gA \to \gZ$, any reward function $r: \gS \times \gA \to \R$, and any marginal probability measure $\rho \in \Delta(\gS \times \gA)$, the optimal latent variable
\begin{align*}
    z_r = \E_{(s_f, a_f) \sim \rho(s_f, a_f)} \left[ B^{\star}(s_f, a_f) r(s_f, a_f) \right]
\end{align*}
is always covered by the latent space: $z_r \in \gZ$.
\end{corollary}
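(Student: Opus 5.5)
The plan is to reduce the statement to Lemma~\ref{lemma:closeness-of-real-space}, extended from finite linear combinations to the expectation that defines $z_r$. There are two cases.

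\emph{Discrete CMP.} Here $\rho$ is a probability mass function on the finite set $\gS \times \gA$, so
\begin{align*}
    z_r = \sum_{(s_f, a_f) \in \gS \times \gA} \rho(s_f, a_f)\, r(s_f, a_f)\, B^{\star}(s_f, a_f).
\end{align*}
Each $B^{\star}(s_f, a_f)$ lies in $\gZ = \R^d$ by definition of $B^{\star}: \gS \times \gA \to \gZ$, and each coefficient $\rho(s_f, a_f) r(s_f, a_f)$ is a real scalar. A simple induction on the number of terms, applying Lemma~\ref{lemma:closeness-of-real-space} at each step with $a = 1$ and $b = \rho(s_f,a_f) r(s_f,a_f)$, shows that the partial sums, and hence $z_r$, stay in $\R^d$.

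\emph{Continuous CMP.} I read the expectation as a Bochner (equivalently, coordinatewise Lebesgue) integral. The $i$-th coordinate of $z_r$ is $\E_{\rho}[B^{\star}_i(s_f, a_f) r(s_f, a_f)]$, which is a real number as soon as $B^{\star}_i r$ is $\rho$-integrable. A vector of $d$ finite real coordinates is an element of $\R^d$, so $z_r \in \gZ$. An alternative route is to approximate by simple functions: each simple-function integral is a finite linear combination and so lies in $\R^d$ by the discrete case, and $\R^d$ is closed (complete), so the limit also lies in $\R^d$.

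The main obstacle is that the statement implicitly needs the expectation to exist. I would state explicitly the mild assumption that $B^{\star} r \in L^1(\rho)$, which holds for instance when $r$ is bounded and $B^{\star}$ is $\rho$-integrable, and which is automatic in the discrete case. Under that assumption, finiteness of every coordinate is immediate and the proof is complete.
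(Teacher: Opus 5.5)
Your proposal is correct and follows the paper's route: the paper applies Lemma~\ref{lemma:closeness-of-real-space} to the backward representations and extends closure under linear combinations to the expectation, which matches your discrete-case induction followed by the passage to the integral. Your explicit assumption $B^{\star} r \in L^1(\rho)$ is a sensible refinement that the paper leaves implicit, since in the continuous case the expectation must exist for $z_r$ to be defined at all.
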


This corollary enables the FB algorithm to first pre-train a set of latent-conditioned policies and then use the optimal latent variable to index the optimal policy for any reward function.

\section{Theoretical Analysis}
\label{appendix:theoretical-analysis}

\subsection{Existence of the Ground-Truth FB Representations}
\label{appendix:fb-repr-existence}

Before proving the existence of FB representations, we define some special notations for the latent space and use matrices to simplify our derivations. Specifically, we will consider the latent space $\gZ = \R^d$ as both a latent manifold and a set of latent variables containing every vector in $\R^d$: $\gZ = \{z_1, \cdots, z_{\abs{\gZ}} \}$. Although the size of the latent space is infinite, we will consider $\abs{\gZ}$ as a finite number and take the limit to infinity ($\abs{\gZ} \to \infty$). We will only use this notation to simplify our theoretical analysis.

For any forward representation function $F: \gS \times \gA \times \gZ \to \gZ$ and any backward representation function $B: \gS \times \gA \to \gZ$, we stack the backward representations $B(\ph{s_f}, \ph{a_f})$ into a matrix $B \in \R^{d \times \abs{\gS \times \gA}}$:
\begin{align}
    B = \begin{bmatrix}
        B(s_1, a_1) & \cdots & B(s_{\abs{\gS}}, a_1) & \cdots & B(s_{\abs{\gS}}, a_{\abs{\gA}})
    \end{bmatrix}.
    \label{eq:backward-repr-matrix}
\end{align}
The forward representations induce the latent-conditioned policy $\pi: \gS \times \gZ \to \Delta(\gA)$ (Eq.~\ref{eq:fb-policy}). These policies maximize the inner products between the forward representation and the corresponding latent variable: a delta measure (indicator function) around the maximizer.
\begin{align}
    \pi(a \mid s, z) &= \delta \left( a \, \left\lvert \, \argmax_{a \in \gA} F(s, a, z)^{\top} z \right. \right) \nonumber \\
    &= \mathbbm{1}_{\argmax_{a \in \gA} F(s, a, z)^{\top} z} \left( a \right).
    \label{eq:latent-conditioned-policy}
\end{align}
For different latent $z_i \in \gZ$, the policy $\pi(\ph{a} \mid \ph{s}, z_i)$ induces a successor measure matrix $M^{\pi}_i \in \R^{\abs{\gS \times \gA} \times \abs{\gS \times \gA}}$ as computed in Lemma~\ref{lemma:successor-measure-expr-and-rank}: for all $i = 1, \cdots, \abs{\gZ}$,
\begin{align}
    M_i^{\pi} = (1 - \gamma) \left( I_{\abs{\gS \times \gA}} - \gamma P^{\pi(\ph{a} \mid \ph{s}, z_i)} \right)^{-1}.
    \label{eq:latent-conditioned-successor-measure-matrix}
\end{align}
We also aggregate all the $M^{\pi}_i$'s into a single matrix $M^{\pi}_{\gZ} \in \R^{\abs{\gZ \times \gS \times \gA} \times \abs{\gS \times \gA}}$:
\begin{align}
    M^{\pi}_{\gZ} = \begin{bmatrix}
        M^{\pi}_1 \\ \vdots \\ M^{\pi}_{\abs{\gZ}}
    \end{bmatrix}.
    \label{eq:successor-measure-matrix}
\end{align}
Similarly, for each $z_i$, we stack the latent-conditioned forward representations $F(\ph{s}, \ph{a}, z_i)$ into a matrix $F_i \in \R^{\abs{\gS \times \gA} \times d}$: for $i = 1, \cdots, \abs{\gZ}$,
\begin{align}
    F_i = \begin{bmatrix}
        F(s_1, a_1, z_i)^{\top} \\
        \vdots \\
        F(s_1, a_{\abs{\gA}}, z_i)^{\top} \\
        \vdots \\
        F(s_{\abs{\gS}}, a_{\abs{\gA}}, z_i)^{\top} \\ 
    \end{bmatrix},
    \label{eq:latent-conditioned-forward-repr-matrix}
\end{align}
and also aggregate all the $F_i$'s into a single matrix $F^{\pi}_{\gZ} \in \R^{\abs{\gZ \times \gS \times \gA} \times d}$:
\begin{align}
    F_{\gZ} = \begin{bmatrix}
        F(s_1, a_1, z_1)^{\top} \\
        \vdots \\
        F(s_1, a_{\abs{\gA}}, z_1)^{\top} \\
        \vdots \\
        F(s_{\abs{\gS}}, a_{\abs{\gA}}, z_1)^{\top} \\ 
        \vdots \\
        F(s_{\abs{\gS}}, a_{\abs{\gA}}, z_{\abs{\gZ}})^{\top} \\ 
    \end{bmatrix} = \begin{bmatrix} 
        F_1 \\
        \vdots \\
        F_{\abs{\gZ}}
    \end{bmatrix}.
    \label{eq:forward-repr-matrix}
\end{align}
Finally, given a marginal probability measure (probability mass) $\rho \in \Delta(\gS \times \gA)$ with full support on $\gS \times \gA$ and a reward function $r: \gS \times \gA \to \R$, with slight abuse of notation, we stack them in a marginal measure vector $\rho \in \R^{\abs{\gS \times \gA}}$ and a reward vector $r \in \R^{\abs{\gS \times \gA}}$, respectively:
\begin{align}
    \rho = \begin{bmatrix}
        \rho \left( s_1, a_1 \right) \\ \vdots \\ \rho \left( s_1, a_{\abs{\gA}} \right) \\ \vdots \\ \rho \left( s_{\abs{\gS}}, a_{\abs{\gA}} \right)
    \end{bmatrix}, \qquad r = \begin{bmatrix}
        r\left( s_1, a_1 \right) \\ \vdots \\ r\left( s_1, a_{\abs{\gA}} \right) \\ \vdots \\ r\left( s_{\abs{\gS}}, a_{\abs{\gA}} \right)
    \end{bmatrix}.
    \label{eq:rho-vec-and-reward-vec}
\end{align}

Defining these matrices allows us to simplify the notation and denote the FB representation learning procedure using linear algebra. For example, the successor measure ratio identity in Definition~\ref{def:fb-definition} and the optimal latent adaptation in Definition~\ref{def:fb-adaptation} can be written as
\begin{align*}
    M^{\pi}_{\gZ} \text{diag}(\rho)^{-1} = F_{\gZ}^{\star} B^{\star}, \qquad z_r = B^{\star}( r \odot \rho),
\end{align*}
where $\text{diag}(\rho) \in \R^{\abs{\gS \times \gA} \times \abs{\gS \times \gA}}$ is the diagonal matrix of the marginal measure vector $\rho$ and $\odot$ denotes the element-wise multiplication. 
In addition, by the relationship between the successor measure and the Q-value (Eq.~\ref{eq:relate-sr-to-q}), we can write the Q-value for a latent-conditioned policy $\pi(\ph{a} \mid \ph{s}, z_i)$ as a vector in $\R^{\abs{\gS \times \gA}}$: $Q_i^{\pi} = M^{\pi}_i r$.

We can now constrain the rank of the forward representation matrix $\text{rank} \left( F_{\gZ} \right)$, the rank of the backward representation matrix $\text{rank}(B)$, and the rank of the product of the forward-backward representation matrices $\text{rank} \left( F_{\gZ} B \right)$ by matrix dimensions.
\begin{remark}
    \label{remark:fb-rank}
    The rank of any forward representation matrix satisfies $\text{rank}(F_\gZ) \leq \text{min} \left( \abs{\gZ \times \gS \times \gA}, d \right)$. The rank of any backward representation matrix satisfies $\text{rank}(B) \leq \text{min}\left(d, \abs{\gS \times \gA} \right)$. The rank of the product of the forward-backward representation matrices satisfies $\text{rank}\left( F_{\gZ} B \right) \leq \text{min} \left( \text{rank}(F_{\gZ}), \text{rank}(B) \right)$.
\end{remark}
Using these constraints, we formally prove the existence of FB representations.

\begin{repproposition}{prop:fb-existence}
    Given any discrete CMP, a latent space $\gZ = \{z_1, \cdots, z_{\abs{\gZ}} \}$ with each $z_i \in \R^d$, and any marginal probability measure vector $\rho \in \R^{\abs{\gS \times \gA}}$ (Eq.~\ref{eq:rho-vec-and-reward-vec}), any forward representation matrix $F^{\star}_{\gZ} \in \R^{ \abs{\gZ \times \gS \times \gA} \times d }$ (Eq.~\ref{eq:forward-repr-matrix}), which induces the latent-conditioned policy $\pi: \gS \times \gZ \to \Delta(\gA)$ (Eq.~\ref{eq:latent-conditioned-policy}) with associated successor measure matrix $M^{\pi}_{\gZ}$ (Eq.~\ref{eq:successor-measure-matrix}), and any backward representation matrix $B^{\star} \in \R^{d \times \abs{\gS \times \gA}}$ (Eq.~\ref{eq:backward-repr-matrix}) that encodes this CMP's successor measure as,
    \begin{enumerate}
        \item $F_\gZ^{\star}$ and $B^{\star}$ fit the successor measure ratio (Definition~\ref{def:fb-definition}):
        \begin{align}
            M^{\pi}_{\gZ} \text{diag}(\rho)^{-1} = F^{\star}_{\gZ} B^{\star},
            \label{eq:matrix-fb-definition}
        \end{align}
        
        \item $F_\gZ^{\star}$ and $B^{\star}$ enable optimal policy adaptation (Definition~\ref{def:fb-adaptation}): for any reward vector $r \in \R^{\abs{\gS \times \gA}}$ (Eq.~\ref{eq:rho-vec-and-reward-vec}),
        \begin{align}
            z_r = B^{\star}(r \odot \rho) \in \gZ,
            \label{eq:matrix-fb-adaptation}
        \end{align}
        indexes the optimal policy $\pi^{\star}_r(\ph{a} \mid \ph{s}) = \pi(\ph{a} \mid \ph{s}, z_r) = \argmax_{\ph{a}} F^{\star}(\ph{s}, \ph{a}, z_r)^{\top} z_r$ and the optimal Q-value $Q_r^{\star} = Q_{z_r} = F^{\star}_{z_r} z_r$,
    \end{enumerate}
    must satisfy the following properties:
    \begin{enumerate}
        \item The representation dimension $d$ is at least $\abs{\gS \times \gA}$, i.e., $d \geq \abs{\gS \times \gA}$.
        \item The rank of the forward representation matrix $F^{\star}_{\gZ}$ is at least $\abs{\gS \times \gA}$ and at most $d$, i.e., $\abs{\gS \times \gA} \leq \text{rank}\left( F_{\gZ}^{\star} \right) \leq d$.
        \item The rank of the backward representation matrix $B^{\star}$ is equivalent to $\abs{\gS \times \gA}$, i.e., $\text{rank} \left( B^{\star} \right) = \abs{\gS \times \gA}$.
        \item For different latents $z_i (i = 1, \cdots, \abs{\gZ})$, the backward representation matrix $B^{\star}$, the forward representation matrix for each latent $F_i^{\star}$ (Eq.~\ref{eq:latent-conditioned-forward-repr-matrix}), and the successor measure matrix for each latent $M^{\pi}_i$ (Eq.~\ref{eq:latent-conditioned-successor-measure-matrix}) must satisfy:  
        \begin{align*}
            B^{\star} = F^{\star +}_1 M^{\pi}_1 \text{diag}(\rho)^{-1} = F^{\star +}_2 M^{\pi}_2 \text{diag}(\rho)^{-1} = \cdots = F^{\star +}_{\abs{\gZ}} M^{\pi}_{\abs{\gZ}} \text{diag}(\rho)^{-1},
        \end{align*}
        where $\ph{X}^+$ denotes the pseudoinverse (Moore–Penrose inverse)~\citep{moore1920reciprocal, Bjerhammar1951ApplicationOC} of the matrix $\ph{X}$ and $\text{diag}(\ph{x})$ is the diagonal matrix of the vector $\ph{x}$.
    \end{enumerate}
    \begin{proof} The main idea of our proof is to match the rank of the FB representation matrices $F^{\star}_{\gZ}, B^{\star}$ to the rank of the successor measure matrix $M^{\pi}_{\gZ}$. 
    
    \paragraph{Rank matching.} Since we aim to find a forward representation matrix $F^{\star}_\gZ$ and a backward representation matrix $B^{\star}$ fitting the successor measure ratio exactly, it is necessary to match the rank on both sides of Eq.~\ref{eq:matrix-fb-definition}: 
    \begin{align*}
        \text{rank} \left( M^{\pi}_{\gZ} \right) = \text{rank} \left( M^{\pi}_{\gZ} \text{diag}(\rho)^{-1} \right) = \text{rank}(F_{\gZ}^{\star} B^{\star}).
    \end{align*}
    Applying Lemma~\ref{lemma:successor-measure-expr-and-rank} to the successor measure of each latent $z_i \in \gZ$ gives us $\text{rank}(M^{\pi}_i) = \abs{\gS \times \gA}$. Meanwhile, because the successor measure matrix $M^{\pi}_{\gZ}$ is an aggregation of each $M^{\pi}_i$ (Eq.~\ref{eq:successor-measure-matrix}), we have $\text{rank}(M^{\pi}_{\gZ}) = \abs{\gS \times \gA}$, suggesting that the rank of the product of the FB representation matrices must be equivalent to $\abs{\gS \times \gA}$:
    \begin{align*}
        \text{rank}(M^{\pi}_{\gZ}) = \text{rank} \left( F^{\star}_{\gZ} B^{\star} \right) = \abs{\gS \times \gA}.
    \end{align*}
    
     We next constrain the rank of the forward representation matrix $F^{\star}_{\gZ}$ and the rank of the backward representation matrix $B^{\star}$ by using the properties in Remark~\ref{remark:fb-rank}. Since the rank of the product of the forward-backward representation matrices is at most the rank of either representation matrix, we have
     \begin{align*}
         &\hspace{2.725em} \text{rank} \left( F^{\star}_{\gZ} B^{\star} \right) \leq \min \left( \text{rank} \left(F^{\star}_{\gZ} \right), \text{rank} \left( B^{\star} \right) \right) \\ 
         &\implies \text{rank} \left( F^{\star}_{\gZ} B^{\star} \right) \leq \text{rank} \left(F^{\star}_{\gZ} \right) \quad \text{and} \quad \text{rank} \left( F^{\star}_{\gZ} B^{\star} \right) \leq \text{rank} \left(B^{\star} \right).
    \end{align*}
    Plugging in the rank of the product of FB representation matrices $\text{rank}(F^{\star}_{\gZ} B^{\star}) = \abs{\gS \times \gA}$ and the constraints for the rank of the forward representation matrix $\text{rank}(F^{\star}_{\gZ})$ and the rank of the backward representation matrix $\text{rank}(B^{\star})$, we have
    \begin{align}
        \abs{\gS \times \gA} \leq \text{rank}(F^{\star}_{\gZ}) \leq \min \left( \abs{\gZ \times \gS \times \gA}, d \right) \quad &\text{and} \quad \abs{\gS \times \gA} \leq \text{rank}(B^{\star}) \leq \min \left( d, \abs{\gS \times \gA} \right) \nonumber \\
        \implies \abs{\gS \times \gA} \leq \text{rank}(F^{\star}_{\gZ}) \leq \min \left( \abs{\gZ \times \gS \times \gA}, d \right) \quad &\text{and} \quad \abs{\gS \times \gA} \leq d, \, \text{rank}(B^{\star}) = \abs{\gS \times \gA} \nonumber \\
        \overset{(a)}{\implies} \abs{\gS \times \gA} \leq \text{rank}(F^{\star}_{\gZ}) \leq d \quad &\text{and} \quad \abs{\gS \times \gA} \leq d, \, \text{rank}(B^{\star}) = \abs{\gS \times \gA},
        \label{eq:forward-backward-matrices-rank-constraints}
     \end{align}
     where we simplify the inequalities in \emph{(a)} when $\abs{\gZ} \to \infty$. 
     
    These results suggest that the rank constraints on the FB representation matrices: 
     \begin{itemize}
         \item The rank of the forward representation matrix is at least $\abs{\gS \times \gA}$ (not necessarily full rank).
         \item The rank of the backward representation matrix is equivalent to $\abs{\gS \times \gA}$ (\emph{full} rank).
     \end{itemize}
     In addition, the necessary condition for the existence of ground-truth FB representation matrices is that the representation dimension is at least $\abs{\gS \times \gA}$, i.e., $d \geq \abs{\gS \times \gA}$. Importantly, these are three individual conditions for the representation dimension $d$, the forward representation matrix $F^{\star}_{\gZ}$, and the backward representation matrix $B^{\star}$, respectively. However, they still fail to guarantee that the product of the FB representation matrices $F^{\star}_{\gZ} B$ will fit $M^{\pi}_{\gZ} \text{diag}(\rho)^{-1}$ exactly. We need further relationships to bridge $F^{\star}_{\gZ}$ and $B^{\star}$.

     \paragraph{Bridging the FB representation matrices.} Our key observations are twofold. \emph{First}, the backward representation matrix $B^{\star}$ does not take any latent variable as input, indicating that $B$ compresses the common information throughout the entire latent space. \emph{Second}, the rank matching not only holds for the entire $F^{\star}_{\gZ}$ and $M^{\star}_{\gZ}$ matrices, but also holds for the forward representation matrix of each latent $F_i^{\star}$ (Eq.~\ref{eq:latent-conditioned-successor-measure-matrix}) and the successor measure ratio of each latent $M^{\pi}_i$ (Eq.~\ref{eq:latent-conditioned-forward-repr-matrix}). We next discuss the meaning of these two observations.
     
     When the ground-truth FB representation matrices exist, using the block matrix notations in Eq.~\ref{eq:successor-measure-matrix} and Eq.~\ref{eq:forward-repr-matrix} to rewrite Eq.~\ref{eq:matrix-fb-definition} gives us
     \begin{align*}
         \begin{bmatrix}
            M^{\pi}_1 \\
            \vdots \\
            M^{\pi}_{\abs{\gZ}}
         \end{bmatrix} \text{diag}(\rho)^{-1} &= \begin{bmatrix}
            F^{\star}_1 \\
            \vdots \\
            F^{\star}_{\abs{\gZ}}
         \end{bmatrix} B^{\star} \\
         \implies M^{\pi}_1 \text{diag}(\rho)^{-1} = F^{\star}_1 B^{\star}, \, M^{\pi}_2 \text{diag}(\rho)^{-1} &= F^{\star}_2 B^{\star}, \, \cdots, \, M^{\pi}_{\abs{\gZ}} \text{diag}(\rho)^{-1} = F^{\star}_{\abs{\gZ}} B^{\star}.
     \end{align*}
     Furthermore, since, for $i = 1, \cdots, \abs{\gZ}$, each successor measure $M^{\pi}_i \in \R^{\abs{\gS \times \gA} \times \abs{\gS \times \gA}}$ is a square matrix with rank $\text{rank}(M^{\pi}_i) = \abs{\gS \times \gA}$, the column space of each successor measure is equivalent to the $\abs{\gS \times \gA}$-dimensional real space:
     \begin{align*}
         \text{col} \left( M^{\pi}_i \right) = \R^{\abs{\gS \times \gA}}.
     \end{align*} We note that the marginal probability measure vector $\rho$ does not change the column space of $M^{\pi}_i$ because $\rho$ has the full support over $\gS \times \gA$: $\text{col} \left( M^{\pi}_i \text{diag}(\rho)^{-1} \right) = \text{col} \left( M^{\pi}_i \right)$. Meanwhile, since the rank of the entire forward representation matrix $\text{rank}(F^{\star}_{\gZ})$ is at least $\abs{\gS \times \gA}$, we know that the rank of each $\text{rank}(F_i^{\star})$ is also at least $\abs{\gS \times \gA}$. By the shape of matrix $F_i^{\star}$, this observation indicates that, for $i = 1, \cdots, \abs{\gZ}$,
     \begin{align*}
         \abs{\gS \times \gA} \leq \text{rank}(F_i^{\star}) \leq \min \left( \abs{\gS \times \gA}, d \right) \quad \text{and} \quad d \geq \abs{\gS \times \gA} \implies \text{rank}(F_i^{\star}) = \abs{\gS \times \gA}.
     \end{align*}
     Thus, the column space of each forward representation matrix $F_i^{\star}$ is also equivalent to the $\abs{\gS \times \gA}$-dimensional real space:
     \begin{align*}
         \text{col}(F_i^{\star}) = \R^{\abs{\gS \times \gA}}.
     \end{align*}
     Therefore, by the definition of the pseudoinverse of a matrix, we have
     \begin{align*}
         \text{col} \left( M^{\pi}_i \text{diag}(\rho)^{-1} \right) = \text{col} \left( M^{\pi}_i \right) = \text{col}(F_i^{\star}) \implies F_i^{\star} F_i^{\star +} M^{\pi}_i \text{diag}(\rho)^{-1} = M^{\pi}_i \text{diag}(\rho)^{-1},
     \end{align*}
     where $F_i^{\star +}$ is the pseudoinverse of matrix $F_i^{\star}$. 
     
     These intriguing observations help us find the additional conditions for the existence of ground-truth FB representation matrices: the ground-truth backward representation matrix $B^{\star}$ must be shared by each forward representation matrix $F^{\star}_i$ and each successor measure matrix $M^{\pi}_i$ as 
     \begin{align*}
          B^{\star} = F_1^{\star +} M^{\pi}_1 \text{diag}(\rho)^{-1} = F_2^{\star +} M^{\pi}_2 \text{diag}(\rho)^{-1} = \cdots = F_{\abs{\gZ}}^{\star +} M^{\pi}_{\abs{\gZ}} \text{diag}(\rho)^{-1}.
     \end{align*}
     Importantly, the ground-truth FB representation matrices are not unique because we can multiply both $F_{i}^{\star}$ and $B^{\star}$ by the same orthonormal rotation matrix $Q_{\text{rot}}$ to recover the same product.
     
     \paragraph{Verifying the optimal policy adaptation.} Until now, we have only focused on discussing the conditions for the existence of ground-truth FB representation matrices $F^{\star}_\gZ, B$ that fit the successor measure ratio $M^{\pi}_{\gZ} \text{diag}(\rho)^{-1}$. It remains unclear whether this pair of FB representation matrices will enable optimal policy adaptation (Eq.~\ref{eq:matrix-fb-adaptation}). We next prove that the latent variable $z_r$ recovers the optimal policy and the optimal Q-value for any reward vector $r \in \R^{\abs{\gS \times \gA}}$. 
     
     Formally, given the FB representation matrices $F^{\star}_\gZ, B$ and the reward vector $r$, we denote the forward representation matrix for the latent variable $z_r$ as $F_{z_r} \in \R^{\abs{\gS \times \gA} \times \abs{\gS \times \gA}}$ (an example of Eq.~\ref{eq:latent-conditioned-forward-repr-matrix}). This forward representation matrix $F^{\star}_{z_r}$ induces a latent-conditioned policy $\pi(\ph{a} \mid \ph{s}, z_r)$ with the associated successor measure matrix $M^{\pi}_{z_r} \in \R^{\abs{\gS \times \gA} \times \abs{\gS \times \gA}}$. Together with the latent variable $z_r$, the forward representation matrix $F^{\star}_{z_r}$ and the successor measure matrix $M^{\pi}_{z_r}$ satisfy
     \begin{align*}
         F^{\star}_{z_r} z_r &= F^{\star}_{z_r} B^{\star} (r \odot \rho) \\
         &\overset{(a)}{=} M^{\pi}_{z_r} \text{diag}(\rho)^{-1} (r \odot \rho) \\
         &= M^{\pi}_{z_r} r \\
         &\overset{(b)}{=} Q_{z_r},
     \end{align*}
     where, in \emph{(a)}, we apply the definition of the inverse of a diagonal matrix and the definition of elementwise product, and, in \emph{(b)}, we apply the definition of the Q-value vector. Since the latent-conditioned policy $\pi(\ph{a} \mid \ph{s}, z_r)$ is maximizing the inner product $F^{\star}(\ph{s}, \ph{a}, z_r)^{\top} z_r$ (Eq.~\ref{eq:latent-conditioned-policy}), we have $\pi(\ph{a} \mid \ph{s}, z_r) = \argmax Q_{z_r}(\ph{s}, \ph{a})$. By definition, $Q_{z_r}$ is the optimal Q-value vector for the reward vector $r$ with the optimal policy $\pi(\ph{a} \mid \ph{s}, z_r)$.
    \end{proof}
\end{repproposition}

\subsection{Low-Rank Approximation Incurs Arbitrary Errors}
\label{appendix:low-rank-arbitrary-err}

\begin{repcorollary}{corollary:low-rank-arbitrary-error}
    Given any marginal probability measure vector $\rho \in \R^{\abs{\gS \times \gA}}$ and a representation dimension $d < \abs{\gS \times \gA}$, the FB representation matrices $F_\gZ \in \R^{ \abs{\gZ \times \gS \times \gA} \times d }$ and $B \in \R^{d \times \abs{\gS \times \gA}}$ learned by the FB algorithm is a low-rank approximation of the successor measure ratio $M^{\pi}_\gZ \text{diag}(\rho)^{-1} \in \R^{\abs{\gS \times \gA}}$. 
    
    For any reward vector $r \in \R^{\abs{\gS \times \gA}}$, this low-rank approximation induces the latent variable $z_r = B (r \odot \rho) \in \R^{d}$ and the optimal Q-value prediction $Q_{z_r} = F_{z_r} z_r \in \R^{\abs{\gS \times \gA}}$. Denote the error in the optimal Q-value prediction as
    \begin{align*}
        \epsilon(r) = \norm*{Q^{\star}_r - F_{z_r} z_r}_{\infty}.
    \end{align*}
    Then, for any $c > 0$, there exists a reward vector $r_{\text{null}} \in \R^{\abs{\gS \times \gA}}$ in the null space of $B$ such that $\epsilon(r_{\text{null}}) \geq c$.

    \begin{proof}
        From Proposition~\ref{prop:fb-existence}, we know that the rank of the successor measure matrix $M^{\pi}_{\gZ}$ is $\abs{\gS \times \gA}$. When $d < \abs{\gS \times \gA}$, the FB representation matrices produce a low-rank approximation on the successor measure ratio:
        \begin{align*}
            \text{rank}(F_\gZ B) \leq d < \abs{\gS \times \gA} = \text{rank}( M^{\pi}_\gZ \text{diag}(\rho)^{-1} ).
        \end{align*}
        In this case, the rank of the backward representation matrix satisfies
        \begin{align*}
            \text{rank}(B) \leq \min\left( d, \abs{\gS \times \gA} \right) = d < \abs{\gS \times \gA}.
        \end{align*}
        Thus, the backward representation matrix is \emph{not} full column rank, suggesting that there exists a \emph{non-zero} reward vector $r_{\text{null}} \in \R^{\abs{\gS \times \gA}}$ in the null space of $B$ that induces a zero latent variable: 
        \begin{align*}
            z_{r_{\text{null}}} = B(r_{\text{null}} \odot \rho) = \begin{bmatrix}
                0 \\ 
                \vdots \\
                0
            \end{bmatrix} \in \R^{d}.
        \end{align*}
        For this zero latent variable, we always have $F_{z_{r_{\text{null}}}} z_{r_{\text{null}}} = 0$. However, the optimal Q-value for the reward $r_{\text{null}}$ is not necessarily zero. Therefore, the optimal Q-value prediction error $\epsilon(r_{\text{null}})$ can be arbitrarily large by scaling the non-zero entries in $r_{\text{null}}$. We conclude that for any $c > 0$, there exists a reward vector $r_{\text{null}}$ such that $\epsilon(r_{\text{null}}) \geq c$. In other words, the errors in the optimal Q-value prediction can be arbitrarily large.
    \end{proof}
\end{repcorollary}

\subsection{Equivariant to Affine Transformations of Rewards}
\label{appendix:equivariant-to-affine-r}

We first prove the equivariant property of the Q-value for any positive affine transition under any policy. We will then use this property to derive the equivariant property of the ground-truth forward representations in FB.

\begin{lemma}
    \label{lemma:q-equivariant}
    Given a policy $\pi: \gS \to \Delta(\gA)$, a reward function $r: \gS \times \gA \to \R$, a positive scalar $\nu > 0$, and an offset $\xi \in \R$, for a state-action pair $(s, a)$, let the transformed reward be $\nu r(s, a) + \xi$. Then, the Q-value of the reward function $r$ and the Q-value of the reward $\nu r + \xi$ satisfies $Q^{\pi}_{\nu r + \xi}(s, a) = \nu Q^{\pi}_r (s, a) + \xi$.
\end{lemma}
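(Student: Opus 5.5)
The plan is to prove the identity directly from the successor-measure representation of the Q-value in Eq.~\ref{eq:relate-sr-to-q}, using linearity of expectation. For a fixed $(s,a)$ we write
\begin{align*}
    Q^{\pi}_{\nu r + \xi}(s, a) = \E_{(s_f, a_f) \sim M^{\pi}(s_f, a_f \mid s, a)}\left[ \nu r(s_f, a_f) + \xi \right] = \nu \, \E_{M^{\pi}(s_f, a_f \mid s, a)}\left[ r(s_f, a_f) \right] + \xi \int M^{\pi}(s_f, a_f \mid s, a) \, \mathrm{d}(s_f, a_f).
\end{align*}
The first term is $\nu Q^{\pi}_r(s, a)$ by Eq.~\ref{eq:relate-sr-to-q} again. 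The statement therefore reduces to showing that $M^{\pi}(\cdot \mid s, a)$ has total mass one. The definition $M^{\pi}: \gS \times \gA \to \Delta(\gS \times \gA)$ already asserts this, but we will verify it rather than rely on it, because the $(1-\gamma)$ normalization is what makes the offset come out as $\xi$ rather than $\xi/(1-\gamma)$.

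For the normalization in the discrete case, we use Lemma~\ref{lemma:successor-measure-expr-and-rank}: $M^{\pi} = (1-\gamma)(I - \gamma P^{\pi})^{-1}$. Since each row of $P^{\pi}$ is a probability vector, $P^{\pi}\1 = \1$, so $(I - \gamma P^{\pi})\1 = (1-\gamma)\1$. Inverting gives $(I-\gamma P^{\pi})^{-1}\1 = \1/(1-\gamma)$, and hence $M^{\pi}\1 = \1$. That is, every row of $M^{\pi}$ sums to one.

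For general (continuous) CMPs, we integrate the Bellman equation Eq.~\ref{eq:successor-measure-bellman-eq} over $(s_f, a_f)$. Writing $m(s,a)$ for the total mass, this gives $m(s,a) = (1-\gamma) + \gamma \E_{p(s'\mid s,a)\pi(a'\mid s')}[m(s',a')]$. This is a $\gamma$-contraction in sup norm, so it has a unique bounded fixed point, and the constant $1$ satisfies it. Hence $m \equiv 1$.

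The only step needing care is this normalization, specifically justifying that $m$ is bounded so the uniqueness argument applies. If needed, we would instead use the Neumann-series expression $M^{\pi} = (1-\gamma)\sum_t \gamma^t (P^{\pi})^t$, in which each term has mass $(1-\gamma)\gamma^t$, and sum the geometric series to get $1$. Combining the two terms then yields $Q^{\pi}_{\nu r+\xi} = \nu Q^{\pi}_r + \xi$. The argument never uses $\nu > 0$, so the identity holds for any real $\nu$; positivity matters only later, for preserving the argmax in Proposition~\ref{prop:fb-cylinder-proj}.
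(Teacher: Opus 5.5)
Your proof is correct and is essentially the paper's one-line argument (the $Q$-value is linear in the reward), expressed through the successor-measure form Eq.~\ref{eq:relate-sr-to-q} instead of the paper's discounted-sum form. Your explicit check that $M^{\pi}(\cdot \mid s, a)$ has unit mass is exactly the $(1-\gamma)$ normalization the paper's sketch leaves implicit, and it is what makes the offset $\xi$ rather than $\xi/(1-\gamma)$; you are also right that $\nu > 0$ is needed only later, to preserve the optimal policy in Proposition~\ref{prop:fb-cylinder-proj}.
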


The one-line proof of this lemma will use the definition of Q-value, which is a sum of cumulative discounted rewards, and apply the affine transformation to each reward in that summation. Now, for the ground-truth forward representations, we also have the equivariant property:

\begin{repproposition}{prop:fb-cylinder-proj}
    For a state-action pair $(s, a)$, a reward function $r: \gS \times \gA \to \R$, a positive scalar $\nu > 0$, an offset $\xi \in \R$, the ground-truth FB representation functions $F^{\star}: \gS \times \gA \times \gZ \to \gZ$, $B^{\star}: \gS \times \gA \to \gZ$, and a marginal measure $\rho \in \Delta( \gS \times \gA )$, let the transformed reward function be $\nu r(s, a) + \xi$. Then, $z_r$ is the latent variable indexing the optimal Q-value for the reward $r$, and $z_{\nu r + \xi}$ is the latent variable indexing the optimal Q-value for the reward $\nu r + \xi$. Furthermore, the ground-truth forward-backward representations are invariant to the affine transformation with positive scaling in latent variables, i.e., $F^{\star}(s, a, z_{\nu r  + \xi}) = F^{\star}(s, a, z_r)$.
    \begin{proof}
        By Eq.~\ref{eq:fb-latent-of-reward} in Definition~\ref{def:fb-adaptation}, we can write $z_r$ and $z_{\nu r + \xi}$ as
        \begin{align}
            z_r &= \int_{\gS \times \gA} B^{\star}(s_f, a_f) r(s_f, a_f) \rho(s_f, a_f) ds_f da_f \nonumber \\
            z_{\nu r + \xi} &= \int_{\gS \times \gA} B^{\star}(s_f, a_f) (\nu r(s_f, a_f) + \xi) \rho(s_f, a_f) ds_f da_f \nonumber \\
            &= \nu \int_{\gS \times \gA} B^{\star}(s_f, a_f) r(s_f, a_f) \rho(s_f, a_f) ds_f da_f + \xi \int_{\gS \times \gA} B^{\star}(s_f, a_f) \rho(s_f, a_f) ds_f da_f \nonumber \\
            &= \nu z_r + \xi z_{\text{one}},
            \label{eq:fb-affine-latent}
        \end{align}
        where we denote the latent variable for a reward that consistently equals $1$ ($r(\ph{s}, \ph{a}) = 1$) as $z_{\text{one}}$. One intriguing property of the latent variable $z_{\text{one}}$ is that, for any other latent variable $z$
        \begin{align}
            F^{\star}(s, a, z)^{\top} z_{\text{one}} &= \int_{\gS \times \gA} F^{\star}(s, a, z)^{\top} B^{\star}(s_f, a_f) \rho(s_f, a_f) d s_f d a_f \nonumber \\
            &\overset{(a)}{=} \int_{\gS \times \gA} \frac{M^{\pi}(s_f, a_f \mid s, a, z)}{\rho(s_f, a_f)} \cdot \rho(s_f, a_f) ds_f da_f \nonumber \\
            &= \int_{\gS \times \gA} M^{\pi}(s_f, a_f \mid s, a, z) ds_f da_f \nonumber \\
            &\overset{(b)}{=} 1,
            \label{eq:fb-latent-one}
        \end{align}
        where, in \emph{(a)}, we use the definition of ground-truth FB representations in Eq.~\ref{eq:fb-sm-ratio}, and, in \emph{(b)}, we apply the definition of the successor measure.
        Since $z_r$ indexes the optimal Q-value for the reward $r$ and $z_{\nu r + \xi}$ indexes the optimal Q-value for the reward $\nu r + \xi$, we have
        \begin{align*}
            Q^{\star}_r(s, a) &= F^{\star}(s, a, z_r)^{\top} z_r \\
            Q^{\star}_{\nu r + \xi}(s, a) &= F^{\star}(s, a, z_{\nu r + \xi})^{\top} z_{\nu r + \xi} \\
            &\overset{(a)}{=} \nu F^{\star}(s, a, \nu z_r + \xi z_{\text{one}})^{\top} z_r + \xi F^{\star}(s, a, \nu z_r + \xi z_{\text{one}})^{\top} z_{\text{one}} \\
            &\overset{(b)}{=} \nu F^{\star}(s, a, z_{\nu r + \xi})^{\top} z_r + \xi,
        \end{align*}
        where, in \emph{(a)}, we apply the relationship in Eq.~\ref{eq:fb-affine-latent}, and, in \emph{(b)}, we apply the property of $z_{\text{one}}$ in Eq.~\ref{eq:fb-latent-one}. Finally, since an affine transformation with positive scaling does not change the optimal policy~\citep{russell1995modern, ng1999policy}, the Q-value $Q^{\star}_r$ and the Q-value $Q^{\star}_{\nu r + \xi}$ satisfy Lemma~\ref{lemma:q-equivariant}. Using the conclusion from Lemma~\ref{lemma:q-equivariant}, we have
        \begin{align*}
            Q^{\star}_{\nu r + \xi}(s, a) &= \nu Q^{\star}_r(s, a) + \xi \\
            \implies \nu F^{\star}(s, a, z_{\nu r + \xi})^{\top} z_r + \xi &= \nu F^{\star}(s, a, z_{r})^{\top} z_r + \xi \\
            \implies F^{\star}(s, a, z_{\nu r + \xi}) &= F^{\star}(s, a, z_{r}),
        \end{align*}
        where the last identity holds because $\nu > 0$ and $\xi \in \R$ are arbitrary.
    \end{proof}
\end{repproposition}

\subsection{Deriving the FB Representation Learning Objective}
\label{appendix:FB-repr-loss}

We derive the LSIF loss for the FB algorithm that learns forward-backward representations in a temporal-difference manner. First, we replace the successor measure in Eq.~\ref{eq:mc-fb} using the recursive Bellman equation in Eq.~\ref{eq:successor-measure-bellman-eq}, decomposing the ratio $M^{\pi}(\ph{s_f}, \ph{a_f} \mid \ph{s}, \ph{a}, \ph{z}) / \rho(\ph{s_f}, \ph{a_f})$ into a convex combination (with weight $\gamma$) between the in-place ratio $\delta(\ph{s_f}, \ph{a_f} \mid \ph{s}, \ph{a}) / \rho(\ph{s_f}, \ph{a_f})$ and the ratio at the next time step $M^{\pi}(\ph{s_f}, \ph{a_f} \mid \ph{s'}, \ph{a'}, \ph{z}) / \rho(\ph{s_f}, \ph{a_f})$: 

\begin{align*}
    \frac{1}{2} \E_{\substack{p(s, a, z), \, \rho(s_f, a_f) \\ p(s' \mid s, a), \, \pi(a' \mid s', z) }} \left[ \left( F(s, a, z)^{\top} B(s_f, a_f) - (1 - \gamma) \frac{\delta(s_f, a_f \mid s, a)}{ \rho(s_f, a_f) } - \gamma \frac{M^{\pi}(s_f, a_f \mid s', a', z)}{\rho(s_f, a_f)} \right)^2 \right].
\end{align*}

Second, we use target forward-backward representation functions $\bar{F}$ and $\bar{B}$ to replace the ground-truth ratio at the next time step $M^{\pi}(\ph{s_f}, \ph{a_f} \mid \ph{s'}, \ph{a'}, \ph{z}) / \rho(\ph{s_f}, \ph{a_f})$. The resulting loss function minimizes a Bellman error:

\begin{align*}
    \gL_{\text{TD FB}}(F, B) &= \frac{1}{2} \E_{\substack{p(s, a, z), \, \rho(s_f, a_f) \\ p(s' \mid s, a), \, \pi(a' \mid s', z) }} \left[ \left( F(s, a, z)^{\top} B(s_f, a_f) - y \right)^2 \right], \nonumber \\
    y &= (1 - \gamma) \frac{\delta(s_f, a_f \mid s, a)}{ \rho(s_f, a_f) } + \gamma \bar{F}(s', a', z)^{\top} \bar{B}(s_f, a_f).
\end{align*}

Now, expanding the mean squared error gives us
\begin{align*}
    \gL_{\text{TD FB}}(F, B) &= \frac{1}{2} \E_{\substack{p(s, a, z), \, \rho(s_f, a_f) \\ p(s' \mid s, a), \, \pi(a' \mid s', z) }} \left[ \left( F(s, a, z)^{\top} B(s_f, a_f) \right)^2 \right. \\
    &\left. \hspace{1.1em}- 2 \cdot F(s, a, z)^{\top} B(s_f, a_f) \cdot \left( (1 - \gamma) \frac{\delta(s_f, a_f \mid s, a)}{\rho(s_f, a_f)} + \gamma \bar{F}(s', a', z)^{\top} \bar{B}(s_f, a_f) \right) \right] + \text{const}. \\
    &= \frac{1}{2} \E_{ p(s, a, z), \, \rho(s_f, a_f) } \left[ \left( F(s, a, z)^{\top} B(s_f, a_f) \right)^2 \right] \\
    &\hspace{1.1em} - (1 - \gamma) \E_{p(s, a, z), \, \rho(s_f, a_f)} \left[ \frac{\delta(s_f, a_f \mid s, a)}{\rho(s_f, a_f)} F(s, a, z)^{\top} B(s_f, a_f) \right] \\
    &\hspace{1.1em} - \gamma \E_{ \substack{p(s, a, z), \, \rho(s_f, a_f) \\ p(s' \mid s, a), \, \pi(a' \mid s', z) } } \left[ \bar{F}(s', a', z)^{\top} \bar{B}(s_f, a_f) \cdot F(s, a, z)^{\top} B(s_f, a_f) \right] + \text{const.} \\
    &\overset{(a)}{=} \frac{1}{2} \E_{p(s, a, z), \, \rho(s_f, a_f)} \left[ \left( F(s, a, z)^{\top} B(s_f, a_f) \right)^2 \right] - (1 - \gamma) \E_{p(s, a, z)} \left[ F(s, a, z)^{\top} B(s, a) \right] \\
    &\hspace{1.1em} - \gamma \E_{ \substack{p(s, a, z), \, \rho(s_f, a_f) \\ p(s' \mid s, a), \, \pi(a' \mid s', z) } } \left[ \bar{F}(s', a', z)^{\top} \bar{B}(s_f, a_f) \cdot F(s, a, z)^{\top} B(s_f, a_f) \right] + \text{const.} \\
    &\overset{(b)}{=} \frac{1}{2} \E_{ \substack{p(s, a, z), \, \rho(s_f, a_f) \\ p(s' \mid s, a), \, \pi(a' \mid s', z) } } \left[ \left( F(s, a, z)^{\top} B(s_f, a_f) - \gamma \bar{F}(s', a', z)^{\top} \bar{B}(s_f, a_f) \right)^2 \right] \\
    &\hspace{1.1em} - (1 - \gamma) \E_{p(s, a, z)} \left[ F(s, a, z)^{\top} B(s, a) \right] + \text{const.},
\end{align*}
where, in \emph{(a)}, we apply the property of the delta measure, and, in \emph{(b)}, we rearrange the quadratic terms by definition. Finally, we can ignore the constant for simplicity.

\subsection{The FB Bellman Operator Is Not a $\gamma$-Contraction}
\label{appendix:FB-algo-fixed-point}

This section proves a negative result about the FB Bellman operator $\gT_{\text{FB}}$. We show that the FB Bellman operator is not a $\gamma$-contraction. Therefore, the Banach fixed-point theorem fails to provide a guarantee to the FB algorithm that iteratively applies the FB Bellman operator to the forward-backward representation functions from the previous iteration. These results suggest that we need alternative theoretical tools to prove the convergence of the FB algorithm to a fixed point.

\begin{repproposition}{prop:fb-gamma-contraction}
    For any $\gamma \in [0, 1)$ and $p \geq 1$, the FB Bellman operator $\gT_{\text{FB}}$ is not a $\gamma$-contraction under the $L^{p}$-norm. Thus, the Banach fixed-point theorem is not applicable to the FB Bellman operator.
\end{repproposition}
\begin{proof}
We prove that the FB Bellman operator is not a $\gamma$-contraction by contradiction. Let the FB Bellman operator $\gT_{\text{FB}}$ be a $\gamma$-contraction under the $L^{p}$-norm for any $\gamma \in [0, 1)$ and $p \geq 1$. This indicates that for any two pairs of forward-backward representation functions $f_1: \gS \times \gA \times \gZ \to \gZ$, $b_1: \gS \times \gA \to \gZ$, and $f_2: \gS \times \gA \times \gZ \to \gZ$, $b_2: \gS \times \gA \to \gZ$, which induced the latent-conditioned policies $\pi_1(\ph{a} \mid \ph{s}, \ph{z}) = \delta \left( \ph{a} \mid \argmax_{a \in \gA} f_1(\ph{s}, a, \ph{z})^{\top} \ph{z} \right)$ and $\pi_2(\ph{a} \mid \ph{s}, \ph{z}) = \delta \left( \ph{a} \mid \argmax_{a \in \gA} f_2(\ph{s}, a, \ph{z})^{\top} \ph{z} \right)$, we have
\begin{align}
    \norm*{\gT_{\text{FB}} (f_1^{\top} b_1) - \gT_{\text{FB}} (f_2^{\top} b_2)}_{p} \leq \gamma \norm*{ f_1^{\top} b_1 - f_2^{\top} b_2 }_{p}.
    \label{eq:fb-gamma-contraction-ineq}
\end{align}
We consider any current state-action pair $(s, a)$, any latent $z \in \gZ$, and any future state-action pair $(s_f, a_f)$. For the LHS of the inequality, we have
\begin{align}
    &\gT_{\text{FB}}( f_1(s, a, z)^{\top} b_1(s_f, a_f) ) - \gT_{ \text{FB} }( f_2(s, a, z)^{\top} b_2(s_f, a_f) ) \nonumber \\
    &= \gamma \E_{p(s' \mid s, a), \pi_1(a' \mid s', z)} \left[ f_1(s', a', z)^{\top} b_1(s_f, a_f)  \right] - \gamma \E_{p(s' \mid s, a), \pi_2(a' \mid s', z)} \left[ f_2(s', a', z)^{\top} b_2(s_f, a_f)  \right] \nonumber \\
    &= \gamma \E_{p(s' \mid s, a)} \left[ \E_{\pi_1(a' \mid s', z)} \left[ f_1(s', a', z)^{\top} b_1(s_f, a_f) \right] - \E_{\pi_2(a' \mid s', z)} \left[ f_2(s', a', z)^{\top} b_2(s_f, a_f) \right] \right].
    \label{eq:fb-gamma-contraction-ineq-lhs}
\end{align}
Without loss of generality, we can set $f_2 = Q_{\text{rot}} f_1 $ and $b_2 = Q_{\text{rot}} b_1$, where $Q_{\text{rot}} \in \R^{d \times d}$ is an orthonormal rotation matrix, as in~\citet{touati2022does}. In this case, the inner products satisfy $f_2(s, a, z)^{\top} b_2(s_f, a_f) = f_1(s, a, z)^{\top} Q_{\text{rot}}^{\top} Q_{\text{rot}} b_1(s_f, a_f) = f_1(s, a, z)^{\top} b_1(s_f, a_f)$. Thus, the RHS of Eq.~\ref{eq:fb-gamma-contraction-ineq} is always zero. However, the policy $\pi_1(\ph{a} \mid \ph{s}, \ph{z}) = \delta \left( \ph{a} \mid \argmax_{a \in \gA} f_1(\ph{s}, a, \ph{z})^{\top} \ph{z} \right)$ and the policy $\pi_2(\ph{a} \mid \ph{s}, \ph{z}) = \delta \left( \ph{a} \mid \argmax_{a \in \gA} f_1(\ph{s}, a, \ph{z})^{\top} Q_{\text{rot}} \ph{z} \right)$ do not necessarily take the same action at different states $s$ and latent $z$. Therefore, without loss of generality, when the expectations satisfy $\E_{\pi_1(a' \mid s', z)} \left[ f_1(s', a', z)^{\top} b_1(s_f, a_f) \right] > \E_{\pi_2(a' \mid s', z)} \left[ f_1(s', a', z)^{\top} b_1(s_f, a_f) \right]$, we can reduce the Eq.~\ref{eq:fb-gamma-contraction-ineq-lhs} to
\begin{align*}
    &\gT_{\text{FB}}( f_1(s, a, z)^{\top} b_1(s_f, a_f) ) - \gT_{ \text{FB} }( f_1(s, a, z)^{\top} Q_{\text{rot}}^{\top} Q_{\text{rot}} b_1(s_f, a_f) ) \\
    &= \gamma \E_{p(s' \mid s, a)} \left[ \E_{\pi_1(a' \mid s', z)} \left[ f_1(s', a', z)^{\top} b_1(s_f, a_f) \right] - \E_{\pi_2(a' \mid s', z)} \left[ f_1(s', a', z)^{\top} b_1(s_f, a_f) \right] \right] \\
    &> 0.
\end{align*}
Now every element inside the $L^p$-norm on the LHS of Eq.~\ref{eq:fb-gamma-contraction-ineq} is positive, while every element inside the $L^p$-norm on the RHS is zero, which is a contradiction. Hence, we conclude that the FB Bellman operator $\gT_{\text{FB}}$ is not a $\gamma$-contraction under the $L^p$-norm. Consequently, the Banach fixed-point theorem cannot be applied to $\gT_{\mathrm{FB}}$ to conclude the existence or uniqueness of a fixed point. It is unclear whether the FB algorithm (approximately) has a convergence guarantee or not.
\end{proof}

\subsection{Connecting One-Step FB to a Singular Value Decomposition}
\label{appendix:onestep-fb-svd-connection}

One intriguing interpretation of the one-step FB algorithm is that it learns the SVD of the behavioral successor measure ratio. We can make this connection precise by considering discrete MDPs with finite numbers of states and actions. Applying Lemma~\ref{lemma:successor-measure-expr-and-rank}, we can compute the behavioral successor measure as 
\begin{align*}
    M^{\pi_{\beta}} = (1- \gamma) \left( I_{\abs{\gS \times \gA}}-\gamma P^{\pi_{\beta}} \right)^{-1},
\end{align*}
and write the behavioral successor measure ratio using notations in Appendix~\ref{appendix:fb-repr-existence} as 
\begin{align*}
    M^{\pi_{\beta}} \text{diag}(\rho)^{-1} = (1- \gamma) \left( I_{\abs{\gS \times \gA}}-\gamma P^{\pi_{\beta}} \right)^{-1} \text{diag}(\rho)^{-1}.
\end{align*}
Applying the SVD to the matrix $M^{\pi_{\beta}} \text{diag}(\rho)^{-1}$, we have $M^{\pi_{\beta}} \text{diag}(\rho)^{-1} = U_{\beta} \Sigma_{\beta} V^{\top}_{\beta}$, where $U_{\beta}$ and $V_{\beta}$ are two orthonormal matrices and $\Sigma_{\beta}$ is the square singular matrix. Since the behavioral successor measure ratio is fixed after fixing the behavioral policy, the one-step FB algorithm uses behavioral FB representations to fit a static target. In particular, we can set the representation dimension to $d = \abs{\gS \times \gA}$ and let
\begin{align*}
    F^{\star}_{\beta} = U_\beta \Sigma_{\beta}, \quad B^{\star}_\beta = V^{\top}_{\beta}
\end{align*}
to obtain a pair of ground-truth behavioral FB representations. There are two important properties for this solution:
\begin{remark}
    The ground-truth behavioral FB representations are not unique. In particular, for a pair of ground-truth FB representations $F^{\star}_{\beta}$ and $B^{\star}_{\beta}$, and an orthonormal rotation matrix $Q_{\text{rot}} \in \R^{d \times d}$, $Q_{\text{rot}} F^{\star}_{\beta}$ and $Q_{\text{rot}} B^{\star}_{\beta}$ is also a pair of solution.
\end{remark}

\begin{remark}
    The ground-truth behavioral FB representations $F^{\star}_{\beta} = U_\beta \Sigma_{\beta}$ and $B^{\star}_\beta = V^{\top}_{\beta}$ minimizes both the TD one-step FB loss $\gL_{\text{TD one-step FB}}$ and the orthonormalization regularization $\gL_{\text{ortho}}$. In particular, $F^{\star}_{\beta} B^{\star}_{\beta}$ is the SVD of the behavioral successor measure ratio and $B_{\beta}^{\star} B^{\star \top}_{\beta} = V_{\beta}^{\top} V_{\beta} = I_{d} = I_{\abs{\gS \times \gA}}$.
\end{remark}

\subsection{One-Step FB Enables One Step of Policy Improvement}
\label{appendix:onestep-fb-policy-improvement}

In the same way that the FB representations are defined in Definition~\ref{def:fb-definition}, we fit the behavioral successor measure ratio using the behavioral FB representations. Thus, the ground-truth behavioral FB representation functions $F^{\star}_{\beta}: \gS \times \gA \to \gZ $ and $B^{\star}_{\beta}: \gS \times \gA \to \gZ$ satisfy
\begin{align}
    F^{\star}_{\beta}(s, a)^{\top} B^{\star}_{\beta}(s_f, a_f) = \frac{M^{\pi_{\beta}}(s_f, a_f \mid s, a) }{\rho(s_f, a_f)}.
    \label{eq:onestep-fb-sm-ratio}
\end{align}
For a reward function $r: \gS \times \gA \to \R$, again, in the same way that the reward-specific latent variable $z_r$ is defined in Eq.~\ref{eq:fb-latent-of-reward}, we define a new reward-specific latent variable $z_r^{\beta}$ using the behavioral backward representations as
\begin{align}
    z_r^{\beta} = \E_{(s_f, a_f) \sim \rho(s_f, a_f)} \left[ B^{\star}_{\beta}(s_f, a_f) r(s_f, a_f) \right].
    \label{eq:onestep-zero-shot-latent}
\end{align}
Now, we have $z^{\beta}_r$ indexing the behavioral Q-value $Q^{\beta}_r(s, a) = F_{\beta}^{\star}(s, a)^{\top} z^{\beta}_r$ (by Eq.~\ref{eq:onestep-fb-sm-ratio} and the relationship in Eq.~\ref{eq:relate-sr-to-q}) and the policy $\pi(a \mid s, z_r^{\beta})$ performs one-step policy improvement because
\begin{align*}
    \pi(a \mid s, z^{\beta}_r) = \argmax_{a \in \gA} F^{\star}_{\beta}(s, a)^{\top} z^{\beta}_r = \argmax_{a \in \gA} Q^{\beta}_r(s, a).
\end{align*}
Importantly, one-step policy improvement does not recover the optimal policy, which is the result of multi-step policy improvement until convergence. Thus, the one-step FB algorithm loses the optimal policy adaptation property. Nevertheless, prior work~\citep{brandfonbrener2021offline, eysenbach2022contrastive, park2025horizon, park2025transitive} has proven the success of one-step policy improvement in solving diverse RL problems. Therefore, we propose that one-step FB is a competitive method for both zero-shot adaptation (Sec.~\ref{subsec:offline-rl-experiments}) and providing a good initialization for fine-tuning on downstream tasks (Sec.~\ref{appendix:fine-tuning-experiments}).

\section{Experiment Details}
\label{appendix:experiment-details}

\subsection{Didactic Experiments for the FB Algorithm}
\label{appendix:didactic-exp-fb}

Since the latent space $\gZ$ contains an infinite number of latent variables, we parameterize the forward representation matrix $F_{\gZ}: \gZ \to \R^{\abs{\gS \times \gA} \times d}$ using a neural network. For the backward representations, we parameterize them as a differentiable matrix $B \in \R^{d \times \abs{\gS \times \gA}}$. To strictly align our empirical setup with theoretical analysis in Sec.~\ref{sec:understanding-fb}, we enforce a latent dimension of $d= \abs{\gS \times \gA}$ and fix the rank of both representations as $\abs{\gS \times \gA}$. Specifically, we consider the singular value decomposition $F_{\gZ} = U_F \Sigma_F V^\top_F$ and use neural networks to predict the elements of two orthonormal matrices $U_F$ and $V_F^{\top}$ via the Cayley transform and singular values in $\Sigma_F$. These networks independently predict the parameters of the left orthonormal matrix ($\abs{\gS \times \gA } \times \abs{\gS \times \gA}$), the singular values ($\abs{\gS \times \gA}$), and the right orthonormal matrix ($d \times d$). Each network consists of an MLP with $(32, 32, 32)$ units and GELU activations. The backward representation matrix $B$ is also constructed from learnable orthonormal matrices, $U_B$ and $V^\top_B$, and learnable singular values ($\Sigma_B$), as $B = U_B \Sigma_B V^\top_B$.

We train the algorithm for $10^5$ gradient steps with a batch size of $512$. We set the discount factor to $\gamma = 0.9$ during training. Optimization is performed using AdamW~\cite{loshchilov2018decoupled} optimizer with weight decay of $10^{-4}$, $\epsilon_{\text{adamw}} = 10^{-5}$ and learning rate of $10^{-4}$. We randomly sample latent variables $z$ at each training step and use another $1000$ randomly sampled latents for evaluation. Following the original FB implementation~\citep{touati2021learning}, we sample the latent variable $z$ from a scaled von Mises-Fisher distribution. We first sample a $d$-dimension standard Gaussian variable $x \sim \mathcal{N}(0, I_d)$ and a scalar centered Cauchy variable $u \sim \text{Cauchy}(0, 0.5)$, and then compute the latent variable as $z = \sqrt{d} u \frac{x}{\norm*{x}}$. We use the prior distribution $z \sim p_{\gZ}(z)$ to denote this sampling procedure. The latent $z$ is preprocessed as $\tilde{z} \leftarrow \frac{z}{\sqrt{1 + \norm{z}_2^2/d}}$ before being passed as input to the forward representation matrix $F_{\gZ}$. This transformation maps the infinite space of $\mathbb{R}^d$ to a bounded open ball of radius $\sqrt{d}$. Importantly, this mapping is a bijection that preserves differences in magnitude; therefore, latent vectors $z_r$ and $z_{\nu r + \xi}$ ($\nu > 0$) corresponding to differently scaled rewards remain distinct inputs to the neural network. 

Given any marginal measure vector $\rho \in \R^{\abs{\gS \times \gA}}$\footnote{We set the marginal measure to $\rho(\ph{s}, \ph{a}) \triangleq 1 / \abs{\gS \times \gA}$ in our experiments.}, to optimize the neural networks for forward-backward representations $F_{\gZ}$, $B$, we choose to use the MC FB loss $\gL_{\text{MC FB}}$ (Eq.~\ref{eq:mc-fb}) over a batch of latent variables:
\begin{align*}
    \gL_{\text{MC FB}}(F_{\gZ}, B) = \E_{z \sim p_{\gZ}(z)} \left[ \norm*{ F_{\gZ} B - M^{\pi}_{\gZ} \text{diag}(\rho)^{-1} }_F^2 \right],
\end{align*}
where $\norm{\cdot}_F$ denotes the Frobenius norm of a matrix. We choose this loss because the successor measure can be computed analytically in this discrete CMP (Lemma~\ref{lemma:successor-measure-expr-and-rank}):
for each latent variable $z$, we have
\begin{align*}
    M^{\pi}_{z} = (1- \gamma) \left(I_{\abs{\gS \times \gA}}-\gamma P^{\pi(\ph{a} \mid \ph{s}, z)}\right)^{-1}, \quad z \sim p_{\gZ}(z).
\end{align*}
Thus, the MC FB loss is an analytical analogy of the TD FB loss in Eq.~\ref{eq:td-fb}. In fact, the TD FB loss uses transition samples and target networks to approximate the MC FB loss, similar to FQE. See Sec.~\ref{subsec:fb-repr-obj} for the complete discussion. 

Instead of setting the latent-conditioned policy as the non-differentiable argmax policy: $\hat{\pi}(\ph{a} \mid \ph{s}, \ph{z}) = \delta( \ph{a} \mid \argmax_{a \in \gA} F(\ph{s}, a, \ph{z})^{\top} \ph{z})$, we choose to use the Boltzmann policy \begin{align}
\hat{\pi}(\ph{a} \mid \ph{s}, \ph{z}) = \frac{\exp( \tau_{\text{policy}} F(\ph{s}, \ph{a}, \ph{z})^{\top} \ph{z})}{ \sum_{a' \in \gA} \exp \left( \tau_{\text{policy}} F(\ph{s}, a', \ph{z})^{\top} \ph{z} \right) },
\label{eq:fb-didactic-exp-softmax-policy}
\end{align}
where $\tau_{\text{policy}}$ is a temperature for the softmax function fixed to $\tau_{\text{policy}} = 5 \times 10^{-3}$ during training. Following prior practice~\citep{touati2021learning}, we use a temperature $\tau_{\text{policy}} = 1$ during evaluation.

We evaluate the FB algorithm by aggregating statistics over $8$ random seeds using the fixed $1000$ evaluation latents $D_{\text{eval}} = \{ z_i \}_{i = 1}^{1000}$. For each sampled latent $z$, we recover the corresponding reward vector $r \in \R^{\gS \times \gA}$ as $r = (B^{-1} z) \oslash \rho$, where $\oslash$ denotes elementwise division and the backward representation matrix $B$ is invertible (full rank). We then compute the ground-truth optimal Q-value for each reward, $Q^*_r$, by running standard value iteration until convergence. We report the following four metrics as in Fig.~\ref{fig:didactic-exp}~\figleft:
\begin{enumerate}
    \item \textbf{Successor measure ratio prediction error.} This metric measures the fidelity of the learned FB representations in approximating the ground-truth successor measure ratio. We define the successor measure ratio prediction error as the mean squared error (MSE) between the ratio predicted by the FB representations and the ground-truth ratio. The ground-truth ratio is computed using the reward-maximizing policy induced by the forward representations. Formally, the successor measure ratio prediction error is
    \begin{align*}
        \epsilon_{\text{SMR}} = \E_{z \sim D_{\text{eval}}} \left[ \norm*{F_{\gZ} B - M^{\pi}_{\gZ} \text{diag}(\rho)^{-1} }_F^2 \right].
    \end{align*}

    \item \textbf{Optimal Q-value prediction error.} This metric measures the accuracy of the optimal Q-value predicted by the learned representation. For each latent variable $z$ with the corresponding reward vector $r$, the learned forward representation matrix $F_\gZ$ predicts the optimal Q-value as $\hat{Q}^{\star}_r(z) = F_z z \in \R^{\abs{\gS \times \gA}}$. On the other hand, we can compute the ground-truth optimal Q-value for reward vector $r$ using value iteration as $Q^{\star}_r(z) \in \R^{\abs{\gS \times \gA}}$. The optimal Q-value prediction error is defined as the MSE between the predicted Q-value and the ground-truth Q-values:
    \begin{align*}
        \epsilon_{Q^{\star}} = \E_{z \sim D_{\text{eval}}} \left[ \norm*{\hat{Q}^{\star}_{r}(z) - Q^{\star}_r(z)}_2^2 \right].
    \end{align*}
    \item \textbf{Forward KL divergence (optimal policy).} To evaluate the decision-making quality of the induced policy, we measure the forward KL divergence between the optimal policy derived from $Q^{\star}_r(z)$ and $\pi^{\star}(\ph{a} \mid \ph{s}, \ph{z})$ and the policy derived from $\hat{Q}_r(\ph{z})$ and $\hat{\pi}(\ph{a} \mid \ph{s}, \ph{z})$. We report the forward KL divergence averaged over all evaluation latents and all possible states:
    \begin{align*}
        \text{KL}_{\pi^{\star}} = \frac{1}{\abs{\gS}} \E_{z \sim D_{\text{eval}}} \left[ \sum_{s \in \gS} \KL \left( \pi^{\star}( \cdot \mid s, z) \parallel \hat{\pi} (\cdot \mid s, z) \right) \right]
    \end{align*}
    where $\KL (p \parallel q) = \sum_{x \in \gX} p(x) \log \left( \frac{p(x)}{q(x)} \right)$ is the standard KL divergence between two probability measures $p$ and $q$.
    
    \item \textbf{Q prediction equivariance error.} This metric assesses whether the learned Q-values respect the affine equivariance property as discussed in Lemma~\ref{lemma:q-equivariant} and Proposition~\ref{prop:fb-cylinder-proj}. Specifically, given a latent variable $z$ with the corresponding reward vector $r$, for a positive scalar, $\nu > 0$, and an offset, $\xi \in \mathbb{R}$, the predicted Q-value should satisfy the equivariance $\hat{Q}_r(z_{\nu r + \xi}) = \hat{Q}_r( \nu z + \xi z_{\text{one}} ) = \nu \hat{Q}_r(z) + \xi$, where $z_\text{one}$ is the latent variable corresponding to the all one reward vector. We sample $\nu$ and $\xi$ randomly and compute the following MSE:
    \begin{align*}
        \epsilon_{\text{equiv}} &= \E_{z \sim D_{\text{eval}}} \left[ \norm*{\hat{Q}_r(\nu z + \xi z_\text{one}) - \left( \nu \hat{Q}_r(z) + \xi \right)}_2^2 \right] \\
        &= \E_{z \sim D_{\text{eval}}} \left[ \norm*{ F_{\nu z + \xi z_{\text{one}}} \cdot (\nu z + \xi z_{\text{one}}) - \left( \nu F_z z + \xi \right)}_2^2 \right].
    \end{align*}
\end{enumerate}

\subsection{Didactic Experiments for the One-Step FB Algorithm}
\label{appendix:didactic-exp-onestep-fb}

The forward representation matrix $F_{\beta} \in \R^{\abs{\gS \times \gA} \times d}$ and the backward representation matrix $B_{\beta} \in \R^{d \times \abs{\gS \times \gA}}$ are parameterized directly as differentiable matrices. We set the latent dimension of $d = \abs{\gS \times \gA}$ and enforce a fixed rank of $\abs{\gS \times \gA}$ on both representations. Following the construction in Appendix~\ref{appendix:didactic-exp-fb}, both matrices are factorized via SVDs, where the decompositions are formed by learnable orthonormal matrices and singular values.

We train the algorithm for $10^5$ gradient steps to fit the analytical density ratio induced by a fixed policy $\pi_\beta$. For simplicity, we set the policy $\pi_{\beta}$ to be a uniform policy over the entire action space: $\pi_{\beta}(a \mid s) \triangleq 1 / \abs{\gA}$. Since the target density ratio $M^{\pi_\beta}\text{diag}(\rho)^{-1}$ is fixed given $\pi_{\beta}$, the learning one-step FB reduces to solving a supervised learning problem. Specifically, we minimize the Monte Carlo one-step FB (MC FB) loss, which resembles the MC FB loss in Eq.~\ref{eq:mc-fb}. Given the marginal measure vector $\rho \in \R^{\abs{\gS \times \gA}}$, the MC one-step FB loss is defined as:
\begin{align}
    \gL_{\text{MC one-step FB}} (F_{\beta}, B_{\beta}) = \norm{F_{\beta} B_{\beta} - M^{\pi_\beta}\text{diag}(\rho)^{-1} }_F^2,
    \label{eq:mc-onestep-fb}
\end{align}
where the fixed successor measure $M^{\pi_{\beta}}$ is computed as (Lemma~\ref{lemma:successor-measure-expr-and-rank})
\begin{align*}
    M^{\pi_{\beta}} = (1 - \gamma) \left( I_{\abs{\gS \times \gA}} - \gamma P^{\pi_{\beta}} \right)^{-1}.
\end{align*}
All optimization hyperparameters are kept identical to the FB implementation described in Appendix.~\ref{appendix:didactic-exp-fb}. 

Similar to the experiments in Appendix~\ref{appendix:didactic-exp-fb}, we evaluate one-step FB by aggregating the statistics over $8$ random seeds using a fixed batch of $1000$ evaluation latents. For each sampled latent $z$, we also recover the corresponding reward vector $r \in \R^{\abs{\gS \times \gA}}$ as $r = (B^{-1} z) \oslash \rho$. We then compute the Q-value of the fixed policy $Q^{\pi_\beta}$ by running standard value iteration until convergence. We report the following metrics:
\begin{itemize}
    \item \textbf{Successor measure ratio prediction error.} The metric measures the fidelity of the learned representation in approximating the fixed successor measure ratio. It is defined as the MSE between the ratio predicted by the one-step FB representations and the ground-truth ratio.
    \begin{align*}
        \epsilon_{\text{SMR}} = \norm{ F_{\beta} B_{\beta} - M^{\pi_\beta}\text{diag}(\rho)^{-1} }_F^2
    \end{align*}
    
    \item \textbf{Q-value prediction error.} This metric measures the accuracy of the Q-value predicted by the learned representation. For each latent variable $z$ that induces the reward vector $r$, the learned forward representation matrix $F_{\beta}$ predicts the Q-value as $\hat{Q}_r^{\pi_{\beta}}(z) = F_{\beta} z \in \R^{\abs{\gS \times \gA}}$. The Q-value prediction error is defined as the MSE between the predicted Q-value and the Q-value obtained via value iteration:
    \begin{align}
        \epsilon_{Q^{\pi_\beta}} =\E_{z \sim D_{\text{eval}}} \left[ \norm*{\hat{Q}_{r}^{\pi_{\beta}}(z) - Q^{\pi_\beta}_r(z)}_2^2 \right]
    \end{align}
    
    \item \textbf{Forward KL divergence ($\argmax_a Q^{\pi_\beta}$).} We now measure the decision-making quality of the induced policy. As discussed in Sec.~\ref{sec:method} and Appendix~\ref{appendix:onestep-fb-policy-improvement}, the latent-conditioned policy derived from the one-step FB algorithm performs one step of policy improvement over the predicted Q-value $\hat{Q}^{\pi_{\beta}}_r(z)$ as
    \begin{align*}
        \hat{\pi}_{\text{one-step}}( \ph{a} \mid \ph{s}, \ph{z} ) = \frac{ \exp(\tau_{\text{policy}} F_{\beta}(\ph{s}, \ph{a})^{\top} \ph{z}) }{ \sum_{a' \in \gA} \exp(\tau_{\text{policy}} F_{\beta}(\ph{s}, a')^{\top} \ph{z}) } \approx \argmax_{a \in \gA} \hat{Q}^{\pi_{\beta}}_r(s, a, z).
    \end{align*}
    This policy is trying to fit the one-step policy improvement over the ground-truth Q-value $Q^{\pi_{\beta}}_{r}(z)$. We will define the resulting policy from the one step of policy improvement over $Q^{\pi_{\beta}}_{r}(z)$ as
    \begin{align*}
        \pi_{\text{one-step}}( \ph{a} \mid \ph{s}, \ph{z} ) = \delta \left( \ph{a} \mid \argmax_{a \in \gA} Q^{\pi_{\beta}}_r(\ph{s}, a, \ph{z}) \right).
    \end{align*}
    To evaluate the performance of $\hat{\pi}_{\text{one-step}}$, we measure the forward KL divergence between $\pi_{\text{one-step}}$ and $\hat{\pi}_{\text{one-step}}$, averaging over all sampled latents and all possible states:
    \begin{align}
        \text{KL}_{\pi_{\text{one-step}}} = \frac{1}{\abs{\gS}} \E_{z \sim D_{\text{eval}}} \left[ \sum_{s \in \gS} \KL \left( \pi_{\text{one-step}}( \cdot \mid s, z) \parallel \hat{\pi}_{\text{one-step}} (\cdot \mid s, z) \right) \right]
    \end{align}
    
    \item \textbf{Q prediction equivariance error.} This metric assesses whether the learned Q-value respects the affine equivariance property as discussed in Lemma~\ref{lemma:q-equivariant} and Proposition~\ref{prop:fb-cylinder-proj}. Specifically, given a latent variable $z$ with the corresponding reward vector $r$, for a positive scalar, $\nu > 0$, and an offset, $\xi \in \mathbb{R}$, the predicted Q-value should satisfy the equivariance $\hat{Q}^{\pi_{\beta}}_r(z_{\nu r + \xi}) = \hat{Q}^{\pi_{\beta}}_r( \nu z + \xi z_{\text{one}} ) = \nu \hat{Q}^{\pi_{\beta}}_r(z) + \xi$. We sample $\nu$ and $\xi$ randomly and compute the following MSE:
    \begin{align*}
        \epsilon_{\text{equiv}} &= \E_{z \sim D_{\text{eval}}} \left[ \norm*{\hat{Q}^{\pi_{\beta}}_r(\nu z + \xi z_\text{one}) - \left( \nu \hat{Q}^{\pi_{\beta}}_r(z) + \xi \right)}_2^2 \right] \\
        &= \E_{z \sim D_{\text{eval}}} \left[ \norm*{ F_{\beta} \cdot (\nu z + \xi z_{\text{one}}) - \left( \nu F_{\beta} z + \xi \right)}_2^2 \right].
    \end{align*}
\end{itemize}

\subsection{Rationales of Selecting Prior Methods}
\label{appendix:baselines}

We compare one-step FB against $6$ prior unsupervised pre-training methods. The most relevant prior work is FB~\citep{touati2021learning}, which simultaneously learns a latent-conditioned policy and its occupancy measure. Other popular zero-shot methods learn state representations first and then use off-the-shelf RL algorithms (e.g., TD3~\citep{fujimoto2018addressing}) to maximize the intrinsic reward derived from those state representations. Among them, we mainly compare against three families of methods. First, some prior methods train state representations via consistency along single-step or multi-step samples from the successor measure: Laplacian~\citep{wu2018laplacian} and BYOL-$\gamma$~\citep{lawson2025self}. Second, we also compare against TD-JEPA~\citep{bagatella2025td}, a temporal-difference variant of joint-embedding predictive architecture~\citep{assran2023self} for zero-shot RL. TD-JEPA learns representations with a forward multi-step predictor similar to BYOL-$\gamma$, with an additional backward multi-step predictor to extract success features (latent representations). This method is a recent baseline for zero-shot RL. Third, another line of established methods derived state representations for the successor measure using variants of the expectile regression adapted from~\citet{kostrikov2021offline}: HILP~\citep{park2024foundation} and ICVF~\citep{ghosh2023reinforcement}. 

\subsection{Environments, Datasets, and Evaluation Protocols}
\label{appendix:envs-and-datasets}

\begin{figure}[t]
    \centering
    \begin{subfigure}[b]{0.75\textwidth}
    {\sffamily \color{gray}
    \begin{subfigure}[b]{0.24\linewidth}
        \includegraphics[width=\textwidth]{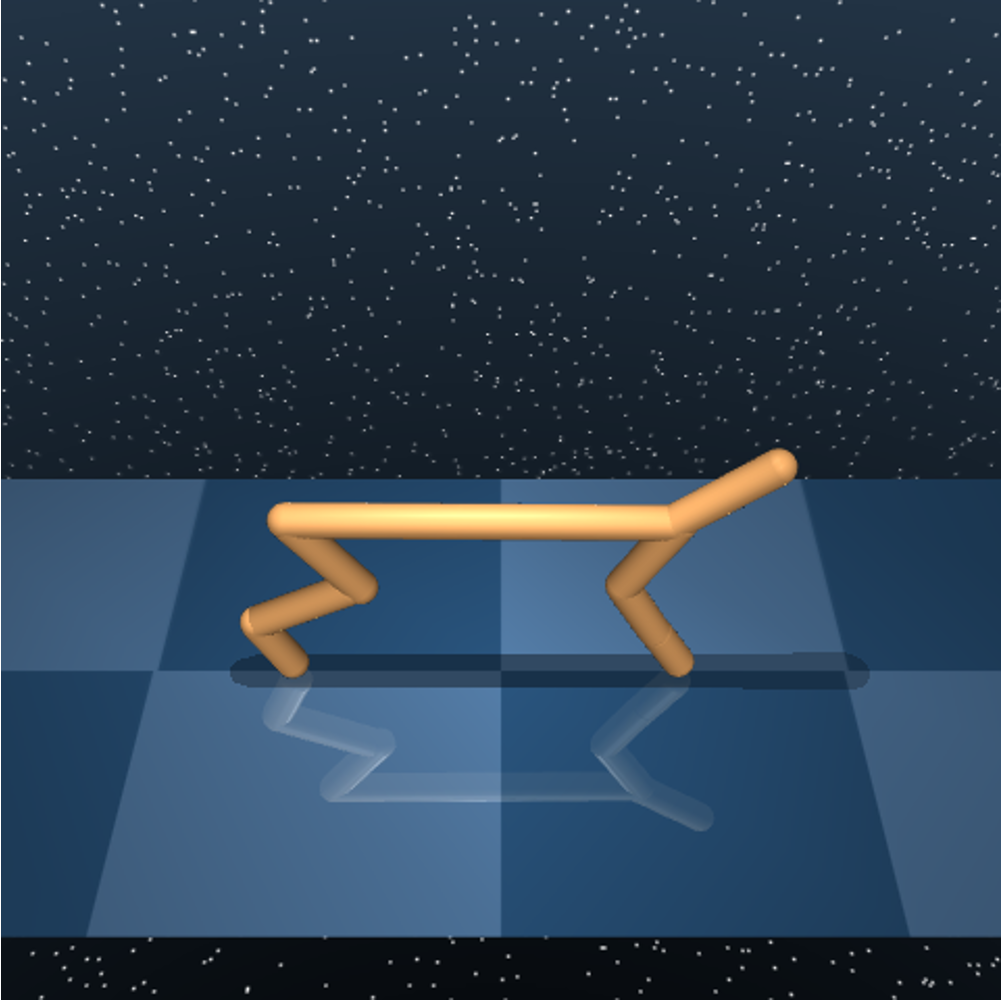}
        \centering {\fontsize{7.5pt}{7.5pt}\selectfont cheetah}
    \end{subfigure}
    \begin{subfigure}[b]{0.24\linewidth}
        \includegraphics[width=\textwidth]{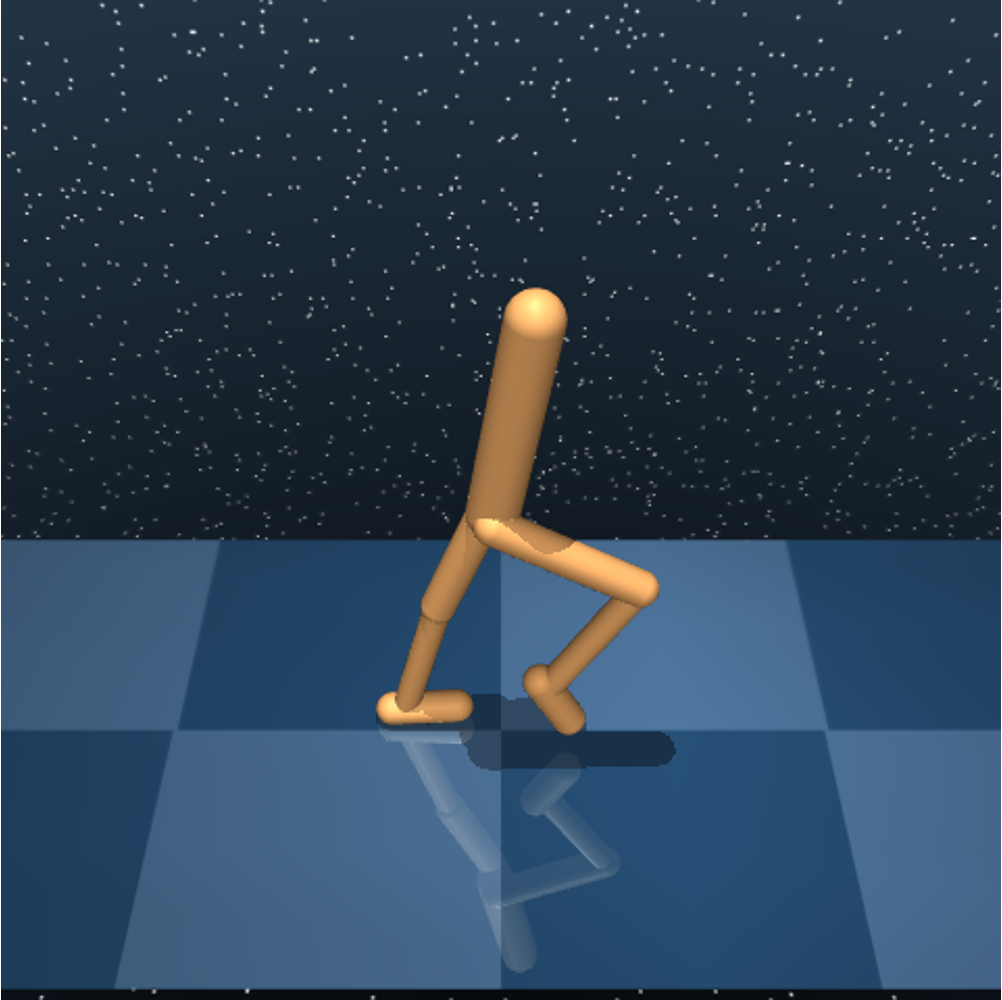}
        \centering {\fontsize{7.5pt}{7.5pt}\selectfont walker}
    \end{subfigure}
    \begin{subfigure}[b]{0.24\linewidth}
        \includegraphics[width=\textwidth]{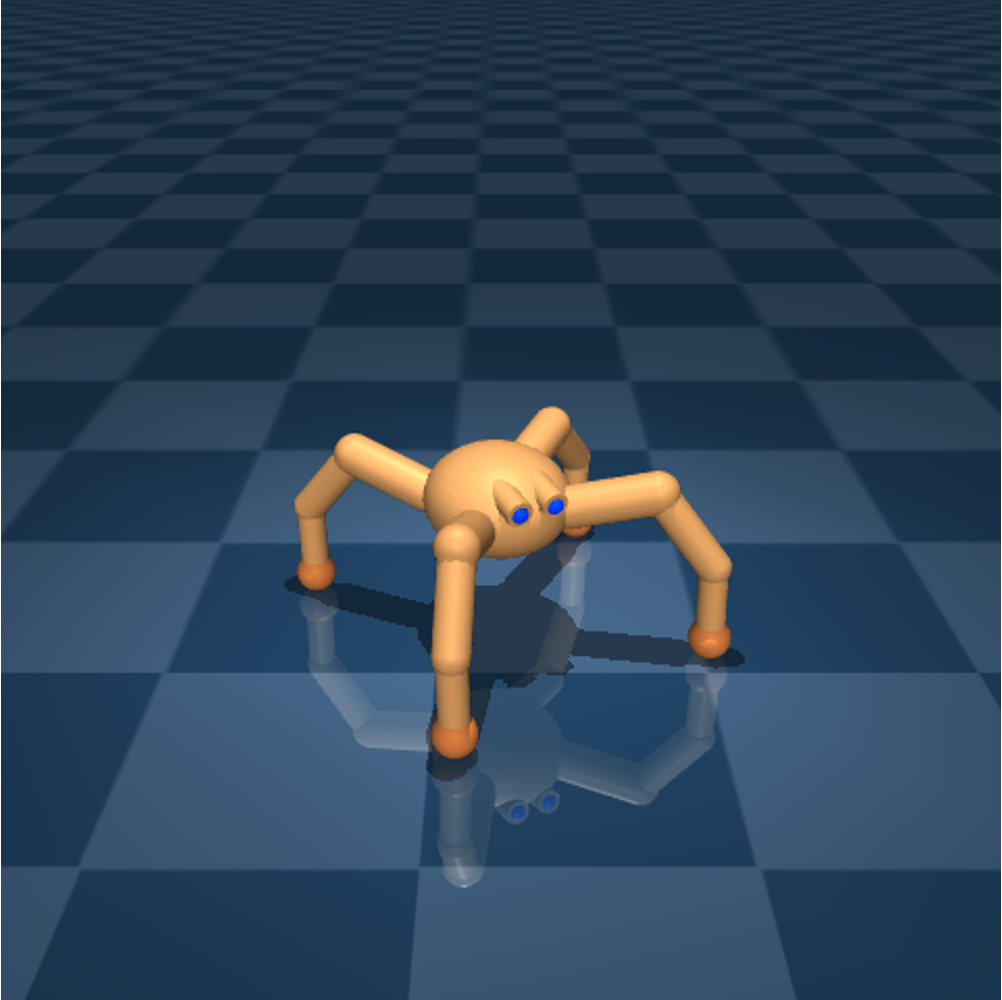}
        \centering {\fontsize{7.5pt}{7.5pt}\selectfont quadruped}
    \end{subfigure}
    \begin{subfigure}[b]{0.24\linewidth}
        \includegraphics[width=\textwidth]{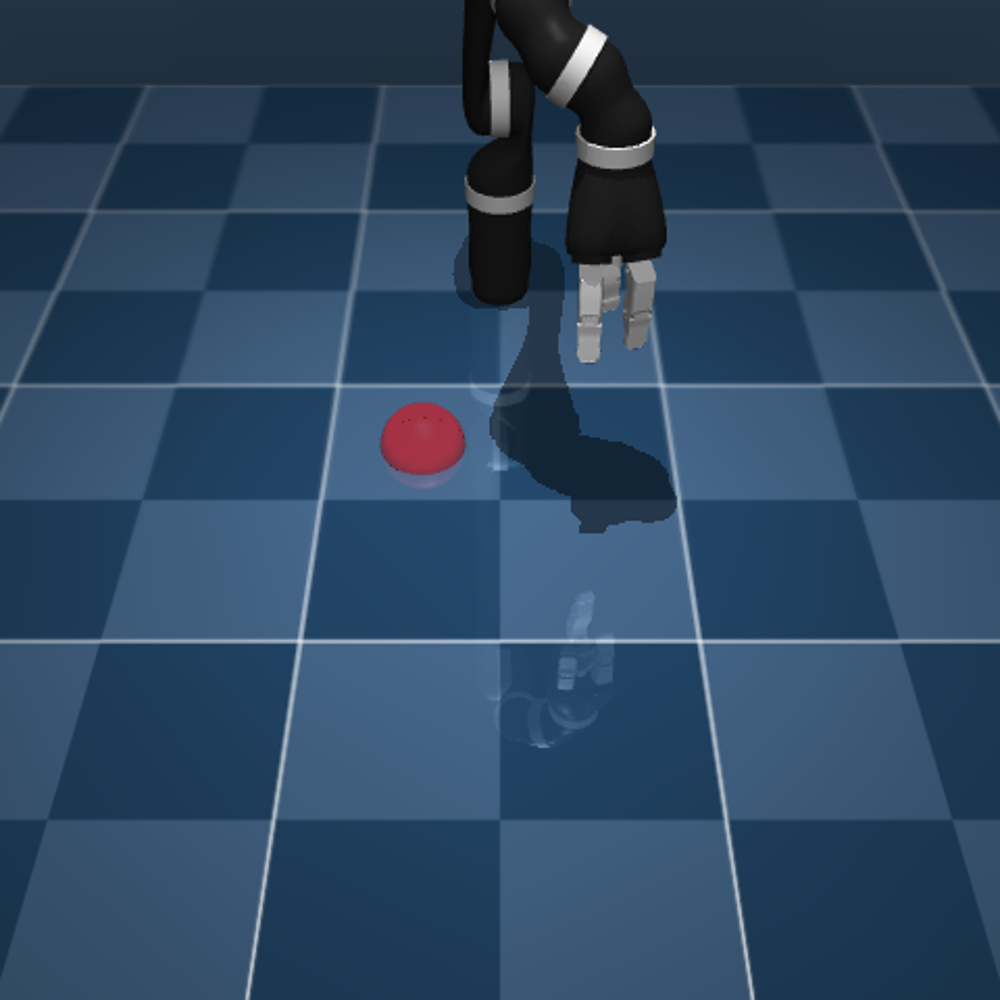}
        \centering {\fontsize{7.5pt}{7.5pt}\selectfont jaco}
    \end{subfigure}
    }  %
    \end{subfigure}
    \vfill
    \vspace{0.25em}
    \begin{subfigure}[b]{0.75\textwidth}
    {\sffamily \color{gray}
    \begin{subfigure}[b]{0.24\linewidth}
        \includegraphics[width=\textwidth]{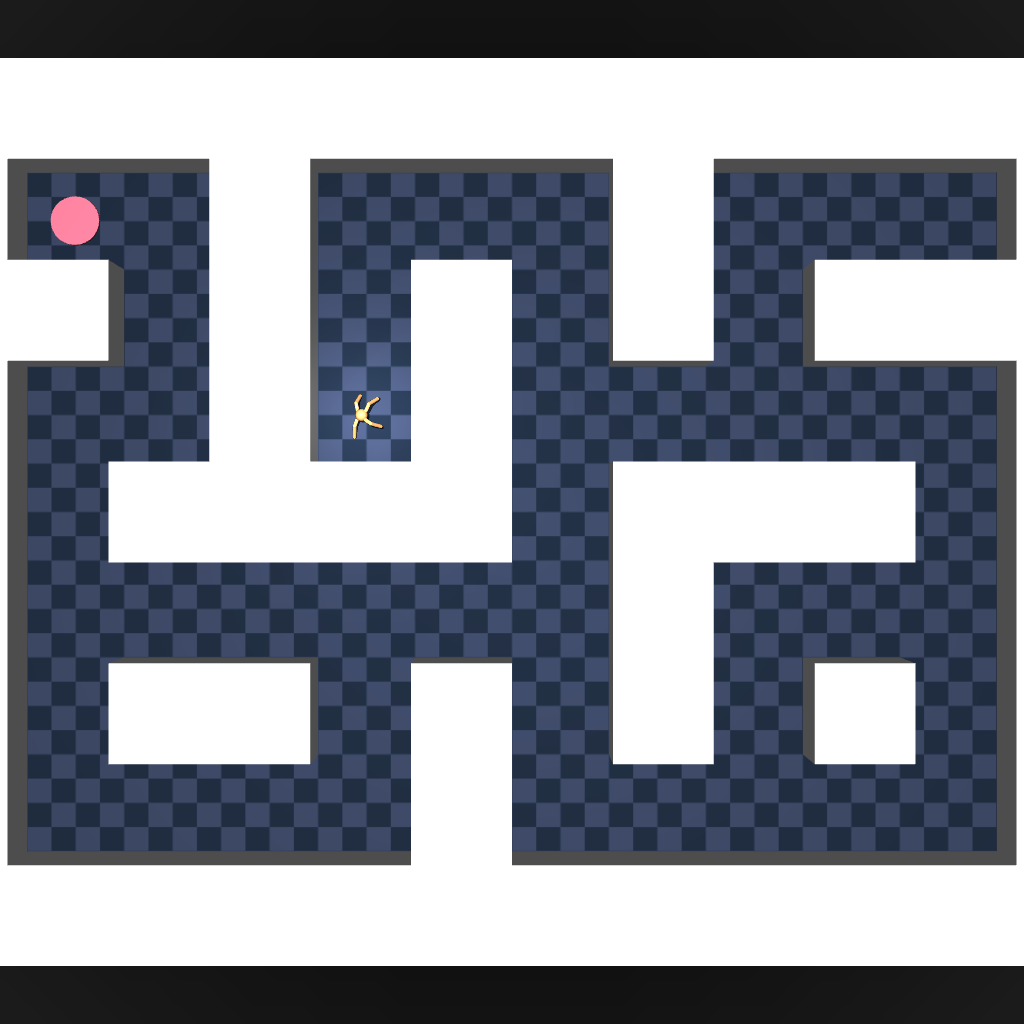}
        \centering {\fontsize{7.5pt}{7.5pt}\selectfont antmaze large navigate}
    \end{subfigure}
    \begin{subfigure}[b]{0.24\linewidth}
        \includegraphics[width=\textwidth]{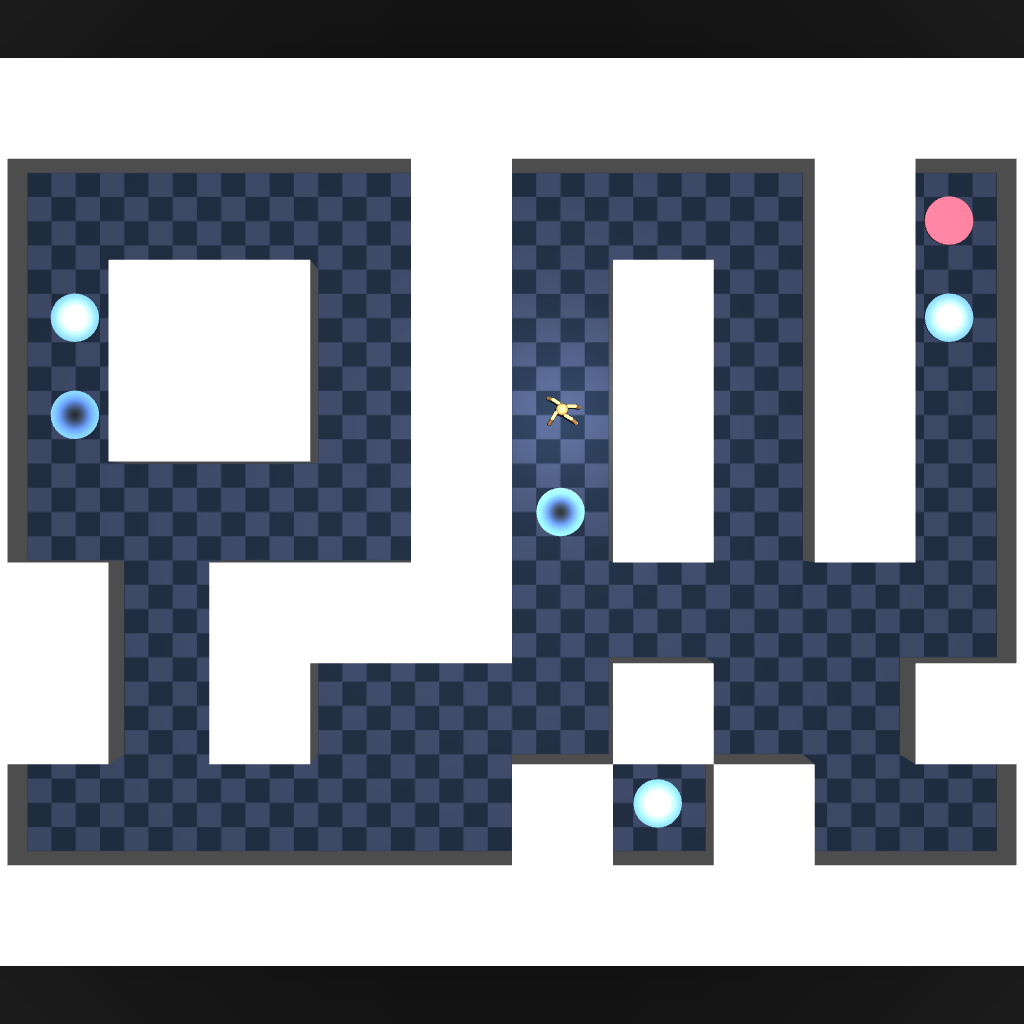}
        \centering {\fontsize{7.5pt}{7.5pt}\selectfont antmaze teleport navigate}
    \end{subfigure}
    \begin{subfigure}[b]{0.24\linewidth}
        \includegraphics[width=\textwidth]{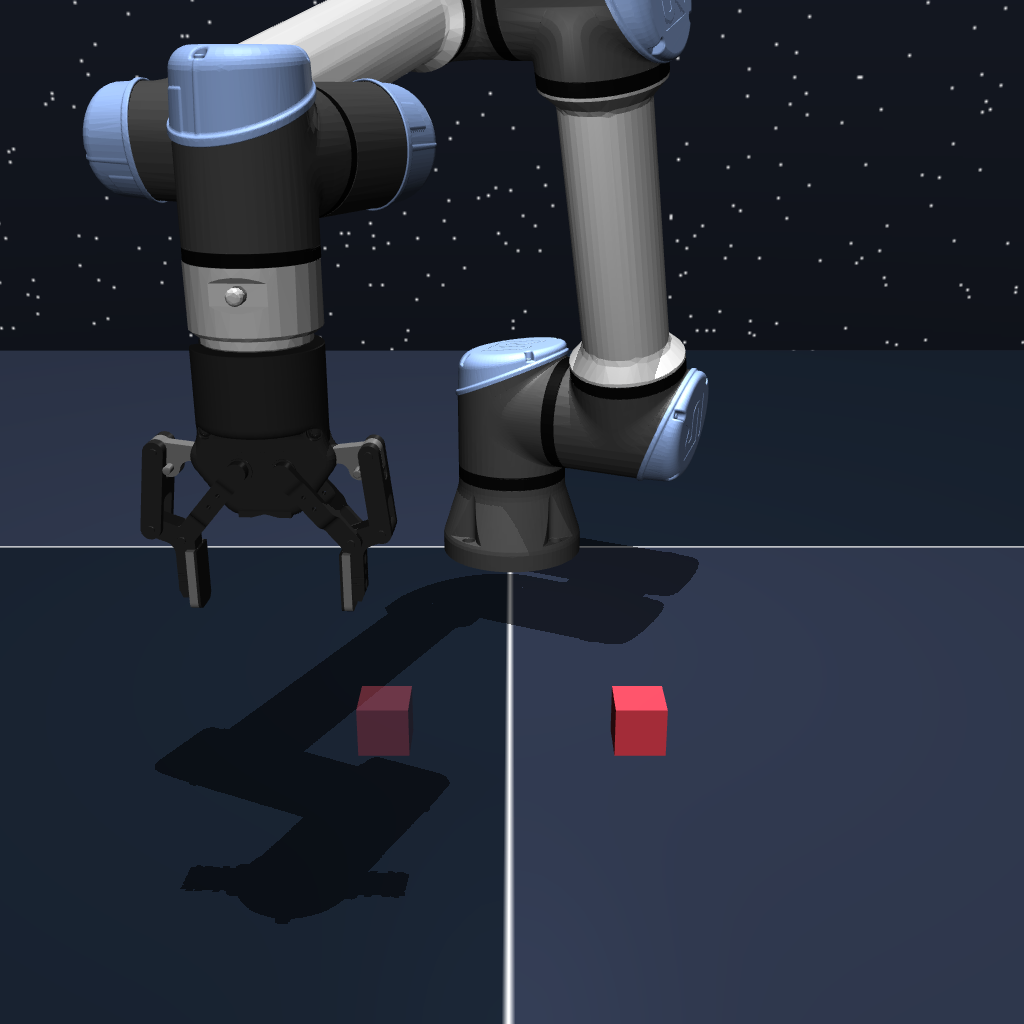}
        \centering {\fontsize{7.5pt}{7.5pt}\selectfont cube single play}
    \end{subfigure}
    \begin{subfigure}[b]{0.24\linewidth}
        \includegraphics[width=\textwidth]{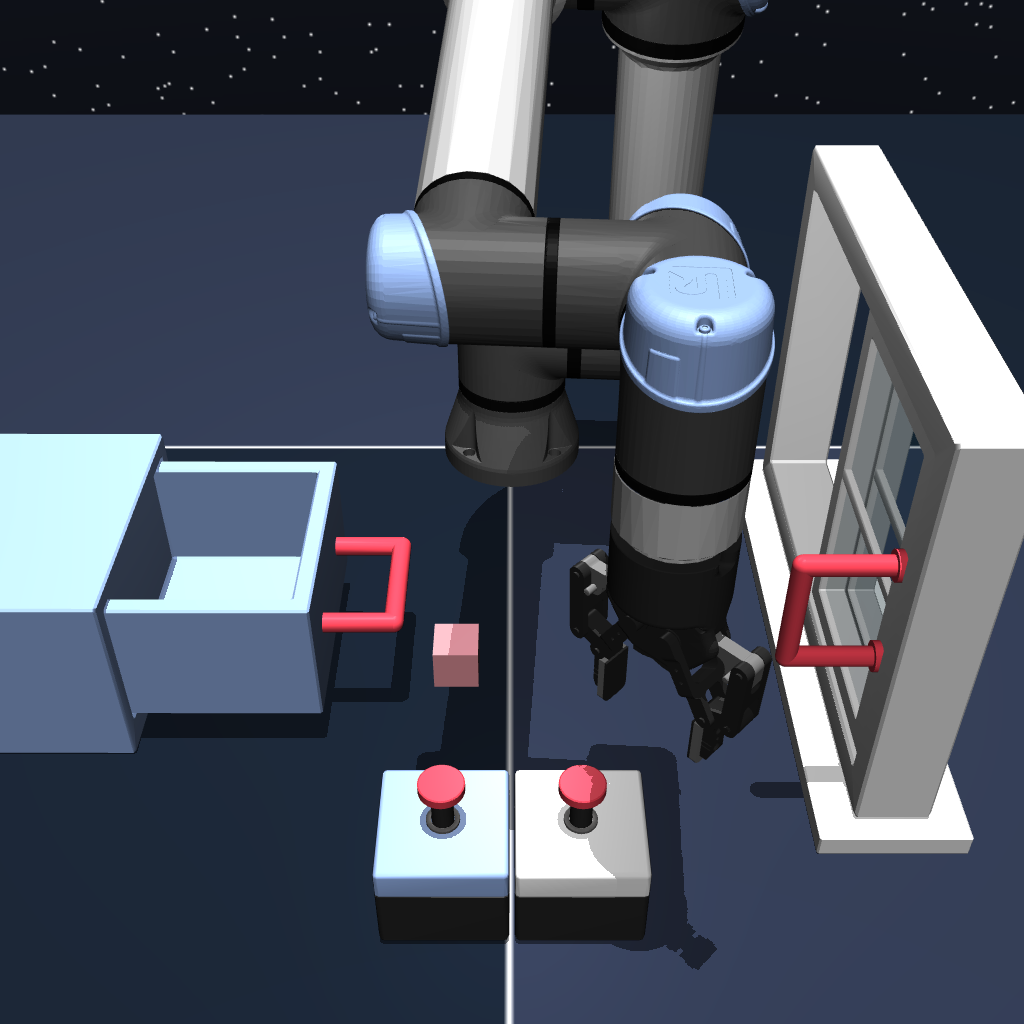}
        \centering {\fontsize{7.5pt}{7.5pt}\selectfont scene play}
    \end{subfigure}
    }  %
    \end{subfigure}
    \caption{\footnotesize \textbf{Domains for evaluation.} \figtop \, ExORL domains ($16$ state-based tasks). \figbottom \, OGBench domains ($20$ state-based tasks and $10$ image-based tasks).
    }
    \label{fig:domains}
\end{figure}

We focus on the offline setting and use standard offline RL benchmarks to compare one-step FB against prior methods. We select a set of $4$ state-based locomotion domains from the ExORL~\citep{yarats2022don} benchmark and a set of $4$ state-based robotic navigation and manipulation domains from the OGBench~\citep{park2024ogbench} benchmark (Fig.~\ref{fig:domains}). To test whether our method is able to directly take in RGB images as inputs, we additionally include $2$ image-based domains from the OGBench benchmark. Our experiments pre-train different methods for $10^6$ gradient steps and evaluate the zero-shot performance on a diverse set of tasks in each domain. 

\paragraph{Benchmarks.} The ExORL benchmark consists of a diverse set of locomotion tasks based on the DeepMind Control Suite~\citep{tassa2018deepmind}. Following prior work~\citep{park2024foundation}, we select $4$ domains from the entire benchmark, each containing $4$ tasks. These tasks involve controlling four robots (\texttt{cheetah}, \texttt{walker}, \texttt{quadruped}, and \texttt{jaco}) to complete different locomotion behaviors. For each domain, the specific tasks are as follows:
\begin{itemize}
    \item \texttt{walker}: \texttt{flip}, \texttt{run}, \texttt{stand}, and \texttt{walk}.
    \item \texttt{cheetah}: \texttt{run}, \texttt{run backward}, \texttt{walk}, and \texttt{walk backward}.
    \item \texttt{quadruped}: \texttt{jump}, \texttt{run}, \texttt{stand}, and \texttt{walk}.
    \item \texttt{jaco}: \texttt{reach bottom left}, \texttt{reach bottom right}, \texttt{reach top left}, \texttt{reach top right}.
\end{itemize}
For domains \texttt{walker}, \texttt{cheetah}, and \texttt{quadruped}, both the episode length and the maximum return are $1000$. For the domain \texttt{jaco}, both the episode length and the maximum return are $250$. As mentioned in~\citet{yarats2022don}, tasks in \texttt{walker}, \texttt{cheetah}, and \texttt{quadruped} use dense reward functions, while tasks in \texttt{jaco} use sparse reward functions. Detailed descriptions of the benchmark can be found in~\citet{yarats2022don}. Thus, zero-shot adaptation on \texttt{jaco} is more challenging than on other domains.

The OGBench benchmark consists of a diverse set of robotic navigation and manipulation tasks. These tasks are built on top of the MuJoCo simulator~\citep{todorov2012mujoco} and are designed for goal-conditioned control. We select $4$ state-based domains and $3$ image-based domains, each containing $5$ tasks. The goal of these tasks is to either control an Ant to navigate in deterministic or stochastic mazes (\texttt{antmaze large navigate}, \texttt{antmaze teleport navigate}, and \texttt{visual antmaze medium navigate}) or control a robot arm to rearrange various objects (\texttt{cube single play}, \texttt{scene play}, \texttt{visual cube single play}, and \texttt{visual scene play}). For each state-based domain, the specific tasks are:
\begin{itemize}
    \item \texttt{antmaze large navigate}: \texttt{task 1} (bottom left to top right), \texttt{task 2} (center to top left), \texttt{task 3} (center to bottom right), \texttt{task 4} (bottom right to center), and \texttt{task 5} (bottom left to center).
    \item \texttt{antmaze teleport navigate}: \texttt{task 1} (bottom right to top left), \texttt{task 2} (bottom left to top right), \texttt{task 3} (center to top right), \texttt{task 4} (top left to top right), and \texttt{task 5} (center to top left).
    \item \texttt{cube single play}: \texttt{task 1} (pick and place to left), \texttt{task 2} (pick and place to front), \texttt{task 3} (pick and place to back), \texttt{task 4} (pick and place diagonally), and \texttt{task 5} (pick and place off-diagonally).
    \item \texttt{scene play}: \texttt{task 1} (open drawer and window), \texttt{task 2} (close and lock drawer and window), \texttt{task 3} (open drawer, close window, and pick and place cube to right), \texttt{task 4} (put cube in drawer), and \texttt{task 5} (fetch cube from drawer and close window).
\end{itemize}
For each image-based domain, the specific tasks are the same as the state-based variant, except \texttt{visual antmaze medium navigate}. The \texttt{visual antmaze medium navigate} domain uses local third-person image observations as input to algorithms and includes ground and wall colors for agents to infer their locations. This domain contains five tasks: \texttt{task 1} (bottom left to top right), \texttt{task 2} (top left to bottom right), \texttt{task 3} (turn around central corner), \texttt{task 4} (top right to top left), and \texttt{task 5} (bottom right to bottom left). All visual observations are $64 \times 64 \times 3$ RGB images. These tasks are challenging because the agent must reason directly from pixels. For domains involving navigation tasks \texttt{antmaze large navigate} and \texttt{antmaze teleport navigate}, the maximum episode length is $1000$. For tasks in \texttt{cube single play}, the maximum episode length is $200$. For tasks in \texttt{scene play}, the maximum episode length is $750$. These domains are challenging because they all use sparse goal-conditioned reward functions. For other details of the benchmark, please refer to~\citet{park2024ogbench}.

\paragraph{Datasets.} On the ExORL benchmark, following the prior work~\citep{touati2022does, park2024foundation, kim2024unsupervised, zheng2025intention}, we use $5 \times 10^6$ transitions collected by an exploration method (RND~\citep{burda2018exploration}) for unsupervised pre-training, and another $10^5$ transitions collected by the same exploratory policy for zero-shot inference (predicting $z_r$). The zero-shot adaptation datasets will be labeled with task-specific dense rewards, except in \texttt{jaco}, where the reward signals are sparse. See~\citet{yarats2022don} for details of data collection.

On the OGBench benchmark, following the prior work~\citep{zheng2025intention, bagatella2025td}, for both state-based and image-based tasks, we use $10^6$ transitions collected by a non-Markovian expert policy with temporally correlated noise (the \texttt{play} datasets) for unsupervised pre-training, and another $10^5$ transitions collected by the same noisy expert policy for zero-shot inference. Unlike the ExORL benchmark, the zero-shot adaptation datasets will be labeled with \emph{semi-sparse} rewards~\citep{park2025flow}. See~\citet{park2024ogbench} for details of data collection.

\paragraph{Evaluation Protocols.} We compare the performance of one-step FB against the $6$ baselines (Sec.~\ref{appendix:baselines}) by pre-training each method for $10^6$ gradient steps on different domains ($5 \times 10^5$ gradient steps for image-based domains). We simultaneously perform zero-shot inference to measure the performance of each method. During zero-shot adaptation, we relabel the $10^5$ transitions (Appendix~\ref{appendix:envs-and-datasets}) with task-specific rewards and use them to infer the latent variable $z$ among different methods. Note that for OGBench domains, we use the semi-sparse reward instead of a success indicator for zero-shot inference. After inferring the latent variable $z_r$, we fix it inside the latent-conditioned policy $\pi(\ph{a} \mid \ph{s}, z_r)$ and use the policy to do evaluation. On domains from the ExORL benchmark, we measure the undiscounted cumulative return averaged over $50$ episodes. On domains from the OGBench benchmark, we measure the success rate average over $50$ episodes. Following prior practice~\citep{park2025flow, tarasov2023corl}, we do \emph{not} report the best performance during pre-training and instead report the evaluation results averaged over $8 \times 10^5$, $9 \times 10^5$, and $10^6$ gradient steps for state-based domains. For image-based tasks, we report the evaluation results averaged over $4 \times 10^5$, $4.5 \times 10^5$, and $5 \times 10^5$ gradient steps. Following prior work~\citep{park2025flow}, we report means and standard deviations over $8$ random seeds for state-based domains ($4$ seeds for image-based domains).

For offline-to-online fine-tuning, we only compare one-step FB to prior methods on state-based tasks. We first use the zero-shot policy inferred by different methods as the initialization and then fine-tune the policy for $10^6$ environment steps (environment step = gradient step) using TD3~\citep{fujimoto2018addressing}. Note that we do not retain offline data in the online replay buffer because the offline data lacks reward signals. Again, we measure the undiscounted cumulative return for tasks from the ExORL benchmark and measure the success rate for tasks from the OGbench benchmark. We evaluate the performance of the fine-tuned policy every $10^5$ environment steps. For completeness, we show the full learning curves aggregated over $8$ random seeds.

\subsection{Implementations and Hyperparameters}
\label{appendix:implementations-and-hyperparameters}

We compare one-step FB against $6$ baselines, measuring the performance of undiscounted cumulative returns and success rates on downstream tasks. We implement one-step FB and all baselines using JAX~\citep{jax2018github}, adapting the OGBench~\citep{park2024ogbench} codebase. 
Our open-source implementations can be found at \url{https://github.com/chongyi-zheng/onestep-fb}.
All experiments for state-based domains ran on a single A6000 GPU for up to $6$ hours, and all experiments for image-based domains ran on the same type of GPU for up to $16$ hours. 

Following prior work~\citep{tirinzoni2025zeroshot, touati2022does}, we apply two common practices that improve the overall performance of every method. First, the prior measure over latent variables $p_{\gZ}(\ph{z})$ is set to a scaled von Mises-Fisher distribution: we first sample from the standard Gaussian distribution of $d$ dimensions $x \sim \gN(0, I_d)$ and then normalize and rescale the sample to obtain a latent variable $z = \sqrt{d} x / \norm{x}_2$. Second, when sampling latent variables for training, we include both latents from the prior distribution $p_{\gZ}(\ph{z})$ and latents constructed from the current representations. Specifically, for FB and one-step FB, we use the normalized and rescaled variants of backward representations to construct latents. For all other baselines, we use the normalized and rescaled variants of state representations to construct latents. These constructed latents are mixed with latents sampled from the prior distribution with $0.5$ probability to form the final latents for pre-training. In addition to these common practices, each method adopts specific implementation details, which we describe below.

\paragraph{One-step FB.} The one-step FB consists of three main components for unsupervised pre-training: the forward representation $F_\beta$, the backward representation $B_{\beta}$, and the latent-conditioned policy $\pi(\ph{a} \mid \ph{s}, \ph{z})$. We model all of them as multilayer perceptrons (MLPs) and use different architectures for the ExORL domains and the OGBench domains, respectively (See Table~\ref{tab:common-hyperparameters}). We learn the forward-backward representations using the TD one-step FB loss and the orthonormalization regularization (Eq.~\ref{eq:onestep-fb-repr-obj}). Following prior work~\citep{park2024ogbench}, the policy network outputs the mean and the standard deviation of a Gaussian distribution with a \texttt{tanh} transformation to predict actions. We learn this Gaussian policy using the policy loss in Eq.~\ref{eq:onestep-fb-policy-obj}. Following prior work~\citep{tirinzoni2025zeroshot}, before inferring the latent variable $z^{\beta}_r$ using zero-shot transitions (Eq.~\ref{eq:onestep-zero-shot-latent}), we apply softmax weights based on the rewards of these transitions $\{ (s_i, a_i, r_i) \}_{i = 1}^N$, resulting in the following new rewards:
\begin{align}
    \tilde{r}(s_i, a_i) = w_i \cdot r(s_i, a_i), w_i = \frac{\exp \left( \tau_{\text{reward}} \cdot r(s_i, a_i)\right)}{ \sum_{j = 1}^N \exp \left( \tau_{\text{reward}} \cdot r(s_j, a_j) \right)},
    \label{eq:reward-weighting}
\end{align}
where $\tau_\text{reward}$ is the temperature. For representation dimension $d$, we set it to $50$ for ExORL domains and set it to $128$ for OGBench domains. In our initial experiments, we found that one-step FB's performance is sensitive to the behavioral-cloning regularization coefficient $\lambda_{\text{BC}}$, the orthonormalization regularization coefficient $\lambda_{\text{ortho}}$, and the reward weighting temperature $\tau_{\text{reward}}$. We perform hyperparameter sweeps over $\lambda_{\text{BC}} \in \{ 0, 0.03, 0.3, 3, 30 \}$, $\lambda_{\text{ortho}} \in \{ 0, 0.03, 0.3, 1 \}$, and $\tau_{\text{reward}} \in \{ 3, 10, 30 \}$ to select the best values for each domain. We summarize the hyperparameters for one-step FB in Table~\ref{tab:common-hyperparameters} and Table~\ref{tab:hyperparameters}.

\paragraph{FB~\citep{touati2021learning}.} Our FB implementation adapts the implementation from~\citet{tirinzoni2025zeroshot} and is similar to the one-step FB implementation. There are two main differences between the FB implementation and the one-step FB implementation. First, the forward representation network in FB takes in the latent variable $z$ as input. Second, when computing the forward-backward representation loss (Eq.~\ref{eq:td-fb}), the next action $a'$ is sampled from the latent-conditioned policy $\pi(\ph{a'} \mid \ph{s'}, \ph{z})$. During the zero-shot adaptation, similar to one-step FB, we compute the latent variable $z_r$ for a downstream task as in Eq.~\ref{eq:fb-latent-of-reward}. We use the same representation dimension as in the one-step FB implementation for consistency. We also perform hyperparameter sweeps over $\lambda_{\text{BC}} \in \{ 0, 0.03, 0.3, 3, 30 \}$, $\lambda_{\text{ortho}} \in \{ 0, 0.03, 0.3, 1 \}$, and $\tau_{\text{reward}} \in \{ 3, 10, 30 \}$ to select the best values for each domain. See Table~\ref{tab:common-hyperparameters} and Table~\ref{tab:hyperparameters} for details of hyperparameters.

\begin{table}[t]
\caption{\footnotesize Common hyperparameters for one-step FB and prior methods.}
\label{tab:common-hyperparameters}
\begin{center}
\scalebox{0.73}{
\centering
\begin{tabular}{ll}
\toprule
\textbf{Hyperparameter} & \textbf{Value} \\
\midrule
   optimizer & Adam~\citep{kingma2015adam} \\
   batch size & $1024$ on state-based domains, $256$ on image-based domains \\
   learning rate & $1 \times 10^{-4}$\\
   \cmidrule(lr){2-2}
   \multirow{2}{*}{actor MLP hidden layer sizes} & $(1024, 1024, 1024)$ on ExORL domains \\
   & $(512, 512, 512, 512)$ on OGBench domains \\
   \cmidrule(lr){2-2}
   \multirow{2}{*}{value MLP hidden layer sizes} & $(1024, 1024, 1024)$ on ExORL domains \\
   & $(512, 512, 512, 512)$ on OGBench domains \\
   \cmidrule(lr){2-2}
   \multirow{4}{*}{representation MLP hidden layer sizes} & Laplacian, BYOL-$\gamma$, ICVF, and HILP: $(256, 256, 256)$ on ExORL domains \\
   & FB and one-step FB: $(1024, 1024, 1024)$ for forward representations on ExORL domains, \\
   & \hspace{8.5em} $(512, 512)$ for backward representations on ExORL domains \\
   & All methods: $(512, 512, 512, 512)$ on OGBench domains \\
   \cmidrule(lr){2-2}
   MLP layer normalization & No for all methods except TD-JEPA (see the open-source implementation) \\
   \cmidrule(lr){2-2}
   \multirow{2}{*}{MLP activation function} & ReLU~\citep{nair2010rectified} on ExORL domains \\
   & GELU~\citep{hendrycks2023gaussian} on OGBench domains \\
   \cmidrule(lr){2-2}
   discount factor $\gamma$ & $0.98$ on ExORL domains, $0.99$ on OGBench domains \\
   target networks update coefficient & $1 \times 10^{-2}$ on ExORL domains, $5 \times 10^{-3}$ on OGBench domains \\
   \cmidrule(lr){2-2}
   \multirow{2}{*}{representation dimension $d$} & $50$ on ExORL domains and $128$ on OGBench domains \\
   & For representation dimension of TD-JEPA, see Table~\ref{tab:hyperparameters} \\
   \cmidrule(lr){2-2}
   latent mixing probability & $0.5$ \\
   actor tanh transformation & Yes \\
   TD3 action noise distribution & $\texttt{clip}(\gN(0, 0.2^2), -0.2, 0.2)$ \\
   image encoder & small IMPALA encoder~\citep{espeholt2018impala, park2025flow} \\
   image augmentation method & random cropping \\
   image augmentation probability & $0.5$ \\
   image frame stack & $3$ \\
\bottomrule
\end{tabular}
}
\end{center}
\end{table}

\begin{table}[t]
\caption{\footnotesize \textbf{Domain-specific hyperparameters for one-step FB and prior methods.}
Following prior work~\citet{park2025flow}, we tune these hyperparameters for each domain
from ExORL and OGBench benchmarks. ``-'' indicates the hyperparameters do not exist. The complete description of each hyperparameter
can be found in Appendix~\ref{appendix:implementations-and-hyperparameters}.}
\label{tab:hyperparameters}
\begin{center}
\setlength{\tabcolsep}{3pt}
\resizebox{\textwidth}{!}{
    \centering
    \begin{tabular}{l*{17}{c}}
    \toprule
    & \multicolumn{2}{c}{Laplacian}
    & \multicolumn{2}{c}{BYOL-$\gamma$}
    & \multicolumn{3}{c}{TD-JEPA}
    & \multicolumn{2}{c}{ICVF}
    & \multicolumn{2}{c}{HILP}
    & \multicolumn{3}{c}{FB}
    & \multicolumn{3}{c}{One-Step FB} \\
    \cmidrule(lr){2-3}
    \cmidrule(lr){4-5}
    \cmidrule(lr){6-8}
    \cmidrule(lr){9-10}
    \cmidrule(lr){11-12}
    \cmidrule(lr){13-15}
    \cmidrule(lr){16-18}
    Domain
    & $\lambda_{\text{BC}}$ & $\lambda_{\text{ortho}}$
    & $\lambda_{\text{BC}}$ & $\lambda_{\text{ortho}}$
    & $\lambda_{\text{BC}}$ & $\lambda_{\text{ortho}}$ & $d$
    & $\lambda_{\text{BC}}$ & $\mu$
    & $\lambda_{\text{BC}}$ & $\mu$
    & $\lambda_{\text{BC}}$ & $\lambda_{\text{ortho}}$ & $\tau_{\text{reward}}$
    & $\lambda_{\text{BC}}$ & $\lambda_{\text{ortho}}$ & $\tau_{\text{reward}}$ \\
    \midrule
    \texttt{walker}
    & $30$ & $1$
    & $10$ & $0.01$
    & $0$ & $0$ & $50$
    & $0.03$ & $0.5$
    & $0.3$ & $0.9$
    & $0$ & $0.03$ & $10$
    & $0$ & $0.1$ & $10$\\

    \texttt{cheetah}
    & $10$ & $1$
    & $3$ & $0.01$
    & $0.3$ & $1.0$ & $50$
    & $0.3$ & $0.5$
    & $3$ & $0.5$
    & $0$ & $0.3$ & $10$
    & $0$ & $1$ & $3$\\

    \texttt{quadruped}
    & $10$ & $1$
    & $3$ & $0.01$
    & $0$ & $0$ & $50$
    & $0.03$ & $0.5$
    & $3$ & $0.9$
    & $0$ & $1$ & $10$
    & $0$ & $0.03$ & $3$\\

    \texttt{jaco}
    & $30$ & $1$
    & $10$ & $1$
    & $0$ & $0.1$ & $50$
    & $0.03$ & $0.5$
    & $0.3$ & $0.9$
    & $0$ & $0.3$ & $10$
    & $0$ & $0.03$ & $3$\\

    \midrule
    \texttt{antmaze large navigate}
    & $1$ & $0.1$
    & $3$ & $0$
    & $0.03$ & $0.1$ & $128$
    & $3$ & $0.5$
    & $1$ & $0.5$
    & $0.03$ & $0$ & $10$
    & $0.03$ & $0$ & $10$\\

    \texttt{antmaze teleport navigate}
    & $30$ & $0$
    & $1$ & $0$
    & $0.3$ & $0.1$ & $50$
    & $0.3$ & $0.5$
    & $1$ & $0.5$
    & $0.03$ & $0$ & $3$
    & $0.1$ & $0$ & $10$\\

    \texttt{cube single play}
    & $1$ & $0$
    & $30$ & $0$
    & $0.3$ & $0.1$ & $50$
    & $30$ & $0.5$
    & $1$ & $0.5$
    & $0.3$ & $0$ & $10$
    & $0.3$ & $0.3$ & $300$\\

    \texttt{scene play}
    & $1$ & $0$
    & $3$ & $0$
    & $3.0$ & $0$ & $50$
    & $0.3$ & $0.5$
    & $1$ & $0.9$
    & $0.3$ & $0$ & $3$
    & $0.3$ & $0$ & $300$\\

    \midrule
    \texttt{visual cube single play}
    & - & -
    & $30$ & $0$
    & $0.3$ & $1.0$ & $50$
    & - & -
    & $1$ & $0.5$
    & $1$ & $0$ & $300$
    & $1$ & $0$ & $300$ \\

    \texttt{visual scene play}
    & - & -
    & $3$ & $0$
    & $0.1$ & $1.0$ & $50$
    & - & -
    & $3$ & $0.5$
    & $0.3$ & $0$ & $300$
    & $0.3$ & $0$ & $10$ \\
    \bottomrule
    \end{tabular}
}
\end{center}
\end{table}

Besides the FB algorithm, we also compare one-step FB against unsupervised pre-training methods for RL that first learn state representations $\phi: \gS \to \gZ$ and then use off-the-shelf RL algorithms to maximize intrinsic rewards derived from the state representations $r(s, z) = \phi(s)^{\top} z$. During the zero-shot adaptation, all methods (except one-step FB and FB) find the appropriate latent variable $z_r$ by solving a simple linear regression problem~\citep{barreto2017successor, park2024foundation}:
\begin{align*}
    z_r = \argmin_{z \in \gZ} \E_{ \rho(s, a) } \left[ \left( r(s, a) - \phi(s)^{\top} z \right)^2 \right],
\end{align*}
where $\rho \in \Delta(\gS \times \gA)$ is the marginal measure over states and actions as in one-step FB. For HILP and ICVF, both methods learn state representations using an expectile regression loss adapted from IQL~\citep{kostrikov2021offline}. They differ in how they decompose the successor measure. For BYOL-$\gamma$ and Laplacian, both methods learn state representations via consistency over successor measures (latent predictive loss). They differ in that the consistency is either along single-step or multi-step samples from the successor measure. After learning state representations, we use the same TD3 + BC~\citep{fujimoto2021minimalist} algorithm to maximize the intrinsic reward for all these baselines. The TD3 + BC implementation uses a target actor to select actions in the critic loss. We also add a clipped Gaussian noise $\texttt{clip}(\gN(0, 0.2^2), -0.2, 0.2)$ to introduce some noise into these actions. Similar to Eq.~\ref{eq:onestep-fb-policy-obj}, the actor loss maximizes Q predicted by the critic while being regularized to output the behavioral actions via a behavioral-cloning regularization. Below, we describe the details of each method.

\paragraph{HILP~\citep{park2024foundation}.} The HILP implementation is adapted from the official implementation~\citep{park2024foundation}. The motivation of representation learning in HILP is to encode the temporal (goal-conditioned) distance between pairs of states into the Euclidean distance in a $d$-dimensional representation space. To achieve this goal, the HILP learns state representations $\phi(\ph{s})$ using the following expectile loss:
\begin{align*}
    \gL_{\text{HILP}}(\phi) = \E_{p^{\pi_{\beta}}(s, s'), p_{\gG}(s_f \mid s)} \left[ L_2^{\mu} \left(-\mathbbm{1}(s \neq s_f) + \gamma \norm{\bar{\phi}(s_f) - \bar{\phi}(s')}_2 + \norm{ \phi(s_f) - \phi(s) }_2 \right) \right],
\end{align*}
where $L_2^{\mu}(x) = \abs{ \mu - \mathbbm{1}(x < 0) } x^2$ is the expectile loss with $\mu \in [0.5, 1)$, $\bar{\phi}$ is the target state representation. The future state $s_f$ is sampled from either the behavioral successor measure with probability $0.625$ or a random uniform measure with probability $0.375$, which we denote as the goal measure $p_\gG(s_f \mid s)$. For the representation dimension, we set $d = 50$ for ExORL domains and set $d = 128$ for OGBench domains. We sweep over the behavioral-cloning regularization coefficient $\lambda_{\text{BC}} \in \{ 0.03, 0.3, 3 \}$ and the expectile $\mu \in \{ 0.5, 0.9 \}$ for different domains. See Table~\ref{tab:common-hyperparameters} and Table~\ref{tab:hyperparameters} for details of hyperparameters.

\paragraph{ICVF~\citep{ghosh2023reinforcement}.} The ICVF learns state representations similar to HILP, but uses a different decomposition of the intention-conditioned successor measure. Specifically, ICVF uses an intention-conditioned value $V: \gS \times \gS \times \gS \to \R$ to model the successor measure of visiting the future state $s_f$ starting from the current state $s$ by following the intention goal $g$. Both the future state $s_f$ and the intention goal $g$ are sampled from the goal measure similar to HILP. The intention-conditioned value decomposes as $V(\ph{s}, \ph{s_f}, \ph{g}) = \phi(\ph{s})^{\top} T_{\text{ICVF}}(\ph{g}) \psi(\ph{s_f})$, where $T_{\text{ICVF}}: \gS \to \gZ \times \gZ$ predicts a latent transition matrix and $\psi: \gS \to \gZ$ is the future state representations. ICVF learns the intention-conditioned value using the following variant of the expectile loss:
{\footnotesize \begin{align*}
    \gL_{\text{ICVF}}(\phi, T_{\text{ICVF}}, \psi) = \E_{ p^{\pi_{\beta}}(s, s'), p_{\gG}(s_f \mid s), p_{\gG}(g \mid s) } \left[ \abs{ \mu - \mathbbm{1} \left( A(s, s', g) < 0 \right)} \left( V(s, s_f, g) - \mathbbm{1}(s = s_f) - \gamma V(s', s_f, g) \right)^2 \right],
\end{align*}}where $\mu \in [0.5, 1)$ and the advantage $A(s, s', g)$ is defined as
\begin{align*}
    A(s, s', g) = \mathbbm{1}(s = g) + \gamma V(s', g, g) - V(s, g, g).
\end{align*}
For the representation dimension, we use the same values as in HILP for consistency. We sweep over the behavioral-cloning regularization coefficient $\lambda_{\text{BC}} \in \{ 0.03, 0.3, 3\}$ and the expectile $\mu \in \{ 0.5, 0.7, 0.9 \}$ for different domains. See Table~\ref{tab:common-hyperparameters} and Table~\ref{tab:hyperparameters} for details of hyperparameters.

\paragraph{BYOL-$\gamma$~\citep{lawson2025self}.} BYOL-$\gamma$ learns state representations via consistency (or a latent predictive loss) over the current state and a future state sampled from the behavioral occupancy measure. Specifically, BYOL-$\gamma$ minimizes the mean squared error between state representations of the current state $s$ and the future state $s_f$ with a latent transition function $T_{\text{BYOL-$\gamma$}}: \gZ \times \gA \to \gZ$,
\begin{align*}
    \gL_{\text{BYOL-$\gamma$}}(\phi, T_{\text{BYOL-$\gamma$}}) = \E_{ p^{\pi_{\beta}}(s, a), M^{\pi_{\beta}}(s_f \mid s, a) } \left[ \norm{ T_{\text{BYOL-$\gamma$}}(\phi(s), a) - \bar{\phi}(s_f) }_2^2 \right],
\end{align*}
where $\bar{\phi}$ is the target state representation, and $M^{\pi_{\beta}}(\ph{s_f} \mid \ph{s}, \ph{a})$ is the behavioral successor measure following a geometric distribution. We also include an orthonormalization regularization loss as in Eq.~\ref{eq:onestep-fb-ortho-reg} to regularize the covariance of state representations. For the representation dimension, we follow the same values as in HILP. In our experiments, we sweep over the behavioral-cloning regularization coefficient $\lambda_{\text{BC}} \in \{ 0.03, 0.3, 3, 30 \}$ and the orthonormalization regularization coefficient $\lambda_{\text{ortho}} \in \{ 0.01, 0.1, 1.0 \}$ to find the best values for different domains. We include complete hyperparameters in Table~\ref{tab:common-hyperparameters} and Table~\ref{tab:hyperparameters}.

\paragraph{Laplacian~\citep{wu2018laplacian}.} Unlike BYOL-$\gamma$, Laplacian learns state representations via consistency over the current state and the immediate next state. Specifically, the Laplacian minimizes the mean squared error between state representations of the current state $s$ and the next state $s'$:
\begin{align*}
    \gL_{\text{Laplacian}}(\phi) = \E_{ p^{\pi_{\beta}}(s, s') } \left[ \norm{ \phi(s) - \phi(s') }_2^2 \right].
\end{align*}
Note that we do not use a latent transition function or a target state representation in this loss. Although state representations collapsed to a constant admitting a minimizer of this loss function, we do not observe this behavior in our experiments. Again, we include an orthonormalization regularization loss as in Eq.~\ref{eq:onestep-fb-ortho-reg} to regularize the covariance of state representations towards the identity matrix. For the representation dimension, we use the same values as in HILP. In our experiments, we sweep over the behavioral-cloning regularization coefficient $\lambda_{\text{BC}} \in \{ 0.3, 1, 3, 10, 30 \}$ and the orthonormalization regularization coefficient $\lambda_{\text{ortho}} \in \{0.01, 0.1, 1.0 \}$ to find the best values for different domains. We include complete hyperparameters in Table~\ref{tab:common-hyperparameters} and Table~\ref{tab:hyperparameters}.

\paragraph{TD-JEPA~\citep{bagatella2025td}.}
TD-JEPA is a latent-predictive zero-shot RL method that learns a state encoder
$\phi:\gS \to \R^{d_\phi}$, a task encoder $\psi:\gS \to \R^{d_\psi}$, 
policy-conditioned predictors $T_\phi:\R^{d_\phi}\times\gA\times\gZ\to\R^{d_\psi}$ and 
$T_\psi:\R^{d_\psi}\times\gA\times\gZ\to\R^{d_\phi}$, and a latent-conditioned policy
$\pi(\ph{a}\mid \phi(\ph{s}), \ph{z})$. Intuitively, the predictor $T_\phi$ estimates the
successor features of the latent-conditioned policy $\pi_z$ in the task-representation space:
\begin{align*}
    T_\phi(\phi(s), a, z) 
    \approx \E_{s_f \sim M^{\pi_z}(\cdot \mid s, a)}
    \left[\psi(s_f)\right].
\end{align*}
TD-JEPA uses a
temporal-difference latent-predictive loss to train the state encoder $\phi$, task encoder $\psi$, and the policy-conditioned predictor $T_\phi$:
\begin{align*}
    \gL_{\text{TD-JEPA}}(\phi, T_\phi, \psi)
    =
    \E_{\substack{(s,a,s')\sim \gD,\, z\sim p_{\gZ}(z),\\
    a'\sim \pi(\cdot\mid \bar{\phi}(s'),z)}}
    \left[
    \norm*{
    T_\phi(\phi(s), a, z)
    - \bar{\psi}(s')
    - \gamma \bar{T}_\phi(\bar{\phi}(s'), a', z)
    }_2^2
    \right],
\end{align*}
where $\bar{\phi}$, $\bar{\psi}$, and $\bar{T}_\phi$ denote target networks. Additionally, this method also trains the
task encoder $\psi$ through the symmetric objective $\gL_{\text{TD-JEPA}}(\psi, T_\psi, \phi)$,
obtained by swapping the roles of $\phi$ and $\psi$:
\begin{align*}
    \gL_{\text{TD-JEPA}}(\psi, T_\psi, \phi)
    =
    \E_{\substack{(s,a,s')\sim \gD,\, z\sim p_{\gZ}(z),\\
    a'\sim \pi(\cdot\mid \bar{\phi}(s'),z)}}
    \left[
    \norm*{
    T_\psi(\psi(s), a, z)
    - \bar{\phi}(s')
    - \gamma \bar{T}_\psi(\bar{\psi}(s'), a', z)
    }_2^2
    \right].
\end{align*}
Following TD-JEPA, we regularize the learned encoders with an orthonormality regularization similar to Eq~\ref{eq:onestep-fb-ortho-reg} in one-step FB (see Algorithm 1 in~\citep{bagatella2025td} for details).
The representation objective is therefore
\begin{align*}
    \gL(\phi,T_\phi,\psi,T_\psi)
    =
    \gL_{\text{TD-JEPA}}(\phi,T_\phi,\psi)
    +
    \gL_{\text{TD-JEPA}}(\psi,T_\psi,\phi)
    +
    \lambda_{\text{ortho}}
    \left(
    \widehat{\gL}_{\text{ortho}}(\phi)
    +
    \widehat{\gL}_{\text{ortho}}(\psi)
    \right).
\end{align*}

In our experiments, we tune the behavioral-cloning regularization coefficient $\lambda_{\text{BC}} \in \{0, 0.03, 0.1, 0.3, 1, 3 \}$ and the orthonormalization regularization coefficient $\lambda_{\text{ortho}} \in \{0, 0.01, 0.1, 1.0 \}$ to find the best values for different domains. We report the best selected hyperparameters in Table~\ref{tab:hyperparameters}.

\section{Additional Experiments}

\subsection{Does One-Step FB Enable Efficient Fine-Tuning?}
\label{appendix:fine-tuning-experiments}

\begin{figure}[t]
    \begin{minipage}[c]{0.49\textwidth}
        \centering
        \includegraphics[width=\linewidth]{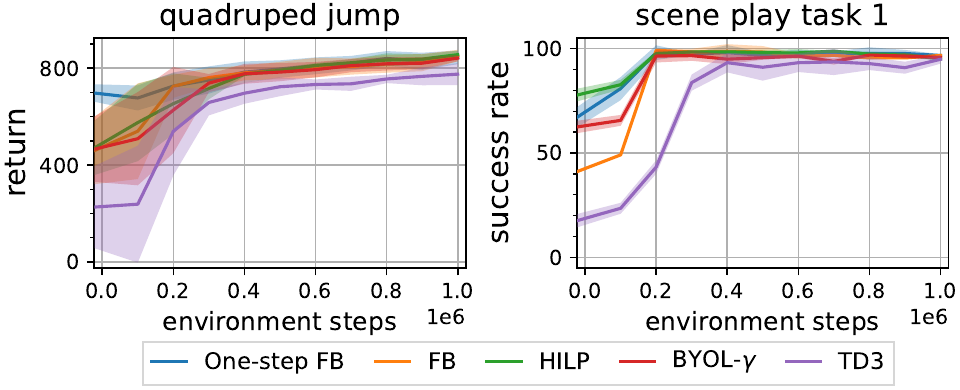}
        \caption{\textbf{Fine-tuning pre-trained agents on downstream tasks.} After offline pre-training, we conduct online fine-tuning on various methods using the same off-the-shelf RL algorithm (TD3). One-step FB continues to provide higher sample efficiency ($+40\%$ on average) during fine-tuning, as compared with the original FB method.
        }
        \label{fig:finetuning}
    \end{minipage}
    \hfill
    \begin{minipage}[c]{0.49\textwidth}
        \centering
        \includegraphics[width=\linewidth]{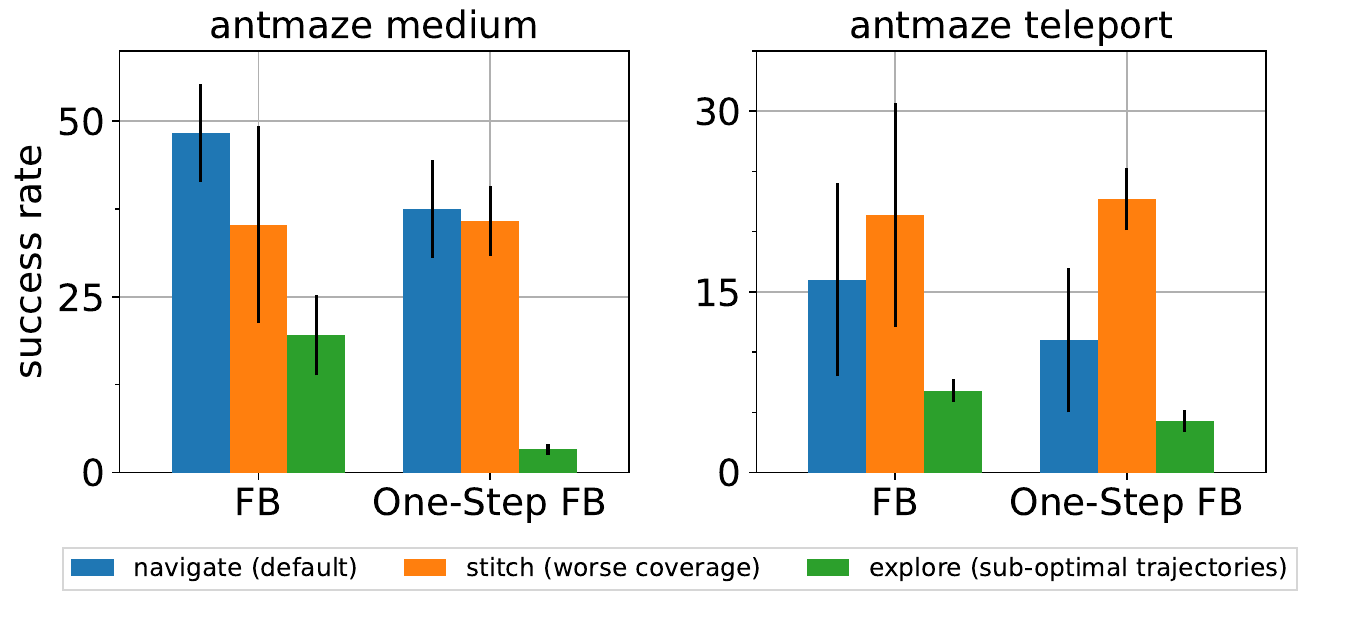}
        \caption{\textbf{The effects of the dataset quality on FB and one-step FB.} Using \texttt{stitch} dataset has minor effects on both FB and one-step FB, and sometimes even improves the performance, while using \texttt{explore} datasets reduces performance for both methods.
        }
        \label{fig:dataset-type-ablation}
    \end{minipage}
\end{figure}

While our prior experiments in Sec.~\ref{subsec:offline-rl-experiments} focus on zero-shot performance, we also want to study whether one-step FB enables efficient online fine-tuning. To test this hypothesis, we conduct offline-to-online experiments on one task in the ExORL benchmarks (\texttt{quadruped jump}) and another task taken from the OGBench benchmarks (\texttt{scene task 1}). Following the evaluation protocols in Appendix~\ref{appendix:envs-and-datasets}, we first use $10^5$ zero-shot transitions to derive the latent variable $z_r$ for the policy $\pi(\ph{a} \mid \ph{s}, z_r)$, and then use it to initialize a TD3~\citep{fujimoto2018addressing} agent for fine-tuning. We selectively compare one-step FB against $3$ baselines in our offline experiments: BYOL-$\gamma$, HILP, and FB, and also include the performance of a TD3 agent trained from scratch for reference.

As shown in Fig.~\ref{fig:finetuning}, the policy derived from one-step FB achieves strong fine-tuning performance compared to alternative unsupervised pre-training, with $40\%$ higher sample efficiency on average. Compared with the TD3 variant trained from scratch, the sample efficiency of fine-tuning the policy derived from one-step FB is $+2.25 \times$ higher, suggesting the importance of the unsupervised pre-training phase. We also observe that the fine-tuned policies reach the asymptotic performance of TD3 at the end of training. Interestingly, we do \emph{not} observe a degradation in performance at the beginning of fine-tuning; an observation that has been identified in prior offline-to-online methods~\citep{nakamoto2023calql}. In particular, we do not retain the pre-training data when performing online fine-tuning~\citep{zhou2025efficient}, helping to explain this observation. Taken together, one-step FB is a simpler method that provides benefits for both offline unsupervised pre-training and online fine-tuning.

\subsection{The Effects of the Dataset Quality on FB and One-Step FB}
\label{appendix:dataset-effects}

Since one-step FB performs one step of policy improvement over the behavioral datasets, it is important to examine the effects of dataset quality on our algorithm. We hypothesize that our method can be effective when \emph{(1)} the dataset has good coverage of the state and action spaces, and \emph{(2)} the dataset contains nearly optimal trajectories.

To test these hypotheses, we conduct ablations on two types of OGBench datasets: \texttt{stitch} (worse coverage on low-reward regions) and \texttt{explore} (highly sub-optimal trajectories). Specifically, we select two domains from the OGBench benchmarks, $\texttt{antmaze medium}$ and $\texttt{antmaze teleport}$, and compare the zero-shot performances of both FB and one-step FB on three different types of datasets: \texttt{navigate} (default), \texttt{stitch}, and \texttt{explore}.

Results in Fig.~\ref{fig:dataset-type-ablation} suggest that using the \texttt{stitch} dataset has minor effects on both FB and one-step FB, and sometimes even improves the performance, while using \texttt{explore} datasets reduces performance for both methods. We conjecture that worse coverage may not be a main issue when the dataset contains high-reward transitions. In contrast, highly sub-optimal trajectories introduce noise when inferring the latent variable $z$ during policy adaptation, which significantly lowers the zero-shot performance.

\subsection{Confounding Effects in the Didactic Experiments}
\label{appendix:confounding}

In this section, we provide a more detailed discussion of potential confounding effects in our didactic experiments. The aim is to clarify the observation that the practical FB algorithm fails to converge, while our one-step FB algorithm converges to its stated fixed-point.

\paragraph{Learning rate.} \emph{Does the learning rate affect the convergence of FB?} We ablate over different learning rates within $\{10^{-2}, 10^{-3}, 10^{-4}, 10^{-5}, 10^{-6} \}$, choosing a wider range than the learning rate $10^{-4}$ used in Sec.~\ref{subsec:fb-didactic-exp} and Sec.~\ref{subsec:onestep-fb-didactic-exp}. As seen in Fig.~\ref{fig:didactic_lr_sweep}, using learning rates either larger than or smaller than $10^{-4}$ results in a higher $\hat{Q}$ equivariance error ($\epsilon_\text{equiv} > 10^{-4}$), indicating that the original FB algorithm still does \emph{not} converge to the ground-truth fixed point. These observations are consistent with our conclusions in Sec.~\ref{subsec:fb-didactic-exp}.

\begin{figure}[t]
    \vspace{-0.5em}
    \centering
    \includegraphics[width=1\linewidth]{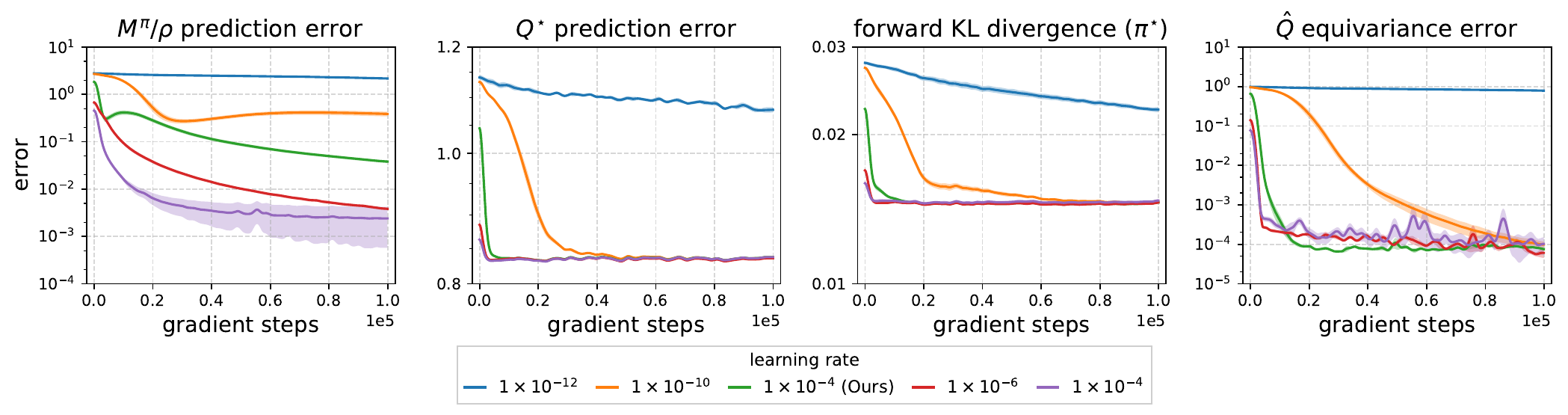}
    \caption{\textbf{Learning rate ablations for FB on the three-state CMP.} We conduct ablations to study the effect of learning rate on FB. Using learning rates other than $10^{-4}$ results in higher equivariance error of Q-value prediction ($\epsilon_\text{equiv} > 10^{-4}$). Thus, the failure mode of FB is \emph{not} explained by the choice of learning rate.} %
    \label{fig:didactic_lr_sweep}
    \vspace{-1.25em}
\end{figure}

\paragraph{Policy temperature.} \emph{Does the policy temperature $\tau_{\text{policy}}$ affect the convergence of FB?} 
As mentioned in Appendix~\ref{appendix:didactic-exp-fb}, we use a softmax policy (Eq.~\ref{eq:fb-didactic-exp-softmax-policy}) with temperature $\tau_{\text{policy}}$ to approximate the greedy policy with respect to the inner product $F(\ph{s}, \ph{a}, \ph{z})^{\top} \ph{z}$. One confounding factor is the choice of the temperature $\tau_{\text{policy}}$. We conduct ablation experiments studying the effects of the policy temperature $\tau_{\text{policy}}$ on the convergence of FB using the same three-state CMP as in Sec.~\ref{subsec:fb-didactic-exp}. Specifically, we choose to sweep over $\tau_{\text{policy}} \in \{ 1, 10^{-6}, 10^{-3}, 1, 10, 200 \}$, showing results in Fig.~\ref{fig:didactic_softmax_temp_sweep}. We observe that a decreasing $\tau_{\text{policy}}$ results in an increasing Q prediction equivariance error $\epsilon_{\text{equiv}}$, suggesting that $\tau_{\text{softmax}}$ is an important hyperparameter balancing learnability and convergent accuracy. However, FB still fails to converge to a ground-truth fixed point.

\begin{figure}[t]
    \centering
    \includegraphics[width=0.99\linewidth]{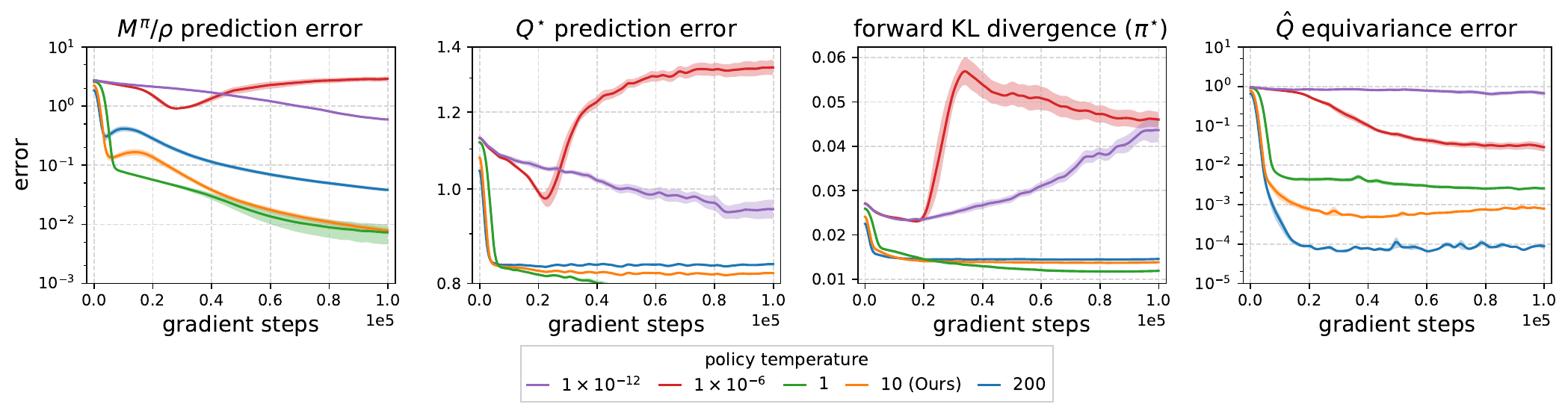}
    \caption{\textbf{Policy temperature ablations for FB on the three-state CMP.} We study the effect of policy temperature $\tau_{\text{policy}}$ on the convergence of FB: a decreasing $\tau_{\text{policy}}$ results in an increasing Q prediction equivariance error $\epsilon_{\text{equiv}}$, suggesting that FB still fails to converge to the ground-truth fixed point. 
    }
    \label{fig:didactic_softmax_temp_sweep}
    \vspace{-1em}
\end{figure}

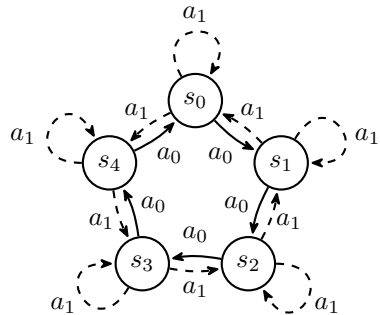
\begin{wrapfigure}[19]{R}{0.5\textwidth}
    \vspace{-2.5em}
    \centering
    \begin{tikzpicture} 
        \foreach \i/\angle in {0/90, 1/18, 2/-54, 3/-126, 4/162} {
            \node[state_style] (s\i) at (\angle:1.2) {$s_\i$};
        }
    
        \foreach \i/\j in {0/1, 1/2, 2/3, 3/4, 4/0} {
            \path[edge_style] (s\i) edge[bend right=10] (s\j);
        
            \coordinate (m) at ($(s\i)!0.5!(s\j)$);
        
            \node[font=\sffamily, inner sep=1pt]
                at ($(0,0)!0.6!(m)$) {$a_0$};
        }

        \foreach \i/\j in {0/1, 1/2, 2/3, 3/4, 4/0} {
            \path[stoch_style] (s\j) edge[bend right=10] (s\i);

            \coordinate (m) at ($(s\j)!0.5!(s\i)$);

            \node[font=\sffamily, inner sep=1pt]
                at ($(0,0)!1.35!(m)$) {$a_1$};
        }
    
        \foreach \i/\angle in {0/90, 1/18, 2/-54, 3/-126, 4/162} {
            \ifnum\i=0
                \path[stoch_style] (s\i) edge[loop above, loop_style, in=60, out=120] (s\i);
                \node[font=\sffamily] at ($(s\i)+(0,1.15)$) {$a_1$};
            \fi
            \ifnum\i=1
                \path[stoch_style] (s\i) edge[loop right, loop_style, in=0, out=60] (s\i);
                \node[font=\sffamily] at ($(s\i)+(1.15,0.35)$) {$a_1$};
            \fi
            \ifnum\i=2
                \path[stoch_style] (s\i) edge[loop below, loop_style, in=-60, out=0] (s\i);
                \node[font=\sffamily] at ($(s\i)+(1.05,-0.55)$) {$a_1$};
            \fi
            \ifnum\i=3
                \path[stoch_style] (s\i) edge[loop left, loop_style, in=-180, out=-120] (s\i);
                \node[font=\sffamily] at ($(s\i)+(-1.05,-0.55)$) {$a_1$};
            \fi
            \ifnum\i=4
                \path[stoch_style] (s\i) edge[loop above, loop_style, in=-240, out=-180] (s\i);
                \node[font=\sffamily] at ($(s\i)+(-1.15,0.35)$) {$a_1$};
            \fi
        }
    \end{tikzpicture}
    \caption{\footnotesize \textbf{The five-state circular CMP.} Agents start from state $s_0$ and take action $a_i$ ($i = 0, 1$). At every state $s_i$, choosing $a_0$ convergence transits to the next state $s_{(i + 1) \mod 5}$, forming circular transitions. At every state $s_i$, choosing action $a_1$ transits to state $s_{(i - 1) \mod 5}$ with a probability of $0.7$ and stays in the same state with a probability of $0.3$, forming the stochastic transitions. Appendix~\ref{appendix:confounding} uses this simple CMP to study the convergence of the FB and the one-step FB algorithms.}
    \label{fig:didactic-five-state-circular-cmp}
\end{wrapfigure}

\paragraph{Representation parameterization.} \emph{Does a different parameterization of the FB representations affect its convergence?} As mentioned in Appendix~\ref{appendix:didactic-exp-fb}, we used differentiable matrices $F_z \in \R^{\abs{\gS \times \gA} \times d}$ and $B \in \R^{\abs{\gS \times \gA} \times d}$ to represent the FB representations for a latent variable $z$. This parameterization simplifies the optimization procedure. To study the effects of this parameterization, we conduct ablation experiments using the same three-state CMP as in Sec.~\ref{subsec:fb-didactic-exp}. We choose to compare the differentiable matrix parameterization against a monolithic feed-forward neural network (MLP) parameterization. Results in Fig.~\ref{fig:didactic_repr_parameterization_ablation} show that using differentiable matrices yields lower error in successor measure ratio predictions, while using MLP parameterizations results in lower equivalence errors in the Q-value prediction. However, the similar failure mode for FB's convergence persists.

\begin{figure}[t]
    \centering
    \includegraphics[width=0.99\linewidth]{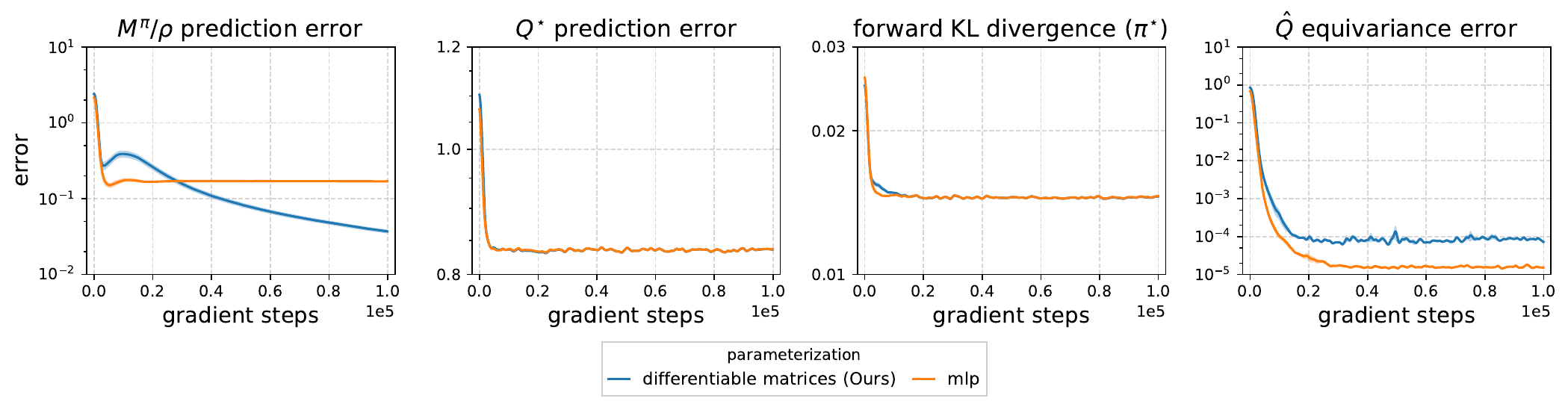}
    \caption{\textbf{Representation parameterization ablations for FB on the three-state CMP.} We study the effect of representation parameterization on the convergence of FB: using differentiable matrices yields lower error in successor measure ratio predictions, while using MLP parameterizations results in lower equivalence errors in the Q-value prediction. However, the similar failure mode for FB's convergence persists.
    }
    \label{fig:didactic_repr_parameterization_ablation}
    \vspace{-1em}
\end{figure}

\paragraph{Representation learning objective.} \emph{Does directly optimizing the TD LSIF loss (Eq.~\ref{eq:td-fb}) with a target network help FB converge?} Since our didactic experiments use a CMP with a discrete state and action space, we choose to compute the successor measure ratio $M^{\pi}_{\gZ} \text{diag}(\rho)^{-1}$ analytically, and fit the FB representations $F_{\gZ}$ and $B$ using the mean squared error (See Appendix~\ref{appendix:didactic-exp-fb}). However, the practical FB algorithm optimizes the TD LSIF loss (Eq.~\ref{eq:td-fb}) directly without computing the successor measure ratio (impossible to compute for continuous CMPs). We conduct ablation experiments comparing the effects of using these two objectives for learning FB representations on FB's convergence. Results on the same three-state CMP (Fig.~\ref{fig:didactic_repr_obj_ablation}) show that while analytically computing the successor measure ratio consistently achieved lower errors on different metrics than the TD LSIF loss. The final learned FB representations still failed to converge to the ground-truth FB representations.

\begin{figure}[t]
    \centering
    \includegraphics[width=0.99\linewidth]{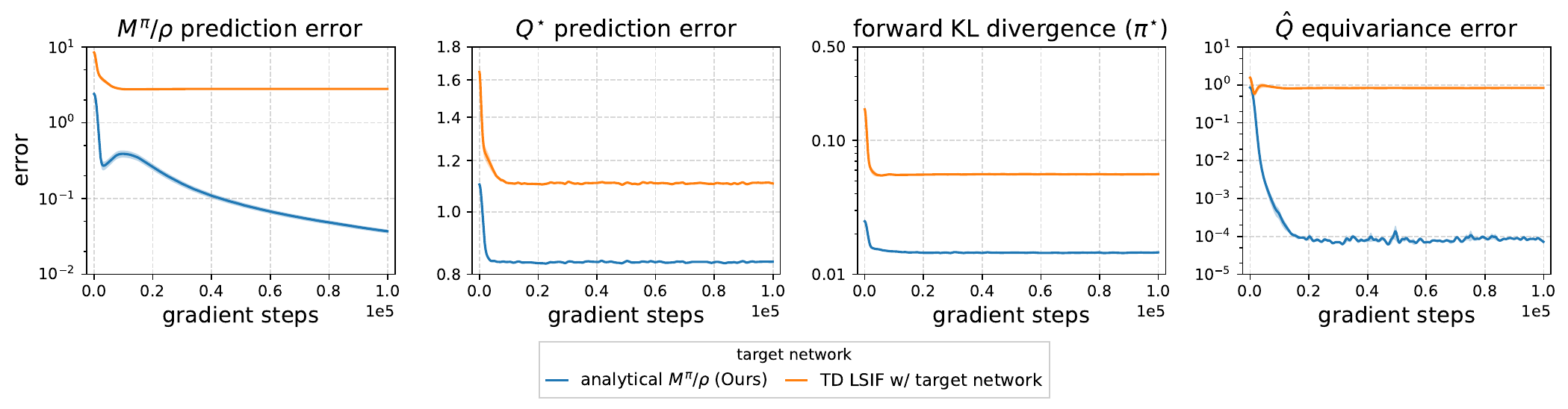}
    \caption{\textbf{Representation learning objective ablations for FB on the three-state CMP.} We conduct ablations to compare two representation learning objectives in our didactic experiments: \emph{(1)} analytically compute the successor measure ratio and conduct bilinear decomposition into FB representations, and \emph{(2)} learn the FB representations using the TD LSIF loss (Eq.~\ref{eq:td-fb}) directly. While the former learning objective consistently achieved lower errors on different metrics than the latter objective. The final learned FB representations still failed to converge to the ground-truth FB representations.
    }
    \label{fig:didactic_repr_obj_ablation}
    \vspace{-1em}
\end{figure}

\paragraph{Different discrete CMPs.} \emph{Is the three-state CMP a special case where FB fails to converge?} To rule out the confounding factors originating from the choice of CMPs, we conduct additional didactic experiments on a new CMP, similar to Sec.~\ref{subsec:fb-didactic-exp} and Sec.~\ref{subsec:onestep-fb-didactic-exp}. Specifically, we construct a five-state circular CMP (Fig.~\ref{fig:didactic-five-state-circular-cmp}), where agents start from state $s_0$ and take action $a_i$ ($i = 0, 1$). At every state $s_i$, choosing $a_0$ convergence transits to the next state $s_{(i + 1) \mod 5}$, forming circular transitions. At every state $s_i$, choosing action $a_1$ transits to state $s_{(i - 1) \mod 5}$ with a probability of $0.7$ and stays in the same state with a probability of $0.3$, forming the stochastic transitions. 

As shown in Fig.~\ref{fig:didactic-exp-five-state-circular-cmp}, we track the important statistics for both FB and one-step FB, similar to Sec.~\ref{subsec:fb-didactic-exp} and Sec.~\ref{subsec:onestep-fb-didactic-exp}. Importantly, in this new five-state circular CMP, we observe convergences similar to those in the three-state CMP for both methods. These results also highlight that FB fails to converge to a pair of ground-truth FB representations, while one-step FB exactly fits the ground-truth one-step FB representations within $4 \times {10}^4$ gradient steps. Taken together, our conclusions are consistent across different didactic CMPs.

\begin{figure}[t]
    \centering
    \includegraphics[width=.7\linewidth]{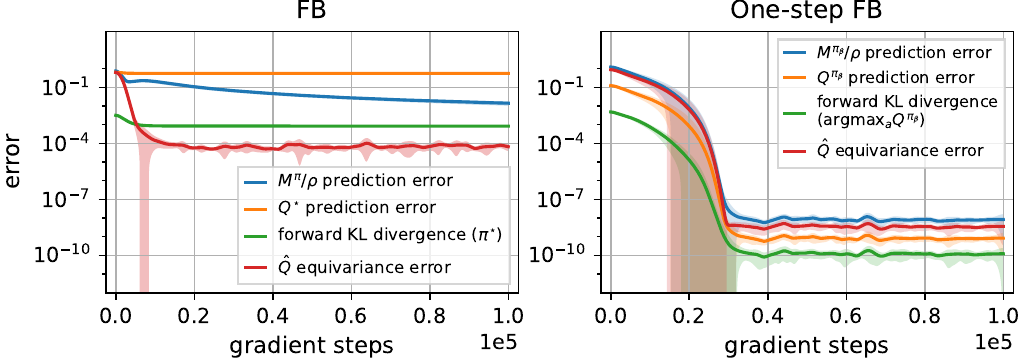}
    \caption{\footnotesize \textbf{Learning FB representations in the five-state circular CMP.} \figleft \, After training for $10^5$ gradient steps, FB fails to converge to a pair of ground-truth FB representations. \figright \, Given a fixed policy, one-step FB exactly fits the ground-truth one-step FB representations within $4 \times 10^4$ gradient steps. These observations are consistent with our analysis on the three-state CMP (Sec.~\ref{subsec:fb-didactic-exp} and Sec.~\ref{subsec:onestep-fb-didactic-exp}). 
    }
    \label{fig:didactic-exp-five-state-circular-cmp}
    \vspace{-1.5em}
\end{figure}

\subsection{Key Components of One-Step FB}
\label{appendix:ablation}

In this section, we conduct ablation experiments studying key components of one-step FB. We choose two ExORL domains \texttt{walker} and \texttt{cheetah}, and two OGBench domains \texttt{antmaze large navigate} and \texttt{scene play} to conduct experiments. As mentioned in Sec.~\ref{appendix:implementations-and-hyperparameters}, there are four key hyperparameters of one-step FB: \emph{(1)} the behavioral-cloning regularization coefficient $\lambda_{\text{BC}}$, \emph{(2)} the orthonormalization regularization coefficient $\lambda_{\text{ortho}}$, \emph{(3)} the reward weighting temperature $\tau_{\text{reward}}$, and \emph{(4)} the representation dimension $d$. Following the same evaluation protocol as in Appendix~\ref{appendix:experiment-details}, we compute means and standard deviations over $8$ random seeds for each domain.

We first study the effects of the behavioral-cloning regularization coefficient $\lambda_{\text{BC}}$, ablating over different values of $\lambda_{\text{BC}}$ within $\{ 0, 0.03, 0.3, 3, 30 \}$. We selectively show the comparisons between several $\lambda_{\text{BC}}$ values in Fig.~\ref{fig:ablation-bc}. These results suggest that removing the behavioral-cloning regularization $\lambda_{\text{BC}} = 0$ is important on ExORL domains, where our choices achieved $+1.6 \times$ improvement on average. This observation is consistent with findings from prior work~\citep{park2024foundation}, where they also exclude the behavioral-cloning regularization on ExORL tasks. In contrast, on OGBench domains, selecting a non-zero $\lambda_{\text{BC}}$ boosts the performance.

Next, to better understand the role of orthonormalizing the backward representations, we ablate over different values of the orthonormalization coefficient $\lambda_{\text{ortho}} \in \{0.01, 0.1, 0.0, 1.0, 10.0 \}$ and compare the zero-shot performance of one-step FB variants. We present the results with selective values of $\lambda_{\text{ortho}}$ in Fig.~\ref{fig:ablation-ortho}. Overall, the performance of one-step FB is sensitive to the choice of $\lambda_{\text{ortho}}$ on both ExORL and OGBench domains. We observe that setting $\lambda_{\text{ortho}} < 0.1$ results in lower performance on ExORL domains used for ablation experiments.
In contrast, using a very large $\lambda_{\text{ortho}}$ has negative effects on one-step FB for OGBench domains. 
Additionally, we observe that simply removing the orthonormalization regularization ($\lambda_{\text{ortho}} = 0$) can boost the success rate by at most $+2\times$ on OGBench domains. This indicates that using an appropriate value of $\lambda_{\text{ortho}}$ is key to the performance of one-step FB.

We also study the effect of our reward weighting strategy in Eq.~\ref{eq:reward-weighting} by ablating over different temperatures, $\tau_{\text{reward}} \in \{3, 10, 30, 300 \}$, in the softmax function. In Fig.~\ref{fig:ablation-tau}, we compare the zero-shot performance of three variants of one-step FB on each domain. These results suggest that one-step FB is less sensitive to the choice of $\tau_{\text{reward}}$ on each domain. However, using larger values of $\tau_{\text{reward}}$ can slightly boost performance on OGBench domains. Therefore, we still tune the reward weighting temperature $\tau_{\text{reward}}$ for each domain separately and select the best candidates. 

Finally, we study the effects of the representation dimension $d$. Both prior work~\citep{park2024foundation, touati2022does} and our Proposition~\ref{prop:fb-existence} have suggested that the representation dimension plays an important role for one-step FB. We sweep over $d = \{ 25, 50, 100, 128, 256, 512\}$ and selectively show performance of several values. As shown in Fig.~\ref{fig:ablation-repr}, the choice of representation dimension $d$ can vary the performance of one-step FB significantly on both ExORL and OGBench domains. We find that using $d = 50$, which is the same value as in~\citet{park2024foundation} and~\citet{touati2022does}, is sufficient for ExORL domains. However, when increasing the representation dimension, we do \emph{not} observe a consistent improvement over zero-shot performance. We conjecture that a finite representation dimension $d < \infty$ always learns a low-rank approximation of the successor measure ratio as in Corollary~\ref{corollary:finite-d-is-insufficient}. Thus, some choices of $d$ might result in a better low-rank approximation. On OGBench domains, we select $d = 512$ for consistency with prior work~\citep{bagatella2025td}, although $d = 128$ gives better performance on some domains.

Taken together, we tune the behavioral-cloning regularization coefficient $\lambda_{\text{BC}}$, the orthonormalization coefficient $\lambda_{\text{ortho}}$, the reward weighting temperature $\tau_{\text{reward}}$, and the representation dimension $d$ on different domains. In general, our choices of hyperparameters are effective for one-step FB on both ExORL and OGBench domains.

\begin{figure*}[t]
    \centering
    
    \begin{subfigure}{\linewidth}
        \centering
        \includegraphics[width=.99\linewidth]{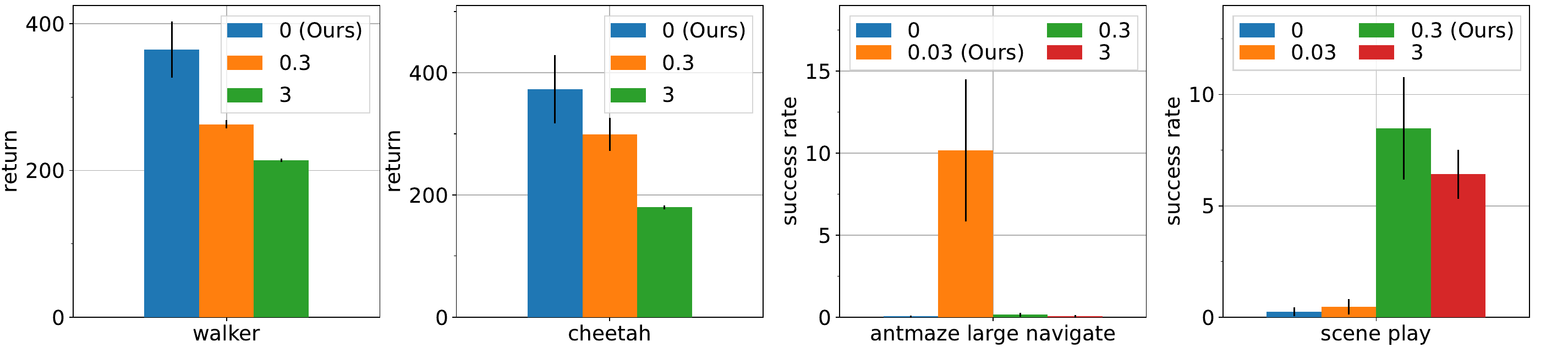}
        \caption{Behavioral-cloning regularization coefficient $\lambda_{\text{BC}}$.}
        \label{fig:ablation-bc}
    \end{subfigure}
    
    \vspace{0.25cm} %
    
    \begin{subfigure}{\linewidth}
        \centering
        \includegraphics[width=.99\linewidth]{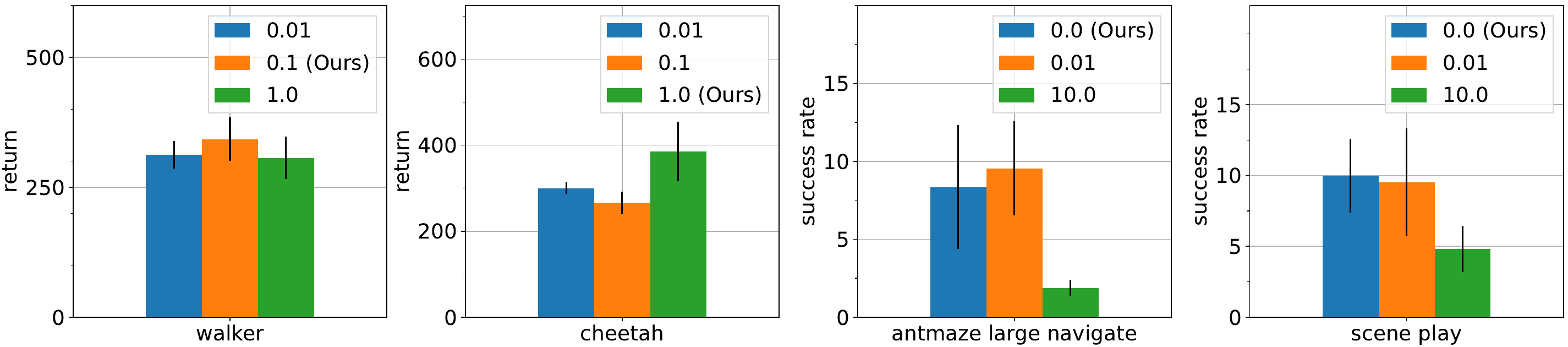}
        \caption{Orthonormalization regularization coefficient $\lambda_{\text{ortho}}$.}
        \label{fig:ablation-ortho}
    \end{subfigure}
    
    \vspace{0.25cm}
    
    \begin{subfigure}{\linewidth}
        \centering
        \includegraphics[width=.99\linewidth]{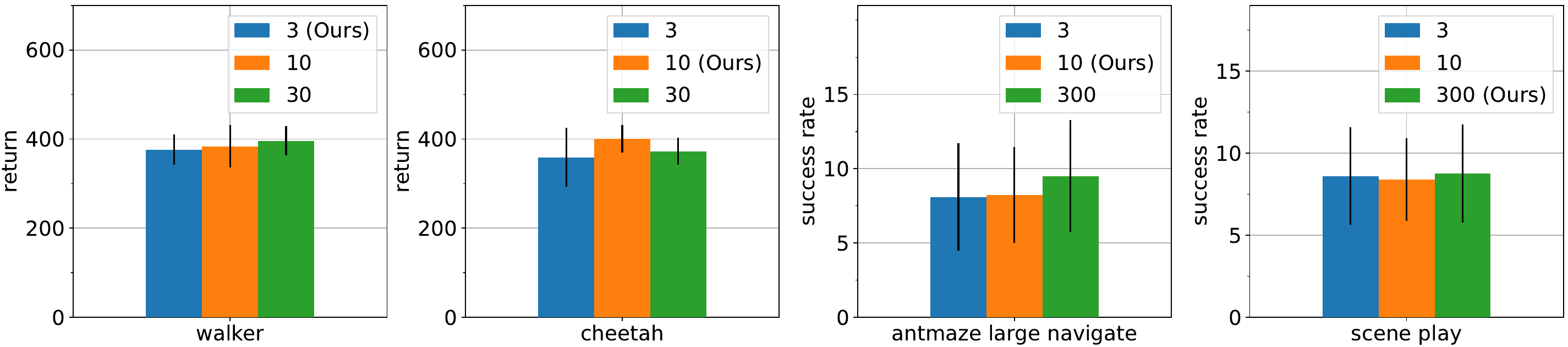}
        \caption{Reward weighting temperature $\tau_{\text{reward}}$.}
        \label{fig:ablation-tau}
    \end{subfigure}
    
    \vspace{0.25cm}
    
    \begin{subfigure}{\linewidth}
        \centering
        \includegraphics[width=.99\linewidth]{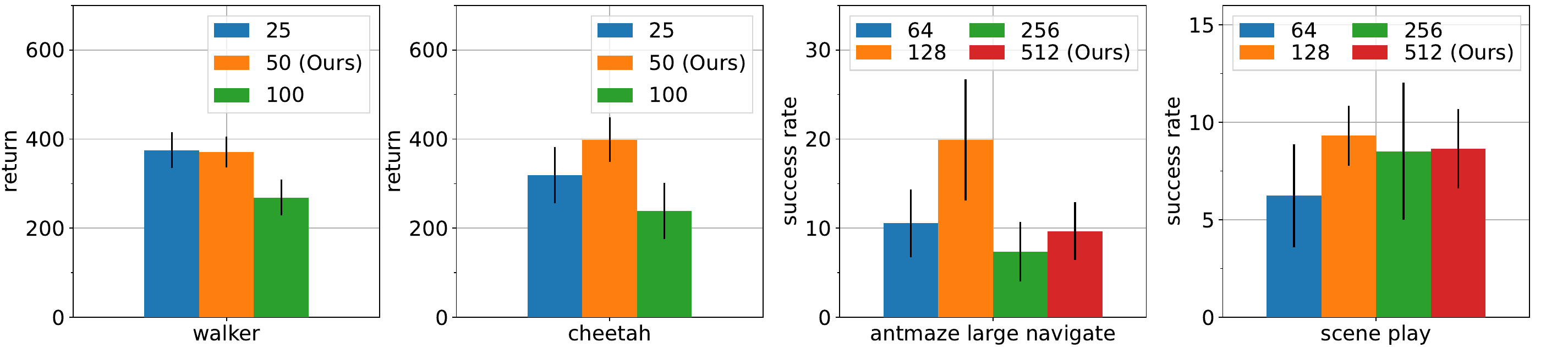}
        \caption{Representation dimension $d$.}
        \label{fig:ablation-repr}
    \end{subfigure}

    \caption{\footnotesize \textbf{Hyperparameter ablations.} We conduct ablation experiments to study the effect of key components of one-step FB on \texttt{walker}, \texttt{cheetah}, \texttt{antmaze large navigate}, and \texttt{scene play}.}
    \label{fig:hyperparameter_ablations}
\end{figure*}


\begin{thebibliography}{116}
\providecommand{\natexlab}[1]{#1}
\providecommand{\url}[1]{\texttt{#1}}
\expandafter\ifx\csname urlstyle\endcsname\relax
  \providecommand{\doi}[1]{doi: #1}\else
  \providecommand{\doi}{doi: \begingroup \urlstyle{rm}\Url}\fi

\bibitem[Achiam et~al.(2023)Achiam, Adler, Agarwal, Ahmad, Akkaya, Aleman, Almeida, Altenschmidt, Altman, Anadkat, et~al.]{achiam2023gpt}
Josh Achiam, Steven Adler, Sandhini Agarwal, Lama Ahmad, Ilge Akkaya, Florencia~Leoni Aleman, Diogo Almeida, Janko Altenschmidt, Sam Altman, Shyamal Anadkat, et~al.
\newblock Gpt-4 technical report.
\newblock \emph{arXiv preprint arXiv:2303.08774}, 2023.

\bibitem[Agarwal et~al.(2019)Agarwal, Jiang, Kakade, and Sun]{agarwal2019reinforcement}
Alekh Agarwal, Nan Jiang, Sham~M Kakade, and Wen Sun.
\newblock Reinforcement learning: Theory and algorithms.
\newblock \emph{CS Dept., UW Seattle, Seattle, WA, USA, Tech. Rep}, 32:\penalty0 96, 2019.

\bibitem[Agarwal et~al.(2025)Agarwal, Sikchi, Stone, and Zhang]{agarwal2025proto}
Siddhant Agarwal, Harshit Sikchi, Peter Stone, and Amy Zhang.
\newblock Proto successor measure: Representing the behavior space of an {RL} agent.
\newblock In \emph{Forty-second International Conference on Machine Learning}, 2025.
\newblock URL \url{https://openreview.net/forum?id=mUDnPzopZF}.

\bibitem[Assran et~al.(2023)Assran, Duval, Misra, Bojanowski, Vincent, Rabbat, LeCun, and Ballas]{assran2023self}
Mahmoud Assran, Quentin Duval, Ishan Misra, Piotr Bojanowski, Pascal Vincent, Michael Rabbat, Yann LeCun, and Nicolas Ballas.
\newblock Self-supervised learning from images with a joint-embedding predictive architecture.
\newblock In \emph{Proceedings of the IEEE/CVF Conference on Computer Vision and Pattern Recognition}, pages 15619--15629, 2023.

\bibitem[Bagatella et~al.(2025)Bagatella, Pirotta, Touati, Lazaric, and Tirinzoni]{bagatella2025td}
Marco Bagatella, Matteo Pirotta, Ahmed Touati, Alessandro Lazaric, and Andrea Tirinzoni.
\newblock Td-jepa: Latent-predictive representations for zero-shot reinforcement learning.
\newblock \emph{arXiv preprint arXiv:2510.00739}, 2025.

\bibitem[Banach(1922)]{banach1922operations}
Stefan Banach.
\newblock Sur les op{\'e}rations dans les ensembles abstraits et leur application aux {\'e}quations int{\'e}grales.
\newblock \emph{Fundamenta mathematicae}, 3\penalty0 (1):\penalty0 133--181, 1922.

\bibitem[Barreto et~al.(2017)Barreto, Dabney, Munos, Hunt, Schaul, van Hasselt, and Silver]{barreto2017successor}
Andr{\'e} Barreto, Will Dabney, R{\'e}mi Munos, Jonathan~J Hunt, Tom Schaul, Hado~P van Hasselt, and David Silver.
\newblock Successor features for transfer in reinforcement learning.
\newblock \emph{Advances in neural information processing systems}, 30, 2017.

\bibitem[Belghazi et~al.(2018)Belghazi, Baratin, Rajeshwar, Ozair, Bengio, Courville, and Hjelm]{belghazi2018mutual}
Mohamed~Ishmael Belghazi, Aristide Baratin, Sai Rajeshwar, Sherjil Ozair, Yoshua Bengio, Aaron Courville, and Devon Hjelm.
\newblock Mutual information neural estimation.
\newblock In \emph{International conference on machine learning}, pages 531--540. PMLR, 2018.

\bibitem[Berlinet and Thomas-Agnan(2011)]{berlinet2011reproducing}
Alain Berlinet and Christine Thomas-Agnan.
\newblock \emph{Reproducing kernel Hilbert spaces in probability and statistics}.
\newblock Springer Science \& Business Media, 2011.

\bibitem[Bhatt and Borkar(1996)]{bhatt1996occupation}
Abhay~G Bhatt and Vivek~S Borkar.
\newblock Occupation measures for controlled markov processes: Characterization and optimality.
\newblock \emph{The Annals of Probability}, pages 1531--1562, 1996.

\bibitem[Bjerhammar(1951)]{Bjerhammar1951ApplicationOC}
Arne Bjerhammar.
\newblock Application of calculus of matrices to method of least squares: with special reference to geodetic calculations.
\newblock 1951.
\newblock URL \url{https://api.semanticscholar.org/CorpusID:118134976}.

\bibitem[Blier et~al.(2021)Blier, Tallec, and Ollivier]{blier2021learning}
L{\'e}onard Blier, Corentin Tallec, and Yann Ollivier.
\newblock Learning successor states and goal-dependent values: A mathematical viewpoint.
\newblock \emph{arXiv preprint arXiv:2101.07123}, 2021.

\bibitem[Borsa et~al.(2019)Borsa, Barreto, Quan, Mankowitz, van Hasselt, Munos, Silver, and Schaul]{borsa2019universal}
Diana Borsa, Andr{\'e} Barreto, John Quan, Daniel Mankowitz, Hado van Hasselt, R{\'e}mi Munos, David Silver, and Tom Schaul.
\newblock Universal successor features approximators.
\newblock In \emph{International Conference on Learning Representations (ICLR)}, 2019.
\newblock URL \url{https://openreview.net/forum?id=S1VWjiRcKX}.

\bibitem[Bradbury et~al.(2018)Bradbury, Frostig, Hawkins, Johnson, Leary, Maclaurin, Necula, Paszke, Vander{P}las, Wanderman-{M}ilne, and Zhang]{jax2018github}
James Bradbury, Roy Frostig, Peter Hawkins, Matthew~James Johnson, Chris Leary, Dougal Maclaurin, George Necula, Adam Paszke, Jake Vander{P}las, Skye Wanderman-{M}ilne, and Qiao Zhang.
\newblock {JAX}: composable transformations of {P}ython+{N}um{P}y programs.
\newblock GitHub repository, 2018.
\newblock URL \url{http://github.com/jax-ml/jax}.
\newblock Version 0.3.13.

\bibitem[Brandfonbrener et~al.(2021)Brandfonbrener, Whitney, Ranganath, and Bruna]{brandfonbrener2021offline}
David Brandfonbrener, Will Whitney, Rajesh Ranganath, and Joan Bruna.
\newblock Offline rl without off-policy evaluation.
\newblock \emph{Advances in neural information processing systems}, 34:\penalty0 4933--4946, 2021.

\bibitem[Brown et~al.(2020)Brown, Mann, Ryder, Subbiah, Kaplan, Dhariwal, Neelakantan, Shyam, Sastry, Askell, et~al.]{brown2020language}
Tom Brown, Benjamin Mann, Nick Ryder, Melanie Subbiah, Jared~D Kaplan, Prafulla Dhariwal, Arvind Neelakantan, Pranav Shyam, Girish Sastry, Amanda Askell, et~al.
\newblock Language models are few-shot learners.
\newblock \emph{Advances in neural information processing systems}, 33:\penalty0 1877--1901, 2020.

\bibitem[Burda et~al.(2019)Burda, Edwards, Storkey, and Klimov]{burda2018exploration}
Yuri Burda, Harrison Edwards, Amos Storkey, and Oleg Klimov.
\newblock Exploration by random network distillation.
\newblock In \emph{International Conference on Learning Representations}, 2019.
\newblock URL \url{https://openreview.net/forum?id=H1lJJnR5Ym}.

\bibitem[Chen et~al.(2021)Chen, Lu, Rajeswaran, Lee, Grover, Laskin, Abbeel, Srinivas, and Mordatch]{chen2021decision}
Lili Chen, Kevin Lu, Aravind Rajeswaran, Kimin Lee, Aditya Grover, Michael Laskin, Pieter Abbeel, Aravind Srinivas, and Igor Mordatch.
\newblock Decision transformer: Reinforcement learning via sequence modeling.
\newblock In A.~Beygelzimer, Y.~Dauphin, P.~Liang, and J.~Wortman Vaughan, editors, \emph{Advances in Neural Information Processing Systems}, 2021.
\newblock URL \url{https://openreview.net/forum?id=a7APmM4B9d}.

\bibitem[Cheng and Chu(2004)]{cheng2004semiparametric}
Kuang~Fu Cheng and Chih-Kang Chu.
\newblock Semiparametric density estimation under a two-sample density ratio model.
\newblock \emph{Bernoulli}, 10\penalty0 (4):\penalty0 583--604, 2004.

\bibitem[Choi et~al.(2021)Choi, Sharma, Lee, Levine, and Gu]{choi2021variational}
Jongwook Choi, Archit Sharma, Honglak Lee, Sergey Levine, and Shixiang~Shane Gu.
\newblock Variational empowerment as representation learning for goal-conditioned reinforcement learning.
\newblock In \emph{International conference on machine learning}, pages 1953--1963. PMLR, 2021.

\bibitem[Dayan(1993)]{dayan1993improving}
Peter Dayan.
\newblock Improving generalization for temporal difference learning: The successor representation.
\newblock \emph{Neural computation}, 5\penalty0 (4):\penalty0 613--624, 1993.

\bibitem[Ding et~al.(2019)Ding, Florensa, Abbeel, and Phielipp]{ding2019}
Yiming Ding, Carlos Florensa, Pieter Abbeel, and Mariano Phielipp.
\newblock Goal-conditioned imitation learning.
\newblock In H.~Wallach, H.~Larochelle, A.~Beygelzimer, F.~d\textquotesingle Alch\'{e}-Buc, E.~Fox, and R.~Garnett, editors, \emph{Advances in Neural Information Processing Systems}, volume~32. Curran Associates, Inc., 2019.
\newblock URL \url{https://proceedings.neurips.cc/paper_files/paper/2019/file/c8d3a760ebab631565f8509d84b3b3f1-Paper.pdf}.

\bibitem[Ernst et~al.(2005)Ernst, Geurts, and Wehenkel]{ernst2005tree}
Damien Ernst, Pierre Geurts, and Louis Wehenkel.
\newblock Tree-based batch mode reinforcement learning.
\newblock \emph{Journal of Machine Learning Research}, 6, 2005.

\bibitem[Espeholt et~al.(2018)Espeholt, Soyer, Munos, Simonyan, Mnih, Ward, Doron, Firoiu, Harley, Dunning, et~al.]{espeholt2018impala}
Lasse Espeholt, Hubert Soyer, Remi Munos, Karen Simonyan, Vlad Mnih, Tom Ward, Yotam Doron, Vlad Firoiu, Tim Harley, Iain Dunning, et~al.
\newblock Impala: Scalable distributed deep-rl with importance weighted actor-learner architectures.
\newblock In \emph{International conference on machine learning}, pages 1407--1416. PMLR, 2018.

\bibitem[Eysenbach et~al.(2020)Eysenbach, Geng, Levine, and Salakhutdinov]{eysenbach2020rewriting}
Ben Eysenbach, Xinyang Geng, Sergey Levine, and Russ~R Salakhutdinov.
\newblock Rewriting history with inverse rl: Hindsight inference for policy improvement.
\newblock \emph{Advances in neural information processing systems}, 33:\penalty0 14783--14795, 2020.

\bibitem[Eysenbach et~al.(2018)Eysenbach, Gupta, Ibarz, and Levine]{eysenbach2018diversity}
Benjamin Eysenbach, Abhishek Gupta, Julian Ibarz, and Sergey Levine.
\newblock Diversity is all you need: Learning skills without a reward function.
\newblock \emph{arXiv preprint arXiv:1802.06070}, 2018.

\bibitem[Eysenbach et~al.(2022{\natexlab{a}})Eysenbach, Geist, Salakhutdinov, and Levine]{eysenbach2022a}
Benjamin Eysenbach, Matthieu Geist, Ruslan Salakhutdinov, and Sergey Levine.
\newblock A connection between one-step regularization and critic regularization in reinforcement learning.
\newblock In \emph{Deep Reinforcement Learning Workshop NeurIPS 2022}, 2022{\natexlab{a}}.
\newblock URL \url{https://openreview.net/forum?id=GXiWE8kDTcn}.

\bibitem[Eysenbach et~al.(2022{\natexlab{b}})Eysenbach, Udatha, Salakhutdinov, and Levine]{eysenbach2022imitating}
Benjamin Eysenbach, Soumith Udatha, Ruslan Salakhutdinov, and Sergey Levine.
\newblock Imitating past successes can be very suboptimal.
\newblock In Alice~H. Oh, Alekh Agarwal, Danielle Belgrave, and Kyunghyun Cho, editors, \emph{Advances in Neural Information Processing Systems}, 2022{\natexlab{b}}.
\newblock URL \url{https://openreview.net/forum?id=iqCO3jbPjYF}.

\bibitem[Eysenbach et~al.(2022{\natexlab{c}})Eysenbach, Zhang, Levine, and Salakhutdinov]{eysenbach2022contrastive}
Benjamin Eysenbach, Tianjun Zhang, Sergey Levine, and Russ~R Salakhutdinov.
\newblock Contrastive learning as goal-conditioned reinforcement learning.
\newblock \emph{Advances in Neural Information Processing Systems}, 35:\penalty0 35603--35620, 2022{\natexlab{c}}.

\bibitem[Frans et~al.(2024)Frans, Park, Abbeel, and Levine]{frans2024unsupervised}
Kevin Frans, Seohong Park, Pieter Abbeel, and Sergey Levine.
\newblock Unsupervised zero-shot reinforcement learning via functional reward encodings.
\newblock In \emph{International Conference on Machine Learning}, pages 13927--13942. PMLR, 2024.

\bibitem[Fujimoto and Gu(2021)]{fujimoto2021minimalist}
Scott Fujimoto and Shixiang~Shane Gu.
\newblock A minimalist approach to offline reinforcement learning.
\newblock \emph{Advances in neural information processing systems}, 34:\penalty0 20132--20145, 2021.

\bibitem[Fujimoto et~al.(2018)Fujimoto, Hoof, and Meger]{fujimoto2018addressing}
Scott Fujimoto, Herke Hoof, and David Meger.
\newblock Addressing function approximation error in actor-critic methods.
\newblock In \emph{International conference on machine learning}, pages 1587--1596. PMLR, 2018.

\bibitem[Fujimoto et~al.(2022)Fujimoto, Meger, Precup, Nachum, and Gu]{fujimoto2022should}
Scott Fujimoto, David Meger, Doina Precup, Ofir Nachum, and Shixiang~Shane Gu.
\newblock Why should i trust you, bellman? the bellman error is a poor replacement for value error.
\newblock \emph{arXiv preprint arXiv:2201.12417}, 2022.

\bibitem[Ghosh et~al.(2023)Ghosh, Bhateja, and Levine]{ghosh2023reinforcement}
Dibya Ghosh, Chethan~Anand Bhateja, and Sergey Levine.
\newblock Reinforcement learning from passive data via latent intentions.
\newblock In \emph{International Conference on Machine Learning}, pages 11321--11339. PMLR, 2023.

\bibitem[Gretton et~al.(2009)Gretton, Smola, Huang, Schmittfull, Borgwardt, Sch{\"o}lkopf, et~al.]{gretton2009covariate}
Arthur Gretton, Alex Smola, Jiayuan Huang, Marcel Schmittfull, Karsten Borgwardt, Bernhard Sch{\"o}lkopf, et~al.
\newblock Covariate shift by kernel mean matching.
\newblock \emph{Dataset shift in machine learning}, 3\penalty0 (4):\penalty0 5, 2009.

\bibitem[Grill et~al.(2020)Grill, Strub, Altch{\'e}, Tallec, Richemond, Buchatskaya, Doersch, Avila~Pires, Guo, Gheshlaghi~Azar, et~al.]{grill2020bootstrap}
Jean-Bastien Grill, Florian Strub, Florent Altch{\'e}, Corentin Tallec, Pierre Richemond, Elena Buchatskaya, Carl Doersch, Bernardo Avila~Pires, Zhaohan Guo, Mohammad Gheshlaghi~Azar, et~al.
\newblock Bootstrap your own latent-a new approach to self-supervised learning.
\newblock \emph{Advances in neural information processing systems}, 33:\penalty0 21271--21284, 2020.

\bibitem[Gulcehre et~al.(2020)Gulcehre, Wang, Novikov, Paine, Colmenarejo, Zolna, Agarwal, Merel, Mankowitz, Paduraru, Dulac-Arnold, Li, Norouzi, Hoffman, Nachum, Tucker, Heess, and deFreitas]{gulcehre2020rl}
Caglar Gulcehre, Ziyu Wang, Alexander Novikov, Tom~Le Paine, Sergio~Gómez Colmenarejo, Konrad Zolna, Rishabh Agarwal, Josh Merel, Daniel Mankowitz, Cosmin Paduraru, Gabriel Dulac-Arnold, Jerry Li, Mohammad Norouzi, Matt Hoffman, Ofir Nachum, George Tucker, Nicolas Heess, and Nando deFreitas.
\newblock Rl unplugged: Benchmarks for offline reinforcement learning, 2020.

\bibitem[Gutmann and Hyv{\"a}rinen(2010)]{gutmann2010noise}
Michael Gutmann and Aapo Hyv{\"a}rinen.
\newblock Noise-contrastive estimation: A new estimation principle for unnormalized statistical models.
\newblock In \emph{Proceedings of the thirteenth international conference on artificial intelligence and statistics}, pages 297--304. JMLR Workshop and Conference Proceedings, 2010.

\bibitem[Haarnoja et~al.(2018)Haarnoja, Zhou, Abbeel, and Levine]{haarnoja2018soft}
Tuomas Haarnoja, Aurick Zhou, Pieter Abbeel, and Sergey Levine.
\newblock Soft actor-critic: Off-policy maximum entropy deep reinforcement learning with a stochastic actor.
\newblock In \emph{International conference on machine learning}, pages 1861--1870. PMLR, 2018.

\bibitem[Hendrycks and Gimpel(2023)]{hendrycks2023gaussian}
Dan Hendrycks and Kevin Gimpel.
\newblock Gaussian error linear units (gelus), 2023.
\newblock URL \url{https://arxiv.org/abs/1606.08415}.

\bibitem[Hornik et~al.(1989)Hornik, Stinchcombe, and White]{hornik1989multilayer}
Kurt Hornik, Maxwell Stinchcombe, and Halbert White.
\newblock Multilayer feedforward networks are universal approximators.
\newblock \emph{Neural networks}, 2\penalty0 (5):\penalty0 359--366, 1989.

\bibitem[Hu et~al.(2023)Hu, Yang, Ye, Mai, and Zhang]{hu2023unsupervised}
Hao Hu, Yiqin Yang, Jianing Ye, Ziqing Mai, and Chongjie Zhang.
\newblock Unsupervised behavior extraction via random intent priors.
\newblock \emph{Advances in Neural Information Processing Systems}, 36:\penalty0 51491--51514, 2023.

\bibitem[Janner et~al.(2020)Janner, Mordatch, and Levine]{janner2020gamma}
Michael Janner, Igor Mordatch, and Sergey Levine.
\newblock Gamma-models: Generative temporal difference learning for infinite-horizon prediction.
\newblock \emph{Advances in neural information processing systems}, 33:\penalty0 1724--1735, 2020.

\bibitem[Kanamori et~al.(2008)Kanamori, Hido, and Sugiyama]{kanamori2008efficient}
Takafumi Kanamori, Shohei Hido, and Masashi Sugiyama.
\newblock Efficient direct density ratio estimation for non-stationarity adaptation and outlier detection.
\newblock \emph{Advances in neural information processing systems}, 21, 2008.

\bibitem[Kanamori et~al.(2009)Kanamori, Hido, and Sugiyama]{kanamori2009least}
Takafumi Kanamori, Shohei Hido, and Masashi Sugiyama.
\newblock A least-squares approach to direct importance estimation.
\newblock \emph{The Journal of Machine Learning Research}, 10:\penalty0 1391--1445, 2009.

\bibitem[Kato(2025)]{kato2025riesz}
Masahiro Kato.
\newblock Riesz regression as direct density ratio estimation.
\newblock \emph{arXiv preprint arXiv:2511.04568}, 2025.

\bibitem[Kim et~al.(2024)Kim, Park, and Levine]{kim2024unsupervised}
Junsu Kim, Seohong Park, and Sergey Levine.
\newblock Unsupervised-to-online reinforcement learning.
\newblock \emph{arXiv preprint arXiv:2408.14785}, 2024.

\bibitem[Kingma and Ba(2015)]{kingma2015adam}
Diederik~P Kingma and Jimmy Ba.
\newblock Adam: A method for stochastic optimization.
\newblock In \emph{International Conference on Learning Representations (ICLR)}, 2015.

\bibitem[Klyubin et~al.(2005)Klyubin, Polani, and Nehaniv]{klyubin2005empowerment}
Alexander~S Klyubin, Daniel Polani, and Chrystopher~L Nehaniv.
\newblock Empowerment: A universal agent-centric measure of control.
\newblock In \emph{2005 ieee congress on evolutionary computation}, volume~1, pages 128--135. IEEE, 2005.

\bibitem[Kostrikov et~al.(2021)Kostrikov, Nair, and Levine]{kostrikov2021offline}
Ilya Kostrikov, Ashvin Nair, and Sergey Levine.
\newblock Offline reinforcement learning with implicit q-learning.
\newblock \emph{arXiv preprint arXiv:2110.06169}, 2021.

\bibitem[Kulkarni et~al.(2016)Kulkarni, Saeedi, Gautam, and Gershman]{kulkarni2016deep}
Tejas~D Kulkarni, Ardavan Saeedi, Simanta Gautam, and Samuel~J Gershman.
\newblock Deep successor reinforcement learning.
\newblock \emph{arXiv preprint arXiv:1606.02396}, 2016.

\bibitem[Kumar et~al.(2019)Kumar, Peng, and Levine]{kumar2019rewardconditionedpolicies}
Aviral Kumar, Xue~Bin Peng, and Sergey Levine.
\newblock Reward-conditioned policies, 2019.
\newblock URL \url{https://arxiv.org/abs/1912.13465}.

\bibitem[Lange et~al.(2012)Lange, Gabel, and Riedmiller]{lange2012batch}
Sascha Lange, Thomas Gabel, and Martin Riedmiller.
\newblock Batch reinforcement learning.
\newblock In \emph{Reinforcement learning: State-of-the-art}, pages 45--73. Springer, 2012.

\bibitem[Lawson et~al.(2025)Lawson, Hugessen, Cloutier, Berseth, and Khetarpal]{lawson2025self}
Daniel Lawson, Adriana Hugessen, Charlotte Cloutier, Glen Berseth, and Khimya Khetarpal.
\newblock Self-predictive representations for combinatorial generalization in behavioral cloning, 2025.
\newblock URL \url{https://arxiv.org/abs/2506.10137}.

\bibitem[Lefschetz(1926)]{lefschetz1926intersections}
Solomon Lefschetz.
\newblock Intersections and transformations of complexes and manifolds.
\newblock \emph{Transactions of the American Mathematical Society}, 28\penalty0 (1):\penalty0 1--49, 1926.

\bibitem[Li et~al.(2020)Li, Pinto, and Abbeel]{li2020generalized}
Alexander Li, Lerrel Pinto, and Pieter Abbeel.
\newblock Generalized hindsight for reinforcement learning.
\newblock \emph{Advances in Neural Information Processing Systems}, 33, 2020.

\bibitem[Li et~al.(2025)Li, Luo, Zhang, Dai, Kanervisto, Tirinzoni, Weng, Kitani, Guzek, Touati, et~al.]{li2025bfm}
Yitang Li, Zhengyi Luo, Tonghe Zhang, Cunxi Dai, Anssi Kanervisto, Andrea Tirinzoni, Haoyang Weng, Kris Kitani, Mateusz Guzek, Ahmed Touati, et~al.
\newblock Bfm-zero: A promptable behavioral foundation model for humanoid control using unsupervised reinforcement learning.
\newblock \emph{arXiv preprint arXiv:2511.04131}, 2025.

\bibitem[Lillicrap et~al.(2015)Lillicrap, Hunt, Pritzel, Heess, Erez, Tassa, Silver, and Wierstra]{lillicrap2015continuous}
Timothy~P Lillicrap, Jonathan~J Hunt, Alexander Pritzel, Nicolas Heess, Tom Erez, Yuval Tassa, David Silver, and Daan Wierstra.
\newblock Continuous control with deep reinforcement learning.
\newblock \emph{arXiv preprint arXiv:1509.02971}, 2015.

\bibitem[Lopez-Paz and Oquab(2016)]{lopez2016revisiting}
David Lopez-Paz and Maxime Oquab.
\newblock Revisiting classifier two-sample tests.
\newblock \emph{arXiv preprint arXiv:1610.06545}, 2016.

\bibitem[Loshchilov and Hutter(2019)]{loshchilov2018decoupled}
Ilya Loshchilov and Frank Hutter.
\newblock Decoupled weight decay regularization.
\newblock In \emph{International Conference on Learning Representations}, 2019.
\newblock URL \url{https://openreview.net/forum?id=Bkg6RiCqY7}.

\bibitem[Lu et~al.(2019)Lu, Jin, and Karniadakis]{lu2019deeponet}
Lu~Lu, Pengzhan Jin, and George~Em Karniadakis.
\newblock Deeponet: Learning nonlinear operators for identifying differential equations based on the universal approximation theorem of operators.
\newblock \emph{arXiv preprint arXiv:1910.03193}, 2019.

\bibitem[Lyapunov(1992)]{lyapunov1992general}
Aleksandr~Mikhailovich Lyapunov.
\newblock The general problem of the stability of motion.
\newblock \emph{International journal of control}, 55\penalty0 (3):\penalty0 531--534, 1992.

\bibitem[Ma et~al.(2023)Ma, Sodhani, Jayaraman, Bastani, Kumar, and Zhang]{ma2023vip}
Yecheng~Jason Ma, Shagun Sodhani, Dinesh Jayaraman, Osbert Bastani, Vikash Kumar, and Amy Zhang.
\newblock {VIP}: Towards universal visual reward and representation via value-implicit pre-training.
\newblock In \emph{The Eleventh International Conference on Learning Representations}, 2023.
\newblock URL \url{https://openreview.net/forum?id=YJ7o2wetJ2}.

\bibitem[Ma and Collins(2018)]{ma2018noise}
Zhuang Ma and Michael Collins.
\newblock Noise contrastive estimation and negative sampling for conditional models: Consistency and statistical efficiency.
\newblock \emph{arXiv preprint arXiv:1809.01812}, 2018.

\bibitem[Mazoure et~al.(2023)Mazoure, Eysenbach, Nachum, and Tompson]{mazoure2023contrastive}
Bogdan Mazoure, Benjamin Eysenbach, Ofir Nachum, and Jonathan Tompson.
\newblock Contrastive value learning: Implicit models for simple offline {RL}.
\newblock In \emph{7th Annual Conference on Robot Learning}, 2023.
\newblock URL \url{https://openreview.net/forum?id=oqOfLP6bJy}.

\bibitem[Mnih et~al.(2015)Mnih, Kavukcuoglu, Silver, Rusu, Veness, Bellemare, Graves, Riedmiller, Fidjeland, Ostrovski, et~al.]{mnih2015human}
Volodymyr Mnih, Koray Kavukcuoglu, David Silver, Andrei~A Rusu, Joel Veness, Marc~G Bellemare, Alex Graves, Martin Riedmiller, Andreas~K Fidjeland, Georg Ostrovski, et~al.
\newblock Human-level control through deep reinforcement learning.
\newblock \emph{nature}, 518\penalty0 (7540):\penalty0 529--533, 2015.

\bibitem[Mohamed and Jimenez~Rezende(2015)]{mohamed2015variational}
Shakir Mohamed and Danilo Jimenez~Rezende.
\newblock Variational information maximisation for intrinsically motivated reinforcement learning.
\newblock \emph{Advances in neural information processing systems}, 28, 2015.

\bibitem[Momennejad et~al.(2017)Momennejad, Russek, Cheong, Botvinick, Daw, and Gershman]{Momennejad2017successor}
I~Momennejad, EM~Russek, JH~Cheong, MM~Botvinick, ND~Daw, and SJ~Gershman.
\newblock The successor representation in human reinforcement learning.
\newblock \emph{Nature Human Behaviour}, 1\penalty0 (9):\penalty0 680--692, 2017.
\newblock \doi{10.1038/s41562-017-0180-8}.

\bibitem[Moore(1920)]{moore1920reciprocal}
Eliakim~H Moore.
\newblock On the reciprocal of the general algebraic matrix.
\newblock \emph{Bulletin of the american mathematical society}, 26:\penalty0 294--295, 1920.

\bibitem[Myers et~al.(2024)Myers, Zheng, Dragan, Levine, and Eysenbach]{myers2024learning}
Vivek Myers, Chongyi Zheng, Anca Dragan, Sergey Levine, and Benjamin Eysenbach.
\newblock Learning temporal distances: Contrastive successor features can provide a metric structure for decision-making.
\newblock In \emph{Forty-first International Conference on Machine Learning}, 2024.
\newblock URL \url{https://openreview.net/forum?id=xQiYCmDrjp}.

\bibitem[Nachum et~al.(2019)Nachum, Chow, Dai, and Li]{nachum2019dualdice}
Ofir Nachum, Yinlam Chow, Bo~Dai, and Lihong Li.
\newblock Dualdice: Behavior-agnostic estimation of discounted stationary distribution corrections.
\newblock \emph{Advances in neural information processing systems}, 32, 2019.

\bibitem[Nair and Hinton(2010)]{nair2010rectified}
Vinod Nair and Geoffrey~E Hinton.
\newblock Rectified linear units improve restricted boltzmann machines.
\newblock In \emph{Proceedings of the 27th International Conference on Machine Learning (ICML)}, pages 807--814, 2010.

\bibitem[Nakamoto et~al.(2023)Nakamoto, Zhai, Singh, Mark, Ma, Finn, Kumar, and Levine]{nakamoto2023calql}
Mitsuhiko Nakamoto, Yuexiang Zhai, Anikait Singh, Max~Sobol Mark, Yi~Ma, Chelsea Finn, Aviral Kumar, and Sergey Levine.
\newblock Cal-{QL}: Calibrated offline {RL} pre-training for efficient online fine-tuning.
\newblock In \emph{Thirty-seventh Conference on Neural Information Processing Systems}, 2023.
\newblock URL \url{https://openreview.net/forum?id=GcEIvidYSw}.

\bibitem[Ng et~al.(1999)Ng, Harada, and Russell]{ng1999policy}
Andrew~Y Ng, Daishi Harada, and Stuart Russell.
\newblock Policy invariance under reward transformations: Theory and application to reward shaping.
\newblock In \emph{Icml}, volume~99, pages 278--287. Citeseer, 1999.

\bibitem[Nguyen et~al.(2010)Nguyen, Wainwright, and Jordan]{nguyen2010estimating}
XuanLong Nguyen, Martin~J Wainwright, and Michael~I Jordan.
\newblock Estimating divergence functionals and the likelihood ratio by convex risk minimization.
\newblock \emph{IEEE Transactions on Information Theory}, 56\penalty0 (11):\penalty0 5847--5861, 2010.

\bibitem[Ni et~al.(2024)Ni, Eysenbach, Seyedsalehi, Ma, Gehring, Mahajan, and Bacon]{ni2024bridging}
Tianwei Ni, Benjamin Eysenbach, Erfan Seyedsalehi, Michel Ma, Clement Gehring, Aditya Mahajan, and Pierre-Luc Bacon.
\newblock Bridging state and history representations: Understanding self-predictive rl.
\newblock \emph{arXiv preprint arXiv:2401.08898}, 2024.

\bibitem[Oord et~al.(2018)Oord, Li, and Vinyals]{oord2018representation}
Aaron van~den Oord, Yazhe Li, and Oriol Vinyals.
\newblock Representation learning with contrastive predictive coding.
\newblock \emph{arXiv preprint arXiv:1807.03748}, 2018.

\bibitem[Ouyang et~al.(2022)Ouyang, Wu, Jiang, Almeida, Wainwright, Mishkin, Zhang, Agarwal, Slama, Ray, et~al.]{ouyang2022training}
Long Ouyang, Jeffrey Wu, Xu~Jiang, Diogo Almeida, Carroll Wainwright, Pamela Mishkin, Chong Zhang, Sandhini Agarwal, Katarina Slama, Alex Ray, et~al.
\newblock Training language models to follow instructions with human feedback.
\newblock \emph{Advances in neural information processing systems}, 35:\penalty0 27730--27744, 2022.

\bibitem[Park et~al.(2023)Park, Rybkin, and Levine]{park2023metra}
Seohong Park, Oleh Rybkin, and Sergey Levine.
\newblock Metra: Scalable unsupervised rl with metric-aware abstraction.
\newblock \emph{arXiv preprint arXiv:2310.08887}, 2023.

\bibitem[Park et~al.(2024{\natexlab{a}})Park, Frans, Eysenbach, and Levine]{park2024ogbench}
Seohong Park, Kevin Frans, Benjamin Eysenbach, and Sergey Levine.
\newblock Ogbench: Benchmarking offline goal-conditioned rl.
\newblock \emph{arXiv preprint arXiv:2410.20092}, 2024{\natexlab{a}}.

\bibitem[Park et~al.(2024{\natexlab{b}})Park, Kreiman, and Levine]{park2024foundation}
Seohong Park, Tobias Kreiman, and Sergey Levine.
\newblock Foundation policies with hilbert representations.
\newblock In \emph{International Conference on Machine Learning}, pages 39737--39761. PMLR, 2024{\natexlab{b}}.

\bibitem[Park et~al.(2025{\natexlab{a}})Park, Frans, Mann, Eysenbach, Kumar, and Levine]{park2025horizon}
Seohong Park, Kevin Frans, Deepinder Mann, Benjamin Eysenbach, Aviral Kumar, and Sergey Levine.
\newblock Horizon reduction makes rl scalable.
\newblock \emph{arXiv preprint arXiv:2506.04168}, 2025{\natexlab{a}}.

\bibitem[Park et~al.(2025{\natexlab{b}})Park, Li, and Levine]{park2025flow}
Seohong Park, Qiyang Li, and Sergey Levine.
\newblock Flow q-learning.
\newblock \emph{arXiv preprint arXiv:2502.02538}, 2025{\natexlab{b}}.

\bibitem[Park et~al.(2025{\natexlab{c}})Park, Oberai, Atreya, and Levine]{park2025transitive}
Seohong Park, Aditya Oberai, Pranav Atreya, and Sergey Levine.
\newblock Transitive rl: Value learning via divide and conquer.
\newblock \emph{arXiv preprint arXiv:2510.22512}, 2025{\natexlab{c}}.

\bibitem[Peters and Schaal(2007)]{peters2007reinforcement}
Jan Peters and Stefan Schaal.
\newblock Reinforcement learning by reward-weighted regression for operational space control.
\newblock In \emph{Proceedings of the 24th international conference on Machine learning}, pages 745--750, 2007.

\bibitem[Peters et~al.(2010)Peters, M\"{u}lling, and Alt\"{u}n]{peters2010reps}
Jan Peters, Katharina M\"{u}lling, and Yasemin Alt\"{u}n.
\newblock Relative entropy policy search.
\newblock In \emph{Proceedings of the Twenty-Fourth AAAI Conference on Artificial Intelligence}, AAAI'10, page 1607–1612. AAAI Press, 2010.

\bibitem[Poole et~al.(2019)Poole, Ozair, Van Den~Oord, Alemi, and Tucker]{poole2019variational}
Ben Poole, Sherjil Ozair, Aaron Van Den~Oord, Alex Alemi, and George Tucker.
\newblock On variational bounds of mutual information.
\newblock In \emph{International Conference on Machine Learning}, pages 5171--5180. PMLR, 2019.

\bibitem[Qin(1998)]{qin1998inferences}
Jing Qin.
\newblock Inferences for case-control and semiparametric two-sample density ratio models.
\newblock \emph{Biometrika}, 85\penalty0 (3):\penalty0 619--630, 1998.

\bibitem[Radford et~al.(2021)Radford, Kim, Hallacy, Ramesh, Goh, Agarwal, Sastry, Askell, Mishkin, Clark, et~al.]{radford2021learning}
Alec Radford, Jong~Wook Kim, Chris Hallacy, Aditya Ramesh, Gabriel Goh, Sandhini Agarwal, Girish Sastry, Amanda Askell, Pamela Mishkin, Jack Clark, et~al.
\newblock Learning transferable visual models from natural language supervision.
\newblock In \emph{International conference on machine learning}, pages 8748--8763. PmLR, 2021.

\bibitem[Riedmiller(2005)]{riedmiller2005neural}
Martin Riedmiller.
\newblock Neural fitted q iteration--first experiences with a data efficient neural reinforcement learning method.
\newblock In \emph{European conference on machine learning}, pages 317--328. Springer, 2005.

\bibitem[Russell et~al.(1995)Russell, Norvig, and Intelligence]{russell1995modern}
Stuart Russell, Peter Norvig, and Artificial Intelligence.
\newblock A modern approach.
\newblock \emph{Artificial Intelligence. Prentice-Hall, Egnlewood Cliffs}, 25\penalty0 (27):\penalty0 79--80, 1995.

\bibitem[Savinov et~al.(2018)Savinov, Dosovitskiy, and Koltun]{savinov2018semi}
Nikolay Savinov, Alexey Dosovitskiy, and Vladlen Koltun.
\newblock Semi-parametric topological memory for navigation.
\newblock \emph{arXiv preprint arXiv:1803.00653}, 2018.

\bibitem[Schaul et~al.(2015)Schaul, Horgan, Gregor, and Silver]{schaul2015universal}
Tom Schaul, Daniel Horgan, Karol Gregor, and David Silver.
\newblock Universal value function approximators.
\newblock In \emph{International conference on machine learning}, pages 1312--1320. PMLR, 2015.

\bibitem[Shah et~al.(2025)Shah, Yang, Yang, Zheng, and Eysenbach]{shah2025structured}
Devan Shah, Owen Yang, Daniel Yang, Chongyi Zheng, and Benjamin Eysenbach.
\newblock Structured response diversity with mutual information.
\newblock In \emph{Workshop on Scaling Environments for Agents}, 2025.
\newblock URL \url{https://openreview.net/forum?id=xL7Bt4jS2U}.

\bibitem[Sikchi et~al.(2025)Sikchi, Tirinzoni, Touati, Xu, Kanervisto, Niekum, Zhang, Lazaric, and Pirotta]{sikchi2025fast}
Harshit Sikchi, Andrea Tirinzoni, Ahmed Touati, Yingchen Xu, Anssi Kanervisto, Scott Niekum, Amy Zhang, Alessandro Lazaric, and Matteo Pirotta.
\newblock Fast adaptation with behavioral foundation models.
\newblock In \emph{Reinforcement Learning Conference}, 2025.
\newblock URL \url{https://openreview.net/forum?id=soeW8RGo1N}.

\bibitem[Sugiyama et~al.(2008)Sugiyama, Suzuki, Nakajima, Kashima, Von~B{\"u}nau, and Kawanabe]{sugiyama2008direct}
Masashi Sugiyama, Taiji Suzuki, Shinichi Nakajima, Hisashi Kashima, Paul Von~B{\"u}nau, and Motoaki Kawanabe.
\newblock Direct importance estimation for covariate shift adaptation.
\newblock \emph{Annals of the Institute of Statistical Mathematics}, 60\penalty0 (4):\penalty0 699--746, 2008.

\bibitem[Sutton et~al.(1998)Sutton, Barto, et~al.]{sutton1998reinforcement}
Richard~S Sutton, Andrew~G Barto, et~al.
\newblock \emph{Reinforcement learning: An introduction}, volume~1.
\newblock MIT press Cambridge, 1998.

\bibitem[Tarasov et~al.(2023)Tarasov, Nikulin, Akimov, Kurenkov, and Kolesnikov]{tarasov2023corl}
Denis Tarasov, Alexander Nikulin, Dmitry Akimov, Vladislav Kurenkov, and Sergey Kolesnikov.
\newblock Corl: Research-oriented deep offline reinforcement learning library.
\newblock \emph{Advances in Neural Information Processing Systems}, 36:\penalty0 30997--31020, 2023.

\bibitem[Tassa et~al.(2018)Tassa, Doron, Muldal, Erez, Li, Casas, Budden, Abdolmaleki, Merel, Lefrancq, et~al.]{tassa2018deepmind}
Yuval Tassa, Yotam Doron, Alistair Muldal, Tom Erez, Yazhe Li, Diego de~Las Casas, David Budden, Abbas Abdolmaleki, Josh Merel, Andrew Lefrancq, et~al.
\newblock Deepmind control suite.
\newblock \emph{arXiv preprint arXiv:1801.00690}, 2018.

\bibitem[Team et~al.(2023)Team, Anil, Borgeaud, Alayrac, Yu, Soricut, Schalkwyk, Dai, Hauth, Millican, et~al.]{team2023gemini}
Gemini Team, Rohan Anil, Sebastian Borgeaud, Jean-Baptiste Alayrac, Jiahui Yu, Radu Soricut, Johan Schalkwyk, Andrew~M Dai, Anja Hauth, Katie Millican, et~al.
\newblock Gemini: a family of highly capable multimodal models.
\newblock \emph{arXiv preprint arXiv:2312.11805}, 2023.

\bibitem[Tirinzoni et~al.(2025)Tirinzoni, Touati, Farebrother, Guzek, Kanervisto, Xu, Lazaric, and Pirotta]{tirinzoni2025zeroshot}
Andrea Tirinzoni, Ahmed Touati, Jesse Farebrother, Mateusz Guzek, Anssi Kanervisto, Yingchen Xu, Alessandro Lazaric, and Matteo Pirotta.
\newblock Zero-shot whole-body humanoid control via behavioral foundation models.
\newblock In \emph{The Thirteenth International Conference on Learning Representations}, 2025.
\newblock URL \url{https://openreview.net/forum?id=9sOR0nYLtz}.

\bibitem[Todorov et~al.(2012)Todorov, Erez, and Tassa]{todorov2012mujoco}
Emanuel Todorov, Tom Erez, and Yuval Tassa.
\newblock Mujoco: A physics engine for model-based control.
\newblock In \emph{2012 IEEE/RSJ international conference on intelligent robots and systems}, pages 5026--5033. IEEE, 2012.

\bibitem[Touati and Ollivier(2021)]{touati2021learning}
Ahmed Touati and Yann Ollivier.
\newblock Learning one representation to optimize all rewards.
\newblock \emph{Advances in Neural Information Processing Systems}, 34:\penalty0 13--23, 2021.

\bibitem[Touati et~al.(2022)Touati, Rapin, and Ollivier]{touati2022does}
Ahmed Touati, J{\'e}r{\'e}my Rapin, and Yann Ollivier.
\newblock Does zero-shot reinforcement learning exist?
\newblock In \emph{The Eleventh International Conference on Learning Representations}, 2022.

\bibitem[Wang et~al.(2018)Wang, Xiong, Han, Liu, Zhang, et~al.]{wang2018exponentially}
Qing Wang, Jiechao Xiong, Lei Han, Han Liu, Tong Zhang, et~al.
\newblock Exponentially weighted imitation learning for batched historical data.
\newblock \emph{Advances in Neural Information Processing Systems}, 31, 2018.

\bibitem[Wantlin et~al.(2025)Wantlin, Zheng, and Eysenbach]{wantlin2025consistent}
Kathryn Wantlin, Chongyi Zheng, and Benjamin Eysenbach.
\newblock Consistent zero-shot imitation with contrastive goal inference.
\newblock \emph{arXiv preprint arXiv:2510.17059}, 2025.

\bibitem[Wu et~al.(2018)Wu, Tucker, and Nachum]{wu2018laplacian}
Yifan Wu, George Tucker, and Ofir Nachum.
\newblock The laplacian in rl: Learning representations with efficient approximations.
\newblock \emph{arXiv preprint arXiv:1810.04586}, 2018.

\bibitem[Xie et~al.(2025)Xie, Daw, and Eysenbach]{xie2025lowrank}
Eva~Yi Xie, Nathaniel~D. Daw, and Benjamin Eysenbach.
\newblock Low-rank successor representations capture human-like generalization.
\newblock In \emph{UniReps: 3rd Edition of the Workshop on Unifying Representations in Neural Models}, 2025.
\newblock URL \url{https://openreview.net/forum?id=nJJtcmDVvp}.

\bibitem[Yarats et~al.(2022)Yarats, Brandfonbrener, Liu, Laskin, Abbeel, Lazaric, and Pinto]{yarats2022don}
Denis Yarats, David Brandfonbrener, Hao Liu, Michael Laskin, Pieter Abbeel, Alessandro Lazaric, and Lerrel Pinto.
\newblock Don't change the algorithm, change the data: Exploratory data for offline reinforcement learning.
\newblock \emph{arXiv preprint arXiv:2201.13425}, 2022.

\bibitem[Zhang et~al.(2021)Zhang, McAllister, Calandra, Gal, and Levine]{zhang2021learning}
Amy Zhang, Rowan~Thomas McAllister, Roberto Calandra, Yarin Gal, and Sergey Levine.
\newblock Learning invariant representations for reinforcement learning without reconstruction.
\newblock In \emph{International Conference on Learning Representations}, 2021.
\newblock URL \url{https://openreview.net/forum?id=-2FCwDKRREu}.

\bibitem[Zhang et~al.(2017)Zhang, Springenberg, Boedecker, and Burgard]{zhang2017deep}
Jingwei Zhang, Jost~Tobias Springenberg, Joschka Boedecker, and Wolfram Burgard.
\newblock Deep reinforcement learning with successor features for navigation across similar environments.
\newblock In \emph{2017 IEEE/RSJ International Conference on Intelligent Robots and Systems (IROS)}, pages 2371--2378. IEEE, 2017.

\bibitem[Zheng et~al.(2023)Zheng, Eysenbach, Walke, Yin, Fang, Salakhutdinov, and Levine]{zheng2023stabilizing}
Chongyi Zheng, Benjamin Eysenbach, Homer Walke, Patrick Yin, Kuan Fang, Ruslan Salakhutdinov, and Sergey Levine.
\newblock Stabilizing contrastive rl: Techniques for robotic goal reaching from offline data.
\newblock \emph{arXiv preprint arXiv:2306.03346}, 2023.

\bibitem[Zheng et~al.(2024{\natexlab{a}})Zheng, Salakhutdinov, and Eysenbach]{zheng2024contrastive}
Chongyi Zheng, Ruslan Salakhutdinov, and Benjamin Eysenbach.
\newblock Contrastive difference predictive coding.
\newblock \emph{The Twelfth International Conference on Learning Representations}, 2024{\natexlab{a}}.
\newblock URL \url{https://openreview.net/forum?id=0akLDTFR9x}.

\bibitem[Zheng et~al.(2024{\natexlab{b}})Zheng, Tuyls, Peng, and Eysenbach]{zheng2024can}
Chongyi Zheng, Jens Tuyls, Joanne Peng, and Benjamin Eysenbach.
\newblock Can a misl fly? analysis and ingredients for mutual information skill learning.
\newblock \emph{arXiv preprint arXiv:2412.08021}, 2024{\natexlab{b}}.

\bibitem[Zheng et~al.(2025)Zheng, Park, Levine, and Eysenbach]{zheng2025intention}
Chongyi Zheng, Seohong Park, Sergey Levine, and Benjamin Eysenbach.
\newblock Intention-conditioned flow occupancy models.
\newblock \emph{arXiv preprint arXiv:2506.08902}, 2025.

\bibitem[Zhou et~al.(2025)Zhou, Peng, Li, Levine, and Kumar]{zhou2025efficient}
Zhiyuan Zhou, Andy Peng, Qiyang Li, Sergey Levine, and Aviral Kumar.
\newblock Efficient online reinforcement learning fine-tuning need not retain offline data.
\newblock In \emph{The Thirteenth International Conference on Learning Representations}, 2025.
\newblock URL \url{https://openreview.net/forum?id=HN0CYZbAPw}.

\end{thebibliography}
\end{document}